\LetLtxMacro\amsproof\proof
\LetLtxMacro\amsendproof\endproof
  \LetLtxMacro\proof\amsproof
  \LetLtxMacro\endproof\amsendproof
\newtheorem{theorem}{Theorem}
\newtheorem{definition}[theorem]{Definition}
\newtheorem{lemma}[theorem]{Lemma}
\newtheorem{claim}[theorem]{Claim}
\newtheorem{corollary}[theorem]{Corollary}
\newtheorem{example}{Example}
\newtheorem*{theorem*}{Theorem}
\newtheorem*{remark*}{Remark}
\xpatchcmd{\proof}{\itshape}{\normalfont\proofnamefont}{}{}
\let\S\undefined
\let\H\undefined
\newcommand{\E}{\mathbbm{E}}
\newcommand{\N}{\mathbbm{N}}
\newcommand{\A}{\mathcal{A}}
\newcommand{\X}{\mathcal{X}}
\newcommand{\Y}{\mathcal{Y}}
\newcommand{\Z}{\mathbbm{Z}}
\newcommand{\S}{\mathcal{S}}
\newcommand{\Q}{\mathcal{Q}}
\newcommand{\M}{\mathcal{M}}
\newcommand{\H}{\mathcal{H}}
\newcommand{\T}{\mathcal{T}}
\newcommand{\1}{\mathbbm{1}}
\newcommand{\PP}{\mathcal{P}}
\newcommand{\vis}{\rm vis}
\newcommand{\prob}{\mathsf{P}}
\newcommand{\cc}{{\rm c}}
\newcommand{\y}{\mathbf{y}}
\newcommand{\x}{\mathbf{x}}
\newcommand{\uu}{\mathbf{u}}
\newcommand{\h}{\mathbf{h}}
\newcommand{\dd}{\mathop{}\!\mathrm{d}}
\DeclareMathOperator*{\argmin}{arg\,min}
\providecommand{\abs}[1]{\lvert#1\rvert}
\let\qedsymbol\undefined
\newcommand{\qedsymbol}{$\blacksquare$}
\newenvironment{proofsketchof}[1]{%
\vspace{0.1cm}
  \par\noindent\textit{Proof sketch of #1.}\quad
}{%
  \hfill\qedsymbol\par
  \vspace{0.1cm}
}
\newenvironment{proofsketch}{%
\vspace{0.1cm}
  \par\noindent\textit{Proof sketch.}\quad
}{%
  \hfill\qedsymbol\par
  \vspace{0.1cm}
}
\title{Partial Feedback Online Learning}
\author{Shihao Shao$^{1}$, Cong Fang$^{2}$, Zhouchen Lin$^{2}$, and Dacheng Tao$^{1}$}
\affil{$^{1}$College of Computing and Data Science, Nanyang Technological University}
\affil{$^{2}$School of Intelligence Science and Technology, Peking University}
\affil{\texttt{\{shihao.shao, dacheng.tao\}@ntu.edu.sg}}
\affil{\texttt{\{fangcong, ZLIN\}@pku.edu.cn}}
\date{}
\begin{document}

\maketitle

\begin{abstract}
We study a new learning protocol, termed \emph{partial-feedback online learning}, where each instance admits a set of acceptable labels, but the learner observes only one acceptable label per round. We highlight that, while classical version space is widely used for online learnability, it does not directly extend to this setting. We address this obstacle by introducing a \emph{collection version space}, which maintains sets of hypotheses rather than individual hypotheses. Using this tool, we obtain a tight characterization of learnability in the set-realizable regime. In particular, we define the Partial-Feedback Littlestone dimension (PFLdim) and the Partial-Feedback Measure Shattering dimension (PMSdim), and show that they tightly characterize the minimax regret for deterministic and randomized learners, respectively. We further identify a nested inclusion condition under which deterministic and randomized learnability coincide, resolving an open question of \cite{onlinelearningsetvaluefeedback}. Finally, given a hypothesis space $\H$, we show that beyond set realizability, the minimax regret can be linear even when $|\H|=2$, highlighting a barrier beyond set realizability.
\end{abstract}

\section{Introduction}

In many deployed pipelines, an instance may admit multiple acceptable labels (e.g., an image containing many animals could be correctly labeled with any one of the animals present).
Yet the system typically logs only a single witness label per instance; it is unrealistic to expect access to all correct labels.
Thus, after predicting, the learner sees one acceptable label but not whether its prediction was acceptable, making online regret minimization with such partial feedback the natural objective.
This partial-feedback phenomenon is superficially reminiscent of partial-label learning (PLL)~\citep{cour2011partial,pll2,tian2023partial}, where each instance is associated with a set of candidate labels. However, PLL typically posits that exactly one candidate is the unique ground-truth label and aims to disambiguate it. In contrast, in our setting multiple labels may be simultaneously acceptable, reflecting intrinsic semantic ambiguity rather than spurious false-positive candidates. We call this setting \emph{partial-feedback online learning}, and provide a thorough characterization of this protocol.

In partial-feedback online learning, each instance admits a (hidden) set of acceptable labels $S_t$, yet the learner only observes a single witness label $y^{\rm vis}_t \in S_t$ after predicting. Crucially, the learner never observes whether its prediction was correct during the game; the sets $S_1,\dots,S_T$ remain hidden during the game and are disclosed only after round $T$ for evaluation. This feedback is different from bandit feedback~\citep{raman2024multiclass} (bandit feedback provides a 1-bit membership query $\1\{ \hat y_t\in S_t \}$; we do not even get that bit) and immediate set-feedback protocols~\citep{onlinelearningsetvaluefeedback} (which reveal $S_t$ immediately).

\noindent\textbf{Protocol with Deterministic Learners.}
Let $\X$ be the instance space and $\Y$ the label space. The game lasts $T$ rounds. 
In each round $t\in[T]$: (1) The adversary chooses an instance $x_t\in\X$ and a \emph{hidden} acceptable-label set 
    $S_t\in \S(\Y)\subseteq \sigma(\Y)$, where $\S(\Y)$ is a prescribed family of allowable acceptable sets, and $\sigma(\Y)$ is a sigma algebra on $\Y$.
(2) The learner predicts a label $\hat y_t\in\Y$.
(3) The adversary reveals only a \emph{single witness} $y^{\rm vis}_t\in S_t$. Notice that
for intuition we describe the protocol using deterministic predictions $\hat y_t$. Throughout the paper we allow the learner to randomize: in round $t$, it outputs a distribution $\hat\pi_t \in \prob(\Y,\sigma(\Y))$ and predicts $\hat y_t \sim \hat \pi_t$. Deterministic learners correspond to delta distributions. We formalize the game, regret notion, and the adversary model in Section~\ref{sec:fd}.
Crucially, during play the learner observes neither the set $S_t$ nor the correctness indicator 
$\1\{\hat y_t\in S_t\}$. After $T$ rounds, the adversary reveals $S_1,\ldots,S_T$ for evaluation, and the learner incurs
$\ell_T := \sum_{t=1}^T \1\{\hat y_t\notin S_t\}$.

\noindent\textbf{Realizability regimes.}
Our main focus is the set-realizable regime, where there exists a fixed subset $F^{\star}\in \H$ such that $S_t = F^{\star}(x_t)$ for all $t$, where $F(x) := \{ f(x) : f\in F\}\subseteq \Y$. Intuitively, the ambiguity comes from a fixed latent set of consistent predictors $F^{\star}$. For example, in single-label object tagging, let $\X$ be images and $\Y$ a finite set of object categories, and suppose
$F^{\star}=\{f_{\mathrm{center}},f_{\mathrm{largest}},f_{\mathrm{near}}\}\subseteq \Y^{\X}$ consists of fixed deterministic labeling rules:
$f_{\mathrm{center}}(x)$ returns the category of the object instance closest to the image center,
$f_{\mathrm{largest}}(x)$ returns the category of the largest object by pixel area, and
$f_{\mathrm{near}}(x)$ returns the category of the object closest to the observer (smallest depth).
Then $S_t=F^{\star}(x_t)$ is the set of labels that these rules would assign to $x_t$, while the log reveals only a single witness label $y_t^{\mathrm{vis}}\in S_t$ (one annotator's choice).
Each round reveals only one witness label from the set of labels that $F^{\star}$ would produce on $x_t$.
We also show that beyond set realizability (e.g., existence realizability or the agnostic setting), the minimax regret can be linear even when the hypothesis space contains only two elements, highlighting an inherent barrier under partial feedback and its differences compared to previous online learning protocols.

\noindent\textbf{Hypotheses and regret.}
Let \(\H \subseteq \Y^\X\) be a hypothesis class of predictors \(f:\X \to \Y\). We measure performance relative to the best predictor using the notion of \textit{regret}, defined as the difference between the learner's total loss and that of the best function $f^{\star}$ in \(\H\), under the fiercest adversary against a learner \(\A\). We say that the problem is online learnable if the minimax regret---namely, the regret of the best learner $\A^{\star}$ against the fiercest adversary---grows sublinearly in \(T\). In the set-realizable regime, the loss of the best function in $\H$ is \(0\), achieved by some \(f^{\star} \in F^{\star}\) for a fixed subset \(F^{\star} \subseteq \H\).

\noindent\textbf{Beyond version space.} 
The classical version space cannot directly extend to our protocol. Under full-information set feedback, a standard approach is to prune $\H$ using the observed pairs $(x_i,S_i)$ by eliminating any $f\in\H$ with $f(x_i)\notin S_i$ for some $i\le t$. The surviving hypotheses form the classical \emph{version space}~\citep{mitchell1977version,mitchell1982generalization}.
In our protocol, however, the learner observes only $y^{\rm vis}_t\in S_t$, which typically provides only \emph{existence} information and does not falsify any single hypothesis.
For example, if $\H=\{f_0\equiv 0,f_1\equiv 1\}$ and $y^{\rm vis}_t=0$, then $f_1$ cannot be ruled out because the underlying acceptable set could be $S_t=\{0,1\}$, which is induced by $F=\{f_0,f_1\}$. The key obstacle is that observing $y^{\rm vis}_t\in S_t$ only certifies that some acceptable function exists, not falsify any particular one. 

\noindent\textbf{Collection version space.}
To recover monotone progress, we lift the classical version space from a set of hypotheses to a collection of hypothesis subsets. We maintain a collection-valued version space \(\tilde V_t \subseteq \tilde{\H}\), consisting of all hypothesis subsets \(F\) that remain consistent with the observed witnesses, with \(\tilde V_0\) initialized as \(\tilde \H\):
\[
\tilde V_t := \Bigl\{ F \in \tilde V_{t-1} : y^{\rm vis}_t \in F(x_t) \Bigr\},
\]
where $\tilde{\H}$ denotes the power set of \(\H\).
Unlike standard pruning, which fails under witness-only feedback, $\tilde V_t$ shrinks whenever a subset cannot explain an observed witness. This restores monotone elimination (now at the level of subsets), enabling tree-based shattering arguments analogous to \cite{littlestone}’s analysis.

\noindent\textbf{Contributions.}
We ask: when is sublinear regret possible under this feedback, and what combinatorial parameters govern the optimal rate? Our main contributions are listed below.

\begin{itemize} \item \emph{(New dimensions + tight minimax bounds)} We propose Partial-Feedback Littlestone dimension (PFLdim) and the Partial-Feedback Measure Shattering dimension (PMSdim) defined on collection version space, and show that they tightly characterize the minimax regret for deterministic and randomized learners, respectively.  

\item \emph{(When determinism = randomization)} We show that deterministic and randomized learnability coincide whenever $\S(\Y)$ satisfies a nested-inclusion property, a structural monotonicity condition on $\S(\Y)$, independent of the finiteness of the Helly number. This also answers \cite{onlinelearningsetvaluefeedback}'s open question on whether finite Helly number is necessary for the learnability equivalence.

\item \emph{(Sharp impossibility beyond set realizability)} We show that beyond set realizability, the minimax regret can be linear even when $|\H|=2$, motivating the development of noise-sensitive complexity measures for partial-feedback learning.

\end{itemize}

\noindent\textbf{Technical Challenge.} Classical online dimensions (e.g., Littlestone dimension~\citep{littlestone}) are defined via labeled trees where the adversary’s feedback is annotated edge-wise and the learner’s prediction corresponds to choosing an outgoing edge. In our setting, the learner observes only witness labels $y^{\vis}$, while the true acceptable sets are revealed only after $T$ rounds. Consequently, the appropriate tree constructions must annotate edges with witness labels but allow the underlying acceptable-set assignment to be specified path-wise, which leads to a tailored tree-based definitions of PFLdim and PMSdim over the collection version space.

\section{Related Works}

Our work connects to several lines of research in online learning under adversarial sequences and limited feedback. In full-information online classification, combinatorial/sequential dimensions provide sharp characterizations of learnability and optimal rates: the Littlestone dimension was introduced by \citet{littlestone} in the realizable mistake-bound setting, and the agnostic minimax-regret viewpoint was developed in \citet{bendavid2009agnostic}, with broader links to sequential complexity frameworks such as sequential Rademacher complexity and related minimax analyses \citep{rakhlin2011online,rakhlin2015sequential}. Multiclass online learning exhibits additional subtleties; classical structural results relate learnability and ERM-style principles \citep{daniely2011littlestone}, and the gap between full-information and bandit feedback has been quantified via the price of bandit information \citep{daniely2013price}. Recent work has obtained sharp characterizations even for unbounded label spaces in the full-information model \citep{hanneke2023multiclass} and in the bandit-feedback model via bandit-specific sequential dimensions \citep{raman2024multiclass}. Closest in spirit to our setting is online learning with set-valued feedback, where the environment reveals the entire acceptable set each round and one predicts any label in the set; this model was recently characterized via tailored dimensions and exhibits separations between deterministic and randomized learnability \citep{onlinelearningsetvaluefeedback}. Our partial-feedback protocol is strictly weaker in terms of supervision: only one correct label is revealed while evaluation is still with respect to an unobserved set, placing it in the broader landscape of partial-information online learning such as adversarial bandits \citep{auer2003nonstochastic} and partial monitoring \citep{bartok2014partialmonitoring}, but with an observation structure that is not a direct loss-revealing signal. This also bears some resemblance to online learning with hidden or only partially revealed feedback structure, where the learner cannot fully exploit the information pattern available to the environment~\citep{fdbkgraph,sideobservation}. Another related thread studies whether minimax-optimal performance can be achieved by proper or otherwise restricted predictors; recent work provides combinatorial characterizations of minimax in $0/1$ games with simple predictors \citep{hanneke2021proper}, and related progress studies refined rates in nonparametric online learning \citep{daskalakis2022fast} as well as randomized Littlestone-type notions for optimal prediction \citep{filmus2023randomized}. Finally, in the batch setting, ambiguous/partial-label supervision has been studied theoretically, e.g., learning from candidate label sets \citep{cour2011partial} and single-positive multi-label learning where each instance is annotated with only one relevant label or other partial information \citep{xu2022onepositive,partialmultilabellearning}; our problem is distinct in being adversarial and sequential with weaker feedback---the learner observes only one revealed correct label while performance is evaluated against an unobserved valid-label set---which motivates new shattering notions and a minimax-regret characterization tailored to this protocol.

\begin{table}[]
\centering
\resizebox{\linewidth}{!}{\begin{tabular}{@{}rccccl@{}}
\toprule
\multicolumn{6}{c}{Partial-Feedback}                                                                                                                                                                                                                                        \\ \midrule
Action & $\Q$ sel. $x_t$     & $\A$ pred. $\hat{\pi}_t$              & $\Q$ rev. $y^{\vis}_t$                        & \makecell{(if $t=T$) \\ $\Q$ rev. $(S_1,\dots,S_T)$} & \multicolumn{1}{c}{\makecell{(if $t=T$) \\ $\ell_T := \sum^T_{i=1} \underset{\hat{y}_i\sim \hat{\pi}_i}{\E}\1\{ \hat{y}_i\notin S_i \}$}} \\ \midrule
Info   & $\Theta_{t-1}$          & $\Theta_{t-1} \bigcup \{ x_t \}$          & $\Theta_{t-1} \bigcup (x_t,\hat{\pi}_t)$          & $\Theta_T$                                            & \multicolumn{1}{c}{$\Theta_T \bigcup (S_1,\dots,S_T)$}                                      \\ \specialrule{1pt}{3pt}{3pt}
\multicolumn{6}{c}{Multiclass~\citep{littlestone} \& Set-valued~\citep{onlinelearningsetvaluefeedback}}                                                                                                                                                                                                                                \\ \midrule
Action & $\Q$ sel. $x_t$     & $\A$ pred. $\hat{\pi}_t$              & $\Q$ rev. $S_t$                               & $\ell_t:=\ell_{t-1} + \underset{\hat{y}_t\sim \hat{\pi}_t}{\E}\1\{ \hat{y}_t \notin S_t \} $       &                                                                                              \\ \midrule
Info   & $\Theta^{\prime}_{t-1}$ & $\Theta^{\prime}_{t-1} \bigcup \{ x_t \}$ & $\Theta^{\prime}_{t-1} \bigcup (x_t,\hat{\pi}_t)$ & $\Theta^{\prime}_{t}$                               &                                                                                              \\ \bottomrule
\end{tabular}}

\vspace{-0.1cm}

\caption{The learning protocol of partial-feedback compared with multiclass \& set-valued settings, where $\A$ is learner, $\Q$ is adversary, histories $\Theta_{i}:=(x_1,\hat{\pi}_1,y^{\vis}_1),\dots,(x_{i},\hat{\pi}_{i},y^{\vis}_{i})$, and $\Theta^{\prime}_{i}:=(x_1,\hat{\pi}_1,S_1),\dots,(x_{i},\hat{\pi}_{i},S_{i})$. In our model, the set of correct labels $S_t$ is fixed by the adversary in advance but hidden from the learner.}
\label{tab:setting}
\end{table}
\vspace{-0.35cm}

\section{Formal Definition}
\label{sec:fd}

We formalize the partial-feedback online learning game and fix notation.
The game is specified by an instance space $\X$, a label space $\Y$ equipped with a $\sigma$-algebra $\sigma(\Y)$,
a prescribed family of admissible acceptable-label sets $\S(\Y)\subseteq \sigma(\Y)$, a hypothesis class
$\H\subseteq \Y^{\X}$, and a horizon $T\in\mathbb N_{+}$.

\noindent\textbf{Protocol.}
For each round $t\in[T]$ ($[T]:=\{1,2,\dots,T\}$, the adversary chooses an instance $x_t\in\X$ and a hidden acceptable-label set $S_t\in\S(\Y)$.
The learner then outputs a distribution $\hat\pi_t\in\prob(\Y,\sigma(\Y))$.
Finally, the adversary reveals only a single witness label $y_t^{\rm vis}\in S_t$.
During the game, the learner observes neither $S_t$ nor the loss $\E_{\hat y_t \sim \hat \pi_t}\1\{\hat y_t\notin S_t\}$.
After round $T$, the adversary reveals $S_{1:T}$ for evaluation.
Let the partial-feedback history be
$\Theta_t := (x_1,\hat\pi_1,y^{\rm vis}_1),\ldots,(x_t,\hat\pi_t,y^{\rm vis}_t)$ (with $\Theta_0$ empty).
A (possibly randomized) learner is a sequence of maps $\A:=\{\A_t\}_{t=1}^T$ such that
$\hat\pi_t=\A_t(\Theta_{t-1},x_t)\in\prob(\Y,\sigma(\Y))$. The learner becomes deterministic if it only outputs delta measures. 

\noindent\textbf{Loss, regret, and learnability.}
The deterministic round-$t$ loss is $\1\{\hat y_t\notin S_t\}$; for randomized learners we evaluate the expected loss
$\E[\1\{\hat y_t\notin S_t\}]$ where the expectation is over $\hat y_t\sim\hat\pi_t$.

\begin{definition}[Partial-feedback online learnability]
\label{def:learnability_pf}
For a learner $\A$ and a sequence $(x_t,S_t,y_t^{\rm vis})_{t=1}^T$ satisfying $S_t\in\S(\Y)$ and $y_t^{\rm vis}\in S_t$,
define the learner's expected cumulative loss and the comparator loss by $\ell_T(\A):=\sum_{t=1}^T \E[\1\{\hat y_t\notin S_t\}]$ and $\ell_T(f):=\sum_{t=1}^T \1\{f(x_t)\notin S_t\}\ \ (f\in\H)$, respectively.
The worst-case expected regret of $\A$ on $\H$ is
\begin{equation}
\label{eq:defofregret}
R_{\A}(T,\H):=\sup_{\substack{(x_t,S_t,y_t^{\rm vis})_{t=1}^T\\ S_t\in\S(\Y),\ y_t^{\rm vis}\in S_t}}
\bigl(\ell_T(\A)-\inf_{f\in\H}\ell_T(f)\bigr),
\end{equation}
and the minimax regret is $R^\star(T,\H):=\inf_{\A}R_{\A}(T,\H)$, where the infimum is over all (possibly randomized) learners.
We say that $\H$ is \emph{partial-feedback online learnable} (with randomized learners) if $R^\star(T,\H)=o(T)$. A learner is \emph{deterministic} if $\hat\pi_t$ is always a delta distribution $\delta_{\hat y_t}$.
Let $R^\star_{\rm det}(T,\H):=\inf_{\A\in\textsf{Det}}R_{\A}(T,\H)$; we say $\H$ is \emph{deterministically learnable}
if $R^\star_{\rm det}(T,\H)=o(T)$.
\end{definition}
\begin{remark*}
Under set-realizable assumption, minimizing $R_{\A}(T,\H)$ among $\A$ in Equation~\eqref{eq:defofregret} can be rewritten as
\begin{equation}
\label{eq:regretrewritten}
\begin{aligned}
R^{\star}(T,\H) &= \inf_{\A}
\sup_{\substack{
(x_i,S_i,y_i^{\rm vis})_{i=1}^T\\
S_i\in\S(\Y),\ y_i^{\rm vis}\in S_i\\
\exists\,F^\star\in\tilde\H\ \text{s.t.}\ S_i=F^\star(x_i),\ \forall i\in[T]
}}
\sum_{t=1}^T
\underset{{\hat y_t\sim \A_t(\Theta_{t-1},x_t)}}{\E}\!\left[\1\{\hat y_t\notin S_t\}\right]\\
&= \sup_{x_1}\inf_{\hat\pi_1}\sup_{y_1^{\rm vis}} \sup_{x_2}\inf_{\hat\pi_2}\sup_{y_2^{\rm vis}} \cdots \sup_{x_T}\inf_{\hat\pi_T}\sup_{y_T^{\rm vis}} \\ &\qquad \sup_{\substack{ \exists\,F^\star\in\tilde{\H}\ \text{s.t.}\ S_i=F^\star(x_i),\ \forall i\in[T]\\ S_i\in \S(\Y),\ y_i^{\rm vis}\in S_i,\ \forall i\in[T] }} \sum_{t=1}^T \E_{\hat y_t\sim \hat\pi_t}\bigl[\1\{\hat y_t\notin S_t\}\bigr].
\end{aligned}
\end{equation}
The minimax regret can be viewed as a zero-sum game between the learner and the adversary.
For this worst-case analysis, it is convenient to use an equivalent protocol in which, at each round,
the adversary first chooses $x_t$, then the learner outputs a distribution $\hat{\pi}_t$, and then the adversary chooses the revealed witness $y_t^{\rm vis}$.
The hidden acceptable sets $S_{1:T}$ are specified only at the end, subject to
$y_t^{\rm vis}\in S_t$ for all $t$. Since the learner never observes $S_t$ during play,
this postponement does not affect the learner's observable history or the resulting minimax regret. An illustration of this protocol as well as comparison with others is shown in Table~\ref{tab:setting}.
\end{remark*}

\noindent\textbf{Realizability regimes.}
In the \emph{agnostic} regime no further assumptions are imposed on $S_{1:T}$.
In the \emph{existence-realizable} regime there exists a fixed $f^\star\in\H$ such that $f^\star(x_t)\in S_t$ for all $t$.
In the \emph{set-realizable} regime there exists a fixed subset $F^\star\subseteq\H$ such that
$S_t=F^{\star}(x_t)$ for all $t$.
We characterize the learnability under set-realizable setting before Section~\ref{sec:arrs}.

\section{Deterministic Learners}
\label{sec:det}

\subsection{A First Attack on Finite-class Learnability}

As a motivating example, we show that collection version space helps in yielding an (admittedly loose) finite-class learnability guarantee.

\begin{theorem}[Finite-Class Learnability (Coarse Bound)]
\label{thm:finitebound}
Under partial feedback and set-realizability, any finite $\H$ with $|\H|=n$ is online learnable by a deterministic algorithm with
minimax regret at most $\sum_{i=1}^{\lfloor n/2\rfloor}\binom{n}{i}$.
\end{theorem}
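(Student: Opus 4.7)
My plan is to analyze the collection-version-space algorithm suggested by the preceding discussion. Initialize $\tilde V_0:=\tilde{\H}$ and update $\tilde V_{t+1}:=\{H\in\tilde V_t:y^{\vis}_t\in H(x_t)\}$; set-realizability guarantees $H^\star\in\tilde V_t$ for every $t$. For the prediction rule I would use plurality over the surviving collections, $\hat y_t\in\argmax_{y\in\Y}|\{H\in\tilde V_t:y\in H(x_t)\}|$, with a fixed tie-breaking order.

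The first step is a shrinkage lemma: whenever $\hat y_t\notin S_t$, the set $\{H\in\tilde V_t:\hat y_t\in H(x_t)\}$ misses $H^\star$, so its size is at most $|\tilde V_t|-1$; by the plurality choice, the adversary's revealed $y^{\vis}_t$ (which satisfies $y^{\vis}_t\ne\hat y_t$ since $y^{\vis}_t\in S_t$) has $|\{H:y^{\vis}_t\in H(x_t)\}|\le|\{H:\hat y_t\in H(x_t)\}|$, whence $|\tilde V_{t+1}|\le|\tilde V_t|-1$. This alone gives the weak bound $|\tilde V_0|\le 2^n-1$ on the number of mistakes.

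The second step is to tighten to $\sum_{i=1}^{\lfloor n/2\rfloor}\binom{n}{i}\approx 2^{n-1}$, which I would attempt via a complementation-symmetry in $P(\H)$: every $H\subseteq\H$ pairs with $\H\setminus H$, and at least one member of each pair has size at most $\lfloor n/2\rfloor$, so the goal is to charge each mistake to a distinct subset of size $\le\lfloor n/2\rfloor$. Concretely, I would maintain a potential $\Phi_t$ counting surviving small subsets and modify the prediction rule so that each mistake decreases $\Phi_t$ by at least one, perhaps by treating each pair $\{H,\H\setminus H\}$ coherently in the plurality vote.

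The main obstacle is precisely this tightening. The conditions $y\in H(x_t)$ and $y\in(\H\setminus H)(x_t)$ are \emph{not} complementary in $\Y$, so a naive collapse along complement-pairs fails, and any clean potential argument must still handle the case $|H^\star|>\lfloor n/2\rfloor$, in which $H^\star$ itself is not a member of the tracked small family. I expect this large-$H^\star$ case to be where the main combinatorial work sits, and correspondingly the source of the looseness the authors flag as ``coarse''.
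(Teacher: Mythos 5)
Your first step is correct and does establish learnability, but only with the bound $2^n-1$: the shrinkage lemma (on a mistake round the plurality label over collections has support missing $H^\star$, hence the revealed label's support, and therefore $|\tilde V_{t+1}|$, drops by at least one) is sound. However, the theorem claims the strictly smaller bound $\sum_{i=1}^{\lfloor n/2\rfloor}\binom{n}{i}\approx 2^{n-1}$, and your second step --- the complementation-pairing potential over $P(\H)$ --- is exactly where the argument is incomplete, as you yourself flag. The obstacle you identify is real: the sets $\{H\in\tilde V_t: y\in H(x_t)\}$ for different $y$ are not disjoint (a collection can output many labels at $x_t$), so no halving or pairing argument on the \emph{collection} version space goes through, and there is no evident way to charge mistakes to small subsets from within your plurality-over-collections rule.

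The paper closes this gap by changing the prediction rule rather than the potential. Its algorithm has two branches: first, if some label $y$ lies in $H(x_t)$ for \emph{every} surviving collection $H\in\tilde V_{t-1}$, predict that $y$ --- this is always correct since $H^\star\in\tilde V_{t-1}$; otherwise, predict the plurality label over the \emph{individual hypotheses} $f\in\H$ (not over collections). The point is that the sets $\{f\in\H: f(x_t)=y\}$ \emph{do} partition $\H$ across $y$, because each $f$ outputs exactly one label. Hence on a mistake round the revealed label $y_t^{\vis}$ differs from the plurality label, and its witness set $V_t^{\vis}:=\{f\in\H: f(x_t)=y_t^{\vis}\}$ is disjoint from a set at least as large, forcing $|V_t^{\vis}|\le\lfloor n/2\rfloor$. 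Finally, these witness sets cannot repeat across two mistake rounds $t<t'$: by monotonicity $\tilde V_{t'-1}\subseteq\tilde V_t$, and after the round-$t$ pruning every surviving collection intersects $V_t^{\vis}$, so if $V_{t'}^{\vis}=V_t^{\vis}$ then $y_{t'}^{\vis}\in H(x_{t'})$ for all $H\in\tilde V_{t'-1}$ and round $t'$ falls into the always-correct first branch. Each mistake is thus charged to a distinct nonempty subset of $\H$ of size at most $\lfloor n/2\rfloor$, giving the stated count. In short: your route proves a correct but weaker bound, and the missing idea is to vote over hypotheses (where label-preimages partition) rather than over collections (where they do not), combined with the consensus-label branch that makes repeated witness sets harmless.
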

\begin{proofsketch}
We maintain and update the collection version space, based on $\tilde V_t := \{ F\in \tilde V_{t-1} \mid y^{\rm vis}_t \in F(x_t) \}$. At round $t$, the learner uses a for-loop to search for $y\in\Y$ such that $y\in F(x_t)$ for all $F\in \tilde V_{t-1}$, and predicts it as $\hat y_t$. If the prediction is made inside this for-loop, the learner will be correct. If the prediction is not made inside this for-loop, the learner predicts $\hat y_t$ that is output by the majority of functions in $\H$ at $x_t$. We upper bound the wrong predictions outside the for-loop. Let $V^{\rm vis}_t := \{ f\in \H \mid f(x_t) = y^{\rm vis}_t \}$ be the set of functions that output $y^{\rm vis}_t$ at $x_t$. We show that $V^{\rm vis}_t\neq V_{t^{\prime}}^{\rm vis}$ for all $t < t^{\prime}$, or the prediction at $t^{\prime}$ will be made inside the for-loop and will be correct. According to the majority-prediction strategy, $\abs{ V^{\rm vis}_t } \le \lfloor \frac{n}{2} \rfloor$ or $y^{\rm vis}_t = \hat y_t$ and the learner will be correct. Counting all such $V^{\rm vis}_t$ gives us an upper bound of $\sum_{i=1}^{\lfloor n/2\rfloor}\binom{n}{i}$.
\end{proofsketch}
It highlights that partial feedback is non-trivial even for finite classes. Although collection version space makes this problem solvable, and helps in proving the learnability (regret constant in $T$), the bound is exponential in $n$. This motivates our sharper, dimension-based characterizations.

\subsection{Partial-Feedback Littlestone Dimension}
 To characterize the learnability of a specific setting, it is conventional to propose a tailored dimension, and prove that the minimax regret can be tightly bounded by this specific dimension. In the finite hypothesis space ($\H<\infty$) case, less functions in $\H$ means easier learning, which are also evident in the previous finite case analysis (Theorem~\ref{thm:finitebound}). Likewise, we need a quantity to justify the complexity of the hypothesis class whether finite or infinite, and this is where the combinatorial dimensions come into play. Larger dimensions correspond to harder learning problems. The combinatorial dimensions for online learning (including ours) are typically defined via rooted $\Y$-ary trees whose nodes are labeled by instances in $\X$ and edges carry additional annotations. 

We therefore fix the following notations.
An $\X$-valued $\Y$-ary tree of depth $d$ is a mapping $\T:\Y^{< d}\to \X$, where each node at depth $t$ is indexed by a prefix $\y_{1:t-1}\in\Y^{t-1}$ and labeled by the instance $\T(\y_{1:t-1})\in\X$.
Specifically, one may view a \emph{full} $\T$ as a map $\T:\Y^\star\to \X\cup\{\varnothing\}$ such that $\T(\y)\neq\varnothing$ for all $|\y|<d$ and $\T(\y)=\varnothing$ for all $|\y|\ge d$.
We also allow edge annotations: for each $t\in[d]$, an annotation function $f_t:\Y^{t}\to \M$ assigns an element of $\M$ to the edge corresponding to the length-$t$ path $\y_{1:t}$. 

\subsubsection{Review of Related Dimensions}
Before introducing our dimension, we briefly review the tree-based dimensions most relevant to our setting. This will clarify which parts of the classical definitions survive under partial feedback and which must be modified.

\begin{definition}[Set, Multiclass, and Bandit Littlestone Dimension \citep{onlinelearningsetvaluefeedback}] 
\label{def:smbldim}
    Let $\T$ be a depth-$d$ $\X$-valued, $\Y$-ary full tree with annotation functions $f_t:\Y^{t}\longrightarrow \S(\Y)$ for $t\in [d]$, and $\H$ be the hypothesis class we consider. If there exists a sequence, $(f_1,\dots, f_d)$, such that for every root-to-leaf path $\y_{1:d}=(y_1,\dots,y_d) \in \Y^d$, there exists $h_{\y_{1:d}}\in \H$, such that $h_{\y_{1:d}}(\T(\y_{1:t-1}))\in f_t(\y_{1:t})$ and $y_t\notin f_t(\y_{1:t})$, then we claim that $\T$ is shattered by $\H$. The maximal depth $d$ of $\T$ that can be shattered by $\H$ is defined as the Set Littlestone Dimension (SLdim). If $\S(\Y) = \{\{y\}:y\in \Y\}$ holds, then it is called the Multiclass Littlestone Dimension (MLdim), and if $\S(\Y) =\{A_y\subseteq \Y\ \mid \  A_y=\Y\setminus\{y\}, \ y\in \Y \}$ holds, then it is called the Bandit Littlestone Dimension (BLdim).
\end{definition}
\begin{remark*}
The above definition unifies several standard online classification models by choosing different
set systems $\S(\Y)$. In particular, when $\S(\Y)=\{\{y\}:y\in\Y\}$ consists of singletons,
the induced quantity coincides with the original Littlestone dimension (Ldim)~\citep{littlestone}, i.e., Ldim $\equiv$ MLdim. When
$\S(\Y)=\{\Y\setminus\{y\}:y\in\Y\}$, the definition specializes to the bandit-feedback model,
and the associated dimension reduces to the bandit Littlestone dimension~\citep{raman2024multiclass}.
\end{remark*}

We now further explain these dimensions by clarifying how each part of the definition corresponds to the online learning procedure. The standard online learning protocol is summarized in Table~\ref{tab:setting}. Each root-to-leaf path $\y_{1:d}$ corresponds to a $d$-round game. Along this path, the nodes $\T(\y_{1:t-1})$ are the instances shown by the adversary, the edges $y_t$ are the labels output by the learner after seeing $\T(\y_{1:t-1})$, and the annotation $f_t(\y_{1:t})$ on each edge is the correct label set revealed by the adversary. The requirement that there exists $h_{\y_{1:d}}$ going across all $f_t(\y_{1:t})$ is meant to satisfy the existence-realizability assumption. The other requirement, $y_t\notin f_t(\y_{1:t})$, implies that the learner makes a mistake at the $t$-th round. Combining these together, the existence of a full depth-$d$ tree satisfying these constraints implies that, no matter what deterministic strategy the learner adopts, the adversary can make the learner make mistakes in at least $d$ rounds. In other words, under existence-realizability, there exists an adversary such that any deterministic learner will incur regret of at least $d$.

These combinatorial dimensions uniquely determine the deterministic learnability of the corresponding online learning problem.

\begin{theorem}[Deterministic Learnability Determined by SLdim~\citep{onlinelearningsetvaluefeedback}]
\label{thm:SLdim}
    Consider a set-valued online learning problem. Let the subset collection be $\S(\Y)\subseteq \sigma(\Y)$, and hypothesis class be $\H\in \Y^{\X}$. It holds that $R^\star_{\rm det}(T,\H)={\rm SL}(\H)$ for all $T\geq {\rm SL}(\H)$ under existence-realizable setting.
\end{theorem}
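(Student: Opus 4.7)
The plan is to pin both sides of the equality by classical techniques: a Standard Optimal Algorithm (SOA) for the upper bound and an adversary that walks the shattered tree for the lower bound, together with the observation that realizability makes the comparator term vanish, so regret reduces to a mistake count.

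For the upper bound, I would maintain a version space $V_t\subseteq\H$ with $V_0=\H$ and establish the following shrinkage lemma: whenever $\mathrm{SL}(V_t)=d\ge 1$, there exists a label $\hat y^{\star}\in\Y$ such that for every $S\in\S(\Y)$ with $\hat y^{\star}\notin S$, the restriction $V_t|_{x_t,S}:=\{h\in V_t:h(x_t)\in S\}$ satisfies $\mathrm{SL}(V_t|_{x_t,S})\le d-1$. The contrapositive proves it: if no such $\hat y^{\star}$ existed, then for every $y\in\Y$ we could pick $S_y\in\S(\Y)$ with $y\notin S_y$ and $\mathrm{SL}(V_t|_{x_t,S_y})\ge d$, and splicing the corresponding depth-$d$ subtrees under a new root labeled $x_t$ with edge-annotations $f_1(y):=S_y$ would give a depth-$(d{+}1)$ shattered tree in $V_t$, a contradiction. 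The SOA predicts this $\hat y^{\star}$ and updates $V_{t+1}=\{h\in V_t:h(x_t)\in S_t\}$; realizability keeps $V_t$ nonempty, and every mistake strictly decreases $\mathrm{SL}(V_t)$, bounding the total number of mistakes and hence the regret by $\mathrm{SL}(\H)$.

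For the lower bound, fix a depth-$d$ shattered tree $\T$ with annotations $f_1,\dots,f_d$ where $d:=\mathrm{SL}(\H)$, and have the adversary walk it online: present $x_t:=\T(\y_{1:t-1})$, read off the deterministic prediction $\hat y_t$, set $y_t:=\hat y_t$, and reveal $S_t:=f_t(\y_{1:t})$. The shattering clause $y_t\notin f_t(\y_{1:t})$ forces a mistake in each of the first $d$ rounds, while the witness $h_{\y_{1:d}}\in\H$ satisfies $h_{\y_{1:d}}(x_t)\in S_t$ for every $t\le d$, certifying a realizable zero-loss comparator. For $t>d$ the adversary simply replays $x_1$ with $S_t:=S_1$, which keeps the same witness consistent and gives the now-informed learner no reason to err further, so the regret is exactly $d=\mathrm{SL}(\H)$.

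The main obstacle is the shrinkage lemma underlying the SOA: the grafting construction must simultaneously satisfy all clauses of Definition~\ref{def:smbldim}, namely that the chosen $S_y$ lie in $\S(\Y)$, that $y\notin S_y$ while a witness $h$ with $h(x_t)\in S_y$ exists at the new root, and that the subtree witnesses from $V_t|_{x_t,S_y}$ continue to honor every interior constraint inherited from their depth-$d$ subtrees. Once this is verified, the upper bound is a routine potential-function induction, and the lower bound is essentially a direct reading of the shattering definition under the adversary that always copies the learner's label into the forbidden coordinate.
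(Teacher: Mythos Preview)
Your argument is correct and is exactly the classical Standard Optimal Algorithm (SOA) approach: the shrinkage lemma via the root-grafting contradiction for the upper bound, and walking the shattered tree for the lower bound. Note, however, that the paper does not actually prove this theorem; it is quoted from \citep{onlinelearningsetvaluefeedback} as background, and the paper explicitly refers to this SOA technique later when explaining why it fails under partial feedback, so your proof is in line with what the cited source does.

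One cosmetic point: in your lower-bound paragraph you write ``so the regret is exactly $d$''. The adversary's extension to rounds $t>d$ only certifies regret $\ge d$ (the learner \emph{could} still err on $(x_1,S_1)$; you need not and should not argue it will not). The equality comes from combining this $\ge d$ with the SOA's $\le d$, not from the adversary alone. This does not affect correctness, but the phrasing slightly overclaims what the lower-bound construction by itself establishes.
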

\begin{remark*}
By changing $\S(\Y)$, this learnability conclusion can be applied to multiclass and bandit online learning. 
\end{remark*}

In the next section, we explain why existing tree-based dimensions do not suffice to characterize the partial-feedback online learning protocol, and then introduce a tailored dimension.

\subsubsection{Partial-Feedback Littlestone Dimension}

The Littlestone-like dimension treats each edge as the prediction, and the annotation as the revealed label sets.
The partial-feedback protocol, however, is not directly captured by the existing Littlestone-type dimensions.
The key obstacle is that the learner never observes whether its prediction is correct, nor the full valid-label set $S_t$; it only observes a single witness label $y^{\rm vis}_t\in S_t$.
In the dimensions of Definition~\ref{def:smbldim}, the adversary's feedback along a root-to-leaf path is encoded by edge annotations:
for a fixed prefix $\y_{1:t}$, the annotation (ground-truth label set) at depth $t$ is determined solely by that prefix and therefore is consistent across all extensions $\y_{1:t}\sqcup \y_{t+1:d}$. This implies that the ground-truth label sets in previous rounds are uniquely fixed no matter what the learner predicts in future rounds. 
In contrast, under partial feedback, after observing only $\y^{\rm vis}_{1:t}$ there may remain multiple ground-truth label sets (also, multiple ground-truth function subsets $F^\star\in\tilde\H$) consistent with the history, so the ``unseen part'' of $S_t$ is not pinned down by the prefix.
This motivates a new dimension that explicitly captures such branching of the underlying ground truth under partial revelation, which we define next.

\begin{definition}[Partial-Feedback Littlestone Dimension]
\label{def:pfldim}
Let $\T$ be an $\X$-valued, $\Y$-ary full tree of depth $d$, and let $\H$ be a hypothesis class.
We say that $\T$ is \emph{$q$-shattered by $\tilde{\H}$} if there exists a sequence of annotation functions
$(f^{\rm vis}_1,\ldots,f^{\rm vis}_d)$ with $f^{\rm vis}_t:\Y^{t}\to \Y$ such that for every root-to-leaf path
$\y_{1:d}=(y_1,\ldots,y_d)\in\Y^d$, there exists a function subset $F_{\y_{1:d}}\in \tilde{\H}$ satisfying:
\begin{enumerate}
\item for all $t\in[d]$,
\[
f^{\rm vis}_t(\y_{1:t}) \in F_{\y_{1:d}}\big(\T(\y_{1:t-1})\big)
\quad\text{and}\quad
F_{\y_{1:d}}\big(\T(\y_{1:t-1})\big)\in \S(\Y);
\]
\item
\[
\Big|\big\{t\in[d]: y_t\notin F_{\y_{1:d}}\big(\T(\y_{1:t-1})\big)\big\}\Big|\ge q.
\]
\end{enumerate}
We sometimes say $\y_{1:d}$ is $q$-witnessed by $F_{\y_{1:d}}$ for shorthand when $\y_{1:d}$ meets the above two properties.
For fixed $d$, the \emph{$d$-Partial-Feedback Littlestone Dimension} (PFLdim$_d$) of $\tilde{\H}$ is the maximum $q$
for which there exists a depth-$d$ tree $\T$ that is $q$-shattered by $\tilde{\H}$.
\end{definition}
\begin{remark*}
    PFLdim$_d$ monotonously increases in $d$, as shown in Lemma~\ref{lemma:pfldimmono}.
\end{remark*}
In contrast to the previous dimensions, the annotation function $f^{\rm vis}_t$ outputs the label that the adversary reveals to the learner, instead of the full label set. The crux is that, given two different paths $\y^{(1)}_{1:d}:= \y_{1:k}\sqcup \y^{(1)}_{k+1:d}$ and $\y^{(2)}_{1:d}:= \y_{1:k}\sqcup \y^{(2)}_{k+1:d}$, they can have different ground-truth function sets $F_{\y^{(1)}_{1:d}}\neq F_{\y^{(2)}_{1:d}}$ and therefore different ground-truth label sets, while they even share the same prefix path $\y_{1:k}$. This is, however, not possible in the previous dimensions (e.g., Definition~\ref{def:smbldim}) as we have discussed before. 

We show that our proposed dimension can be used to tightly bound the minimax regret. For any collection version space $\tilde V\subseteq \tilde \H$, we write PFL$_d(\tilde V)$ for its $d$-partial-feedback Littlestone dimension.

\begin{theorem}[Deterministic Learnability Determined by PFLdim]
\label{thm:dldbp}
Fix a subset collection $\S(\Y)\subseteq \sigma(\Y)$ and a hypothesis class $\H\subseteq \Y^{\X}$. Under the set-realizable setting, it holds that
\[
R^{\star}_{\rm det}(T,\H)={\rm PFL}_T(\tilde{\H})
\qquad\text{for all } T\in\N_+.
\]
\end{theorem}
When the previous Littlestone-type dimensions (e.g., ${\rm SLdim}$) are infinite, deterministic learnability in the corresponding classical online learning models fails: for every horizon $T$, an adversary can force $\Omega(T)$ mistakes even in the realizable setting.
In contrast, Theorem~\ref{thm:dldbp} shows that partial-feedback learnability is governed by the growth rate of ${\rm PFL}_T(\tilde{\H})$ as a function of $T$, rather than by whether the dimension is finite.
In particular, even if $\sup_{d\ge1}{\rm PFL}_d(\tilde{\H})=\infty$, the problem can still be learnable provided ${\rm PFL}_d(\tilde{\H})=o(d)$.

\begin{corollary}[Deterministic Learnability via the Growth of ${\rm PFL}_d$]
\label{cor:dldbpcor}
Fix $\S(\Y)\subseteq \sigma(\Y)$ and $\H\subseteq \Y^{\X}$. Then there exists a deterministic online learner with sublinear regret if and only if
\[
{\rm PFL}_d(\tilde{\H})=o(d)\qquad \text{as } d\to\infty.
\]
\end{corollary}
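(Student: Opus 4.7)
The plan is to deduce the corollary directly from Theorem~\ref{thm:dldbp}, which provides the exact identity $\inf_{\A} R_{\A}(T,\H)={\rm PFL}_T(\tilde{\H})$ for every $T\in\N_+$. Under this identity, asking whether some deterministic learner achieves $R_{\A}(T,\H)=o(T)$ reduces to asking whether ${\rm PFL}_T(\tilde{\H})=o(T)$, so the only residual content is bridging a per-horizon minimax quantity with the existence of a single anytime strategy.

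First I would handle the forward implication. Assume ${\rm PFL}_d(\tilde{\H})=o(d)$. By the upper-bound half of Theorem~\ref{thm:dldbp}, for every horizon $T$ there exists a deterministic strategy whose regret is at most ${\rm PFL}_T(\tilde{\H})$. Inspecting the construction used to prove that upper bound (a collection version-space pruning rule guided by the partial-feedback tree, in the spirit of the Standard Optimal Algorithm), the round-$t$ decision depends only on the history $\Theta_t$ and not on the horizon $T$, so a single anytime strategy $\A^\star$ satisfies $R_{\A^\star}(T,\H)\le {\rm PFL}_T(\tilde{\H})=o(T)$ simultaneously for all $T$. Should one prefer a black-box reduction that does not rely on inspecting the construction, a standard doubling trick over geometrically growing epochs yields the same conclusion up to constant factors, preserving $o(T)$.

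For the converse, suppose $\H$ admits some deterministic learner $\A$ with $R_{\A}(T,\H)=o(T)$. Then $\inf_{\A'}R_{\A'}(T,\H)\le R_{\A}(T,\H)=o(T)$, and the lower-bound half of Theorem~\ref{thm:dldbp} forces ${\rm PFL}_T(\tilde{\H})=o(T)$, which after relabeling $T\mapsto d$ is exactly the stated growth condition.

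The only genuine subtlety is the anytime/horizon-dependence gap between ``for each $T$ some $\A_T$ is good'' and ``one fixed $\A$ is good for all $T$'', which I expect to be the main (and minor) obstacle; it is routinely handled either by noting that the construction behind Theorem~\ref{thm:dldbp} is horizon-free or by a doubling reduction. All combinatorial work is already absorbed into Theorem~\ref{thm:dldbp}, so no further tree-shattering arguments are needed here.
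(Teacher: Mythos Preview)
Your approach matches the paper's: the corollary is presented there as an immediate consequence of Theorem~\ref{thm:dldbp} with no separate proof, and you correctly identify that all the combinatorial content is absorbed into that theorem. You are in fact more careful than the paper, which glosses over the anytime versus horizon-dependent distinction entirely.

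One factual slip: your claim that ``the round-$t$ decision depends only on the history $\Theta_t$ and not on the horizon $T$'' is false for DPFLA as written. The prediction rule in Algorithm~\ref{alg:dpfla} explicitly minimizes ${\rm PPFL}_T(\cdot)$, so the horizon $T$ enters the decision at every round. Hence the ``inspect the construction'' route does not go through as stated. Your doubling-trick alternative, however, is sound: restarting DPFLA on epochs of length $2^k$ gives total regret at most $\sum_{k:2^k\le 2T}{\rm PFL}_{2^k}(\tilde{\H})$, and ${\rm PFL}_d(\tilde{\H})=o(d)$ implies this sum is $o(T)$ (for any $\varepsilon>0$, all but finitely many terms satisfy ${\rm PFL}_{2^k}\le \varepsilon\,2^k$, and the geometric sum is $\le 2\varepsilon\cdot 2T$). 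Set-realizability is preserved within each epoch since the same $H^\star$ witnesses consistency, so the per-epoch upper bound from Theorem~\ref{thm:dldbp} applies. Rely on this second route and drop the first.
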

We now outline the proof of Theorem~\ref{thm:dldbp}, which is established via matching upper and lower bounds.
For the upper bound, we construct a deterministic strategy and show that its worst-case regret over $T$ rounds is at most ${\rm PFL}_T(\tilde{\H})$.
A natural first attempt is to mimic the Standard Optimal Algorithm of \citet{onlinelearningsetvaluefeedback},
which at each step predicts a label that minimizes the resulting ${\rm SLdim}$.
However, this approach crucially relies on the set-valued feedback model: the learner can certify whether its prediction is correct and can prune the version space accordingly, and the associated potential (captured by ${\rm SLdim}$) decreases by at least one on each mistake, yielding a mistake bound equal to ${\rm SLdim}$.
In our partial-feedback protocol, by contrast, the learner observes only a single revealed label and cannot infer whether its own prediction was correct, so past mistakes are not observable and the potential-drop argument breaks. To handle this additional uncertainty, we introduce an auxiliary combinatorial quantity that tracks \emph{prefix uncertainty}—informally, it accounts for the set of ways the unobserved parts of the valid-label sets could be consistent with the revealed labels along a history prefix.
This auxiliary dimension is used solely as a proof device to make the inductive argument go through; importantly, it does not appear in the final minimax characterization, which depends only on ${\rm PFL}_T(\tilde{\H})$.

\begin{algorithm}[t]
\caption{Deterministic Partial-Feedback Learning Algorithm (DPFLA)}
\label{alg:dpfla}
\SetKwInOut{Initialize}{Initialize}

\Initialize{$\tilde V_0 = \tilde{\H}$; $\x_{1:0}=()$; $\hat{\y}_{1:0}=()$;}

\For{$t=1,\ldots,T$}{
  receive $x_t\in\X$;

  update $\x_{1:t} \gets \x_{1:t-1} \sqcup (x_t)$;

  let $Q_t \gets \bigcup\limits_{F\in \tilde{V}_{t-1}} F(x_t) $;

  \ForEach{$y\in Q_t$}{
      $\tilde V_{t-1}^{(x_t,y)} \gets \{F\in \tilde V_{t-1}:\exists\, f\in F \text{ such that } f(x_t)=y\}$;
  }

  predict
  \begin{equation}
  \label{eq:choiceofdpfla}
    \hat{y}_t \in \argmin_{\overline{y}_t\in\Y}\ \max_{\overline y^{\rm vis}_t\in Q_t}\ 
    {\rm PPFL}_{T-t}\!\left(\x_{1:t},\ \hat{\y}_{1:t-1} \sqcup (\overline{y}_t),\ \tilde V_{t-1}^{(x_t,\overline y^{\rm vis}_t)}\right);
  \end{equation}

  update $\hat{\y}_{1:t} \gets \hat{\y}_{1:t-1} \sqcup (\hat{y}_t)$;

  receive $y^{\rm vis}_t\in \Y$ \tcp*{with $y^{\rm vis}_t\in Q_t$}

  update $\tilde V_t \gets \tilde V^{(x_t,y^{\rm vis}_t)}_{t-1}$;
}
\end{algorithm}

\begin{definition}[Prefix Partial-Feedback Littlestone Dimension (PPFLdim)]
\label{def:ppfldim}
Fix an integer $n\ge 0$, a history of instances $\x_{1:n}=(x_1,\ldots,x_n)\in\X^n$, and the corresponding predicted labels $\y_{1:n}=(y_1,\ldots,y_n)\in\Y^n$. Let $\tilde V\subseteq \tilde H$ be a collection version space, and let $\T$ be an $\X$-valued, $\Y$-ary full tree of depth $d$, representing a $d$-round continuation after the fixed history $(\x_{1:n},\y_{1:n})$.

We say that $\T$ is \emph{$(\x_{1:n},\y_{1:n})$-prefix-$q$-shattered with respect to $\tilde V$} if there exists a sequence of annotation functions
\[
(f^{\rm vis}_{n+1},\ldots,f^{\rm vis}_{n+d}),
\qquad
f^{\rm vis}_t:\Y^t\to\Y,
\]
such that for every continuation path $\y_{n+1:n+d}\in\Y^d$, letting
\[
\y_{1:n+d}:=(\y_{1:n}\sqcup \y_{n+1:n+d}),
\]
there exists $F_{\y_{1:n+d}}\in\tilde V$ satisfying:

\begin{enumerate}
\item[(i)] \textbf{Consistency with revealed labels.} For all $t\in\{n+1,\ldots,n+d\}$,
\[
f^{\rm vis}_t(\y_{1:t}) \in F_{\y_{1:n+d}}\!\big(\T(\y_{n+1:t-1})\big)
\quad\text{and}\quad
F_{\y_{1:n+d}}\!\big(\T(\y_{n+1:t-1})\big)\in \S(\Y).
\]

\item[(ii)] \textbf{At least $q$ total non-membership events.} The total number of indices on which the predicted label is not contained in the corresponding valid-label set, counting both the fixed history and the continuation, is at least $q$, i.e.,
\[
\Big|\{k\in[n]: y_k\notin F_{\y_{1:n+d}}(x_k)\}\Big|
\;+\;
\Big|\{t\in\{n+1,\ldots,n+d\}: y_t\notin F_{\y_{1:n+d}}\!\big(\T(\y_{n+1:t-1})\big)\}\Big|
\;\ge\; q.
\]
\end{enumerate}

For fixed $(\x_{1:n},\y_{1:n})$ and $d$, the \emph{$(\x_{1:n},\y_{1:n})$-prefix $d$-PPFLdim of $\tilde V$} is the maximum $q$ for which there exists a depth-$d$ tree $\T$ that is $(\x_{1:n},\y_{1:n})$-prefix-$q$-shattered with respect to $\tilde V$.
\end{definition}
\begin{remark*}
Note that the historical non-membership events are evaluated with respect to the path-dependent subset $F_{\y_{1:n+d}}$. Thus, even after the visible history is fixed, the number of mistakes attributed to the learner on the historical rounds may still vary across different future continuations.
\end{remark*}

For convenience, we denote this quantity by ${\rm PPFL}_d(\x_{1:n},\y_{1:n},\tilde{V})$.
It is immediate from the definitions that, for any $\H$, the prefix dimension ${\rm PPFL}_d(\x_{1:n},\y_{1:n},\tilde{\H})$
reduces to ${\rm PFL}_d(\tilde{\H})$ when the prefix is empty, i.e., $n=0$.
For non-empty prefixes, ${\rm PPFLdim}$ serves as a prefix-aware shattering measure that tracks the remaining ambiguity
consistent with the revealed labels along the observed history, and is the key auxiliary quantity in our inductive analysis.
We are now ready to present the proof sketch of Theorem~\ref{thm:dldbp}.

\begin{proofsketchof}{Theorem~\ref{thm:dldbp}}
We give matching upper and lower bounds.

\paragraph{Upper bound.}
Let $\A$ be the learner adopting Algorithm~\ref{alg:dpfla} (DPFLA). By the definition of PPFLdim, we have that
\begin{equation*}
    R^{\star}_{\rm det}(T,\H) \le R_{\A} (T,\H) \le \max_{\x_{1:T},\y^{\vis}_{1:T}} {\rm PPFL}_0(\x_{1:T},\hat\y_{1:T},\tilde{V}_T),
\end{equation*}
where $\hat\y_j$ is output by $\A$ based on $\x_{1:j}$ and $\y^{\rm vis}_{1:j-1}$; $\tilde V_T$ is based on $\x_{1:T}$ and $\y^{\rm vis}_{1:T}$ according to DPFLA.

To prove $R^{\star}_{\rm det}(T,\H) \le {\rm PFL}_T(\tilde\H)$, it suffices to show by induction on $t$ that for any sequence $\x_{1:T}$ and $\y^{\rm vis}_{1:T}$, the PPFL potential never exceeds the initial PFL value:
\[
{\rm PPFL}_{T-t}(\x_{1:t},\hat\y_{1:t},\tilde V_t)\ \overset{!}{\le}\ {\rm PFL}_T(\tilde{\H})\qquad \forall\,t\in\{0,\ldots,T\}.
\]
The base case $t=0$ holds since PPFLdim reduces to PFLdim on an empty prefix.
For the inductive step, suppose the inequality fails at time $t$. By the min--max prediction rule of DPFLA, this would imply
that for every candidate prediction $\hat y$ there exists a feasible revealed label $y^{\rm vis}_t\in Q_t$ such that the updated state
$\tilde V^{(x_t,y)}_{t-1}$ yields a PPFL value larger by at least $1$ than
${\rm PPFL}_{T-(t-1)}(\x_{1:t-1},\hat\y_{1:t-1},\tilde V_{t-1})$.
Using the witness continuation tree from the PPFL definition for this worst-case choice, one can prepend a new root labeled by
$x_t$ and define the first-edge visible annotation to be the above revealed label $y_t^{\rm vis}$ for each edge labeled with $\hat{y}\in \Y$, thereby constructing a depth
$T-(t-1)$ tree witnessing  $(\x_{1:t-1},\hat\y_{1:t-1})$-prefix $q$-shattering with
$q\ge {\rm PPFL}_{T-(t-1)}(\x_{1:t-1},\hat\y_{1:t-1},\tilde V_{t-1})+1$.
By definition of PPFL, this would force
${\rm PPFL}_{T-(t-1)}(\x_{1:t-1},\hat\y_{1:t-1},\tilde V_{t-1})\ge q$, a contradiction.
Thus the potential is always bounded by ${\rm PFL}_T(\tilde{\H})$, and so is the minimax regret. 

\paragraph{Lower bound.}
Let $q={\rm PFL}_T(\tilde{\H})$, and take a depth-$T$ tree $\T$ and visible-label annotations witnessing that $\T$ is $q$-shattered.
An adversary plays the instance $x_t=\T(\hat\y_{1:t-1})$ (following the learner's prediction path) and reveals
$y_t^{\rm vis}=f_t^{\rm vis}(\hat\y_{1:t})$. By shattering, there exists a subset $F^\star\in \tilde{\H}$ consistent with all
revealed labels (hence the play is set-realizable), yet along the induced path the learner's predictions fall outside
$F^\star(x_t)$ on at least $q$ rounds. Therefore any deterministic learner incurs at least $q$ mistakes.

Combining the two bounds yields $R^{\star}_{\rm det}(T,\H)={\rm PFL}_T(\tilde{\H})$.
\end{proofsketchof}

\subsection{Finite Binary Classes: a Simple Application of PFLdim}

We show that PFLdim can be used to improve the bound in Theorem~\ref{thm:finitebound} under binary classification.
\begin{theorem}[Improved Bound for the Finite Case]
\label{thm:finitecaseimprovedbound}
    Under partial feedback and set-realizable settings, let $\H \in \Y^{\X}$ denote the hypothesis class with $\abs{\H}=n<\infty$ and $\abs{\Y} = 2$. Then, $\H$ is online learnable under deterministic algorithms with minimax regret no more than $n$.
\end{theorem}

\begin{proofsketch}
Since $\S(\Y)\subseteq \PP(\Y)$, and $\S(\Y)$ determines the scope of valid function sets, the minimax regret is obtained when $\S(\Y)$ is maximal, i.e., $\S(\Y)=\PP(\Y)$. Fix $T$ and let $q={\rm PFL}_T(\tilde\H)$. By Lemma~\ref{lemma:noviseqpred}, there is a depth-$\le T$ binary tree $\T$ that is
$q$-shattered by $\tilde{\H}$ with annotations $(f^{\rm vis}_t)$ and the property that along any root-to-leaf path
$y_t\neq f^{\rm vis}_t(\y_{1:t})$.

Using the technique in the proof of Lemma~\ref{lemma:pfldimmono}, we complete $\T$ to a full depth-$T$ binary tree that is still $q$-shattered by $\tilde \H$. For any fixed subset $F\in \tilde \H$, consider depth-$T$ paths that are
$q$-witnessed by $F$ (i.e., $f^{\rm vis}_t(\y_{1:t})\in F(\T(\y_{1:t-1}))$ for all $t\in [T]$, and
$y_t\notin F(\T(\y_{1:t-1}))$ for at least $q$ indices). On each of these $q$ indices, $|\Y|=2$ forces the next edge
(choice of $y_t$) to be unique to keep that $y_t\neq f^{\rm vis}_t(\y_{1:t})$, so $F$ can branch freely on at most $T-q$ rounds and thus
$q$-witness at most $2^{T-q}$ paths. Since every one of the $2^T$ paths is $q$-witnessed by some $F$ and $|\tilde \H|=2^n$, we have
$2^T\le 2^n\cdot 2^{T-q}$, hence $q\le n$. By Theorem~\ref{thm:dldbp}, the minimax regret is at most $n$.
\end{proofsketch}

\subsection{Relation with Other Combinatorial Dimensions}
\label{sec:rolp}
We position ${\rm PFLdim}$ relative to classical dimensions under the same $\X,\Y,\H$, with the subset collection $\S(\Y)$ specified below.
Full proofs, discussions, and separating examples are deferred to Appendix~\ref{sec:app_rolp}.

\begin{theorem}[Partial Feedback vs.\ Multiclass]
\label{thm:pfm_short}
Assume $\S_{\rm mc}(\Y)\subseteq \S_{\rm pf}(\Y)$. Then ${\rm MLdim}(\H)\le {\rm PFLdim}(\tilde{\H})$, and the inequality can be strict with ${\rm MLdim}(\H)<\infty$ and ${\rm PFLdim}(\tilde{\H})=\infty$.
\end{theorem}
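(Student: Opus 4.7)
The plan is to prove the inequality ${\rm MLdim}(\H)\le {\rm PFLdim}(\tilde{\H})$ by a direct embedding of any MLdim-shattering witness into a PFLdim-shattering witness of the same depth, and then to exhibit a separating example for strictness. Let $d := {\rm MLdim}(\H)$. By Definition~\ref{def:smbldim} specialized to $\S(\Y) = \S_{\rm mc}(\Y) = \{\{y\}:y\in\Y\}$, there exist a depth-$d$ tree $\T$, singleton annotations $f_t(\y_{1:t}) = \{a_t(\y_{1:t})\}$, and, for every path $\y_{1:d}\in\Y^d$, a shatterer $h_{\y_{1:d}}\in\H$ satisfying $h_{\y_{1:d}}(\T(\y_{1:t-1})) = a_t(\y_{1:t})$ and $y_t \ne a_t(\y_{1:t})$ for all $t\in[d]$. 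I would re-use the same tree $\T$ and set $H_{\y_{1:d}} := \{h_{\y_{1:d}}\}$ and $f^{\rm vis}_t(\y_{1:t}) := a_t(\y_{1:t})$.

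Since each $H_{\y_{1:d}}$ is a singleton collection, $H_{\y_{1:d}}(x) = \{h_{\y_{1:d}}(x)\}$ is a singleton subset of $\Y$, and thus lies in $\S_{\rm mc}(\Y) \subseteq \S_{\rm pf}(\Y)$ for every $x\in\X$, giving $H_{\y_{1:d}}\in\tilde{\H}$. Both PFLdim shattering conditions are then immediate: $f^{\rm vis}_t(\y_{1:t}) = a_t(\y_{1:t}) \in H_{\y_{1:d}}(\T(\y_{1:t-1}))$ by construction, and $y_t \ne a_t(\y_{1:t})$ forces $y_t \notin H_{\y_{1:d}}(\T(\y_{1:t-1}))$ at every $t\in[d]$, so the non-membership count equals $d$. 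Therefore ${\rm PFLdim}_d(\tilde{\H}) \ge d$ and ${\rm PFLdim}(\tilde{\H}) \ge {\rm MLdim}(\H)$.

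For the strictness claim I would construct a concrete triple $(\X, \Y, \H)$ and set systems $\S_{\rm mc}(\Y) \subseteq \S_{\rm pf}(\Y)$ for which ${\rm PFLdim}(\tilde{\H}) > {\rm MLdim}(\H)$. The separation exploits two structural advantages of PFLdim that MLdim lacks: first, only $q \le d$ rounds along each path must witness $y_t \notin H(\T(\y_{1:t-1}))$, so deeper trees can accumulate mismatches beyond what MLdim allows by using $q<d$; second, the witness is a collection $H\in\tilde{\H}$ rather than a single hypothesis, so when $\S_{\rm pf}$ strictly enlarges $\S_{\rm mc}$ the annotation $f^{\rm vis}_t$ may be any element of a larger admissible set $H(x)$ instead of the unique prediction of one $h\in\H$. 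Concretely I would search for $\H$ with ${\rm MLdim}(\H) = m$ whose deeper trees admit PFLdim-shattering with $q > m$, using non-singleton collections to carry the prefix-consistent parts of $f^{\rm vis}$ while singleton branches contribute the extra mismatches; the detailed verification is deferred to Appendix~\ref{sec:app_rolp}.

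The main obstacle is the strictness direction. PFLdim-shattering must hold simultaneously for all $|\Y|^d$ paths under a single annotation family $f^{\rm vis}_t$ depending only on the prefix $\y_{1:t}$, while $H_{\y_{1:d}}$ may depend on the full path. Arranging $\S_{\rm pf}$ and $\H$ so that this prefix coupling does not collapse back into an MLdim-style rigid witness (one hypothesis covering all rounds of each path) requires care: we need enough diversity in $\tilde{\H}$ via non-singleton collections in $\S_{\rm pf}\setminus\S_{\rm mc}$ so that different completions of a shared prefix can witness disjoint mismatch patterns, while keeping the multiclass Littlestone dimension of $\H$ itself strictly smaller than the resulting PFLdim.
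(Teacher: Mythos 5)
Your first part---the inequality ${\rm MLdim}(\H)\le {\rm PFLdim}(\tilde{\H})$---is correct and is essentially the paper's own argument: take a depth-$d$ multiclass shattering tree, reuse it verbatim, set each witness collection to the singleton $\{h_{\y_{1:d}}\}$, and observe that singleton value sets lie in $\S_{\rm mc}(\Y)\subseteq\S_{\rm pf}(\Y)$, so both PFLdim conditions hold with $q=d$. Nothing to change there.

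The strictness claim, however, is where your proposal has a genuine gap: you never actually exhibit a separating instance. You correctly identify the two structural levers (allowing $q<d$ on deeper trees, and using non-singleton collections from $\S_{\rm pf}\setminus\S_{\rm mc}$), and you correctly flag the central difficulty---that $f^{\rm vis}_t$ depends only on the prefix while the witness collection may depend on the whole path---but you then defer the construction rather than resolving it. A statement of the form ``I would search for $\H$ with these properties'' is a research plan, not a proof; the entire content of the strictness direction is the construction itself. The paper discharges this with a concrete example: $\X=[-\pi,\pi]$, $\Y=[-1,1]$, $\H=\{x\mapsto\sin(x+c):c\in[0,2\pi)\}$, with $\S_{\rm pf}(\Y)=\sigma(\S_{\rm mc}(\Y))$. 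There, a single full-label example pins the version space down to at most two hypotheses, so ${\rm MLdim}$ is finite and small; but an explicit adversary (Algorithm~\ref{alg:aaer}) maintains, via disjointness of candidate pairs on the surviving instance set and the finiteness of solutions to $\sin(x+c)=\sin(x+c')$, enough ambiguity to force at least $T/2$ mistakes from any deterministic learner, so by Theorem~\ref{thm:dldbp} the quantity ${\rm PFL}_T(\tilde\H)$ grows linearly. Note this gives a much stronger separation than the ``$q>m$ on a deeper tree'' gap you aim for: the class is multiclass learnable but partial-feedback unlearnable. To complete your proof you would need to either reproduce such a construction in full or supply a different one with a verified adversary strategy.
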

\begin{proofsketch}
The strictness is shown in Example~\ref{example:relationtomulticlass_app}, let $\X:=[-\pi,\pi]$, $\Y:=[-1,1]$, and $\H:=\{\,x\mapsto \sin(x+c): c\in[0,2\pi)\,\}$. 
With multiclass feedback, a single labeled example $(x,y)$ restricts $c$
to at most two values (since $\sin(x+c)=y$ has at most two solutions), hence the
resulting version space has size at most $2$ after one round. From that point on,
full-information online learning has a constant mistake bound: if the adversary
ever chooses an $x$ on which the two remaining candidates disagree, the revealed
label identifies the correct hypothesis; if it never does, then the learner is
correct on all subsequent rounds anyway. In contrast, under partial feedback with a richer $\S(\Y)$, even though each
revealed witness label still leaves at most two hypotheses consistent with the
observation, the adversary is not forced to falsify either candidate. Instead,
it can maintain ambiguity across rounds so that any deterministic learner makes
$\Omega(T)$ mistakes (indeed at least $T/2$), showing that $\H$ is not
deterministically set-realizable partial-feedback learnable.
\end{proofsketch}

\begin{theorem}[Partial Feedback vs.\ Set-valued]
\label{thm:pfsv_short}
Assume $\S_{\rm pf}(\Y)\subseteq \S_{\rm sv}(\Y)$ and that $\S_{\rm sv}(\Y)$ is closed under arbitrary unions. If
${\rm PFL}_d(\tilde{\H})/d\to 1$ as $d\to\infty$, then ${\rm SL}(\H) =\infty$; without union-closure, this implication may fail even when the above asymptotic condition holds.
\end{theorem}
\noindent Proof and a counterexample for the non-union-closed case appear in Appendix (see Example~\ref{ex:pf_not_sv}), showing that union-closed property is not redundant for guaranteeing ${\rm SL}(\H) =\infty$.

\section{Randomized Learner}
\label{sec:rand}

\subsection{Partial-Feedback Measure Shattering Dimension}

Previous sections focused on deterministic learners. We now turn to learnability with randomized learners. Correspondingly, we introduce a combinatorial dimension that characterizes learnability under randomized algorithms.

\begin{definition}[Partial-Feedback Measure Shattering Dimension (PMSdim)]
Fix $\gamma\in[0,1]$.  ${\rm PMS}_{(d,\gamma)}(\tilde{\H})$ is the largest $q$ such that there exists a depth-$d$
$\X$-valued tree whose edges are indexed by probability measures $\pi_t\in\Pi(\Y)$ and equipped with visible-label annotations
$y^{\rm vis}_t=f^{\rm vis}_t(\pi_1,\ldots,\pi_t)$, with the property that for \emph{every} root-to-leaf sequence
$\Pi_{1:d}=(\pi_1,\ldots,\pi_d)$ there is a subset $F_{\Pi_{1:d}}\in \tilde{\H}$ satisfying:
(i) $f^{\rm vis}_t(\Pi_{1:t})\in F_{\Pi_{1:d}}(\T(\Pi_{1:t-1}))\in \S(\Y)$ for all $t\in[d]$ (consistency with the revealed labels), and
(ii) on at least $q$ rounds, $\pi_t\!\big(F_{\Pi_{1:d}}(\T(\Pi_{1:t-1}))\big)\le 1-\gamma$
(for $\gamma=0$, replace $\le 1-\gamma$ by $<1$).
\end{definition}

\noindent The PMSdim can be understood as a variant of PFLdim (Definition~\ref{def:pfldim}), where the edge labels are replaced with distributions, to be aligned with randomized learners.
The prefix version, Prefix Partial-Feedback Measure Shattering Dimension (PPMSdim, Definition~\ref{def:ppms_inline}), is defined similarly to PPFLdim (Definition~\ref{def:ppfldim}). It helps in proving the bound with randomized algorithms as a proof device.

\begin{theorem}[Regret of Randomized Algorithms]
\label{thm:regretofrand}
    For any $\X$, $\Y$, and $\H$, for all randomized algorithms $\A$, the minimax regret of any $T$-round game satisfies that
    \begin{equation}
    \label{eq:4}
        \sup_{\gamma\in (0,1]} \gamma {\rm PMS}_{(T,\gamma)}(\tilde{\H}) \le \inf_{\A} R_{\A}(T,\H) \le  C \inf_{\gamma\in (0,1]} \left\{ \gamma T + \int^{1}_{\gamma} {\rm PMS}_{(T,\eta)}(\tilde{\H}) \dd\eta \right\},
    \end{equation}
    where $C$ is some universal positive constant. The bounds can be tight up to some constant factors.
\end{theorem}
The proof is mainly motivated by~\cite{onlinelearningsetvaluefeedback} and \cite{daskalakis2022fast}. The difference is that~\cite{onlinelearningsetvaluefeedback}'s argument relies on the property of the descending of the SLdim. In contrast, our PFLdim and PPFLdim do not share this property but can only maintain an upper bound. To tackle this, we change the associated part of proof to a counting argument.

Also notice that Theorem~\ref{thm:regretofrand} presents an integral upper bound. To enhance the interpretability and utilizability, we show a sufficient growth speed of PMSdim to guarantee the learnability.

\begin{corollary}[Logarithmic regret from polynomial ${\rm PMS}$ decay]
\label{cor:log-regret-from-pms}
Suppose there exist constants $C>0$ and $p\in(0,1)$ such that for all $\eta\in(0,1]$, $
{\rm PMS}_{(T,\eta)}(\tilde{\H}) \le C \log T \cdot \eta^{-p} + C \log T$.
Then $\H$ is learnable with randomized algorithms, and $\inf_{\A} R_{\A}(T,\H) = O(\log T)$.
\end{corollary}
\begin{remark*}
Under the same bound, if $p=1$ then optimizing \eqref{eq:4} gives
$\inf_{\A} R_{\A}(T,\H)=O((\log T)^2)$, while if $p>1$ it yields
$\inf_{\A} R_{\A}(T,\H)=O\!\left(T^{1-\frac{1}{p}}(\log T)^{\frac{1}{p}}\right)$.
\end{remark*}


\subsection{Relation between Randomized and Deterministic Settings}

It is immediate that the learnability under deterministic learners implies the learnability under randomized learners, as deterministic learners are a special case of the randomized learners. The key question is when the learnability is identical in these two settings, and when it is not. A structural property---the finiteness of Helly number~\citep{Helly0}---has been proven sufficient for the learnability consistency in the set-valued setting~\citep{onlinelearningsetvaluefeedback}, which has also been used for characterizing proper learning~\citep{helly1,hanneke2021proper}. 

\begin{definition}[Helly Number]
    The Helly number of the collection of sets, $\mathscr{H}(\mathcal{C})$ is the minimum $p$ such that the following condition is met: For any non-empty subcollection $\mathcal{S}$ of $\mathcal{C}$ with $\bigcap_{S\in \mathcal{S}} S = \emptyset$, there exists a non-empty subcollection $\mathcal{S}^{\prime} \subseteq \mathcal{S}$ containing at most $p$ sets, such that $\bigcap_{S^{\prime}\in \mathcal{S}^{\prime}} S^{\prime} = \emptyset$. If such $p$ does not exist, we say that $\mathscr{H}(\mathcal{C})=\infty$.
\end{definition}
Interestingly, we notice that the finiteness of Helly number is not necessary for the learnability consistency in the partial-feedback setting. To prove it, we show that a nested inclusion property also suffices to obtain the consistency. Meanwhile, there exists a $\S(\Y)$ with infinite Helly number, but with the nested inclusion property, making the finiteness of Helly number unnecessary.

\begin{theorem}
\label{thm:sufficientnotnecessary}
    Let $\H$ be a hypothesis class, and $\S(\Y)\subseteq \sigma(\Y)$.  If there exists a finite positive integer $p$, and for every subcollection $\mathcal{C}\in\S(\Y)$ with $\bigcap_{S \in \mathcal{C}} S = \emptyset$, either one of the following two conditions is satisfied: (1) there exists a countable sequence $\{S_i\}^{\infty}_{i=1} \subseteq \mathcal{C}$ with $S_1\supseteq S_2\supseteq \dots$, satisfying $\bigcap^{\infty}_{i=1} S_i = \emptyset$; (2) there exists a subcollection $\S \subseteq \mathcal{C}$ of size $\abs{\S} = p$, satisfying $\bigcap_{S\in \S} S = \emptyset$, then ${\rm PMS}_{(d,\gamma)}(\tilde{\H}) = {\rm PFL}_d(\tilde{\H})$ holds for all $\gamma\in [0,\frac{1}{p}]$. Additionally, if all the $\mathcal{C}$ with $\bigcap_{S \in \mathcal{C}} S = \emptyset$ satisfy (1), then ${\rm PMS}_{(d,\gamma)}(\tilde{\H}) = {\rm PFL}_d(\tilde{\H})$ holds for all $\gamma\in [0,1)$. Meanwhile, there exists an $\S(\Y)$ of which the Helly number is infinity, but the above condition is satisfied.
\end{theorem}

\begin{proofsketch} The proof consists of showing ${\rm PMS}_{(d,\gamma)}(\tilde{\H}) \ge {\rm PFL}_{d}(\tilde{\H})$ and ${\rm PMS}_{(d,\gamma)}(\tilde{\H}) \le {\rm PFL}_{d}(\tilde{\H})$ under the conditions stated in the theorem. Here we show the sketch for proving ${\rm PMS}_{(d,\gamma)}(\tilde{\H}) \ge {\rm PFL}_{d}(\tilde{\H})$. The aim is to construct a tree $\T^{\prime}$ of depth $d$ that is $\gamma$-measure ${\rm PFL}_{d}(\tilde{\H})$-shattered by $\tilde{\H}$ where $\gamma \in [0,\frac{1}{p}]$ or $\gamma \in [0,1)$, depending on the conditions stated in the theorem. We construct the edges/reveals of each node $\T^{\prime}(\Pi_{1:i})$ in $\T^{\prime}$ based on an associated node $\T(\y_{1:i})$ in $\T$. For each $y$, we add all the $S_y$ with $y\notin S_y$ into a collection, $\mathbf{S}_y$. From each of $\mathbf{S}_y$, we select an arbitrary set $S_y\in \mathbf{S}_y$ to form another collection $\S$. It is not hard to see that the intersection of all sets in $\S$ is $\emptyset$. Thus, take $\S$ as $\mathcal C$ in the theorem statement, and either one of (1) or (2) must hold. If (1) holds, then by continuity from above, for every measure $\pi \in \Pi(\Y)$ and $0 < \gamma < 1$, there exists a set $S\in\mathcal{S}$, satisfying that $\pi(S)< 1 - \gamma$. If (2) holds, then by a union bound argument, we have that for every probability measure $\pi$, there must exist an $S_y\in \S$, such that $\pi(S_y)\le 1 - \frac{1}{p}$. We can further assert that for every measure $\pi$ and $0<\gamma<1$, there exists a $y$, such that for any $S_y\in \S(\Y)$ with $y\notin S_y$, we have that either $\pi(S_y)<1-\gamma$ or $\pi(S_y) \le 1-\frac{1}{p}$ holds. We prove by contradiction. Assuming it is false, then there exists a measure $\pi$ and $0<\gamma<1$, for all $y$, there exists a $S_y\in \S(\Y)$ with $y\notin S_y$, such that $\pi(S_y) > 1 - \frac{1}{p}$ and $\pi(S_y) \ge 1-\gamma$ both hold. We can add such $S_y$ for each $y$ to form a new collection $\S$. However, since we have proved that for such $\S$ with empty intersection of all sets,  either $\pi(S_y) \le 1 - \frac{1}{p}$ or $\pi(S_y) < 1-\gamma$ holds, we reach a contradiction. Therefore, we can let the revealed label of edge of $\T^{\prime}(\Pi_{1:i})$ indexed by each $\pi$ be the revealed label of edge indexed by corresponding $y$ of which all $\pi(S_y)$ are small. The ground-truth functions are also adapted from the associated path. From this strategy, we construct a tree $\T^{\prime}$ of depth $d$ that is $\gamma$-measure ${\rm PFL}_{d}(\tilde{\H})$-shattered by $\tilde{\H}$ where $\gamma \in [0,\frac{1}{p}]$ or $\gamma \in [0,1)$, depending on the conditions stated in the theorem.
\end{proofsketch}
\noindent According to Theorem~\ref{thm:dldbp}, the equality between PMSdim and PFLdim in the above theorem implies that $\inf_{\A} R_{\rm rand\ \A}(T,\H) = \Theta( {\rm PFL}_T(\tilde{\H}) ) = {\rm inf}_{\rm Det\ \A} R_{\A}(T,\H)$, thus guarantees the learnability consistency.

\noindent\textbf{Solving open problem raised by \cite{onlinelearningsetvaluefeedback}.}
\cite{onlinelearningsetvaluefeedback} have left an open problem on \emph{whether the finiteness of Helly number is necessary for the learnability consistency in the set-valued setting}. We notice that the above proof can be easily adapted to the set-valued setting, implying that the finiteness of Helly number is not necessary for the learnability consistency in the set-valued setting, answering their open problem.

\begin{corollary}
    Let $\H$ be a hypothesis class, and $\S(\Y)\subseteq \sigma(\Y)$. If $\S(\Y)$ satisfies one of conditions (1) or (2) in Theorem~\ref{thm:sufficientnotnecessary}, then ${\rm MS}_{\gamma}(\H) = {\rm SL}(\H)$ holds for all $\gamma\in [0,\frac{1}{p}]$. Additionally, if all the $\mathcal{C}$ with $\bigcap_{S \in \mathcal{C}} S = \emptyset$ satisfy (1), then ${\rm MS}_{\gamma}(\H) = {\rm SL}_T(\H)$ holds for all $\gamma\in [0,1)$. Meanwhile, there exists a $\S(\Y)$ of which the Helly number is infinity, but the above condition is satisfied. 
\end{corollary}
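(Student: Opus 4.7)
The plan is to adapt the argument of Theorem~\ref{thm:sufficientnotnecessary} from partial feedback to set-valued feedback. The set-valued dimensions ${\rm SL}(\H)$ and ${\rm MS}_\gamma(\H)$ differ from ${\rm PFL}$ and ${\rm PMS}$ in three respects: annotations output sets in $\S(\Y)$ directly (not visible labels), witnesses are single hypotheses $h\in\H$ (not collections in $\tilde{\H}$), and the small-measure condition $\pi_t(f_t(\Pi_{1:t}))\le 1-\gamma$ is demanded on all $d$ rounds rather than on at least $q$ of them. First I would dispose of the easy direction ${\rm MS}_\gamma(\H)\le{\rm SL}(\H)$ by restricting any $\gamma$-measure-shattered tree to its delta-measure subtree: since $\delta_y(f_t)\in\{0,1\}$ and $1-\gamma<1$, the inequality $\delta_{y_t}(f_t)\le 1-\gamma$ forces $y_t\notin f_t(\y_{1:t})$ on every round, which is exactly the ${\rm SL}$-shattering requirement (and for $\gamma=0$ the ``$<1$'' variant built into the definition gives the same conclusion).

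The main work is the reverse inequality ${\rm MS}_\gamma(\H)\ge{\rm SL}(\H)$. I would start from a depth-$d$ ${\rm SL}$-shattered tree $\T$ with set annotations $(f_t)$ and $d={\rm SL}(\H)$. At any depth-$t$ node indexed by prefix $\y_{1:t-1}$, the family $\mathcal C_{\y_{1:t-1}}:=\{f_t(\y_{1:t-1}\sqcup y)\}_{y\in\Y}$ has empty intersection, since $y\notin f_t(\y_{1:t-1}\sqcup y)$ for every $y$. The hypothesis on $\S(\Y)$ then yields the same key lemma as in Theorem~\ref{thm:sufficientnotnecessary}: for every probability measure $\pi$ on $\Y$ there exists a label $y(\pi)\in\Y$ with $\pi\bigl(f_t(\y_{1:t-1}\sqcup y(\pi))\bigr)\le 1-\gamma$; under condition (2) this is a union bound over the size-$p$ subcollection (valid for $\gamma\le 1/p$), and under condition (1) it is continuity from above along the descending chain (valid for every $\gamma<1$). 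I would then build a depth-$d$ measure-indexed tree $\T'$ recursively: given a measure history $\Pi_{1:t-1}$ that induces the label prefix $\y_{1:t-1}:=(y_1(\pi_1),\ldots,y_{t-1}(\pi_{t-1}))$, set $\T'(\Pi_{1:t-1}):=\T(\y_{1:t-1})$; pick $y_t=y(\pi_t)$ via the lemma applied to $\mathcal C_{\y_{1:t-1}}$; declare $f'_t(\Pi_{1:t}):=f_t(\y_{1:t-1}\sqcup y_t)$; and use the original ${\rm SL}$-witness $h_{\y_{1:d}}\in\H$ as the witness for $\Pi_{1:d}$. By construction, $h(\T'(\Pi_{1:t-1}))\in f'_t(\Pi_{1:t})$ (inherited from ${\rm SL}$-shattering) and $\pi_t(f'_t(\Pi_{1:t}))\le 1-\gamma$ (by the choice of $y_t$) on every round, so $\T'$ is $\gamma$-measure-shattered and ${\rm MS}_\gamma(\H)\ge d$.

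Finally, for the Helly-infinite witness I would reuse the separating example from the proof of Theorem~\ref{thm:sufficientnotnecessary}: on $\Y=\N$ take $\S(\Y)$ to contain the nested chain $\{B_n\}_{n\ge 1}$ with $B_1\supseteq B_2\supseteq\cdots$ and $\bigcap_n B_n=\varnothing$, so every subcollection with empty intersection admits this canonical descending subsequence and condition (1) holds, yet no finite sub-chain of $\{B_n\}$ has empty intersection and hence the Helly number is infinite. The hard part will be ensuring that the reduction $\pi\mapsto y(\pi)$ is coherent along a full depth-$d$ measure path, so that a single ${\rm SL}$-witness hypothesis $h$ satisfies both the instance-set-membership and the small-measure condition at every depth simultaneously; this is what forces the set annotation $f'_t$ to be inherited from a label path of $\T$ rather than chosen freely, and it is the only place where the set-valued argument genuinely differs from---and in fact turns out to be cleaner than---the partial-feedback one, because ${\rm SL}$-shattering already furnishes the required witness along every label path of $\T$ without any prefix-potential bookkeeping.
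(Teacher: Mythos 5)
Your proposal is correct and matches the paper's (one-line) proof, which simply defers to the argument of Theorem~\ref{thm:sufficientnotnecessary}; your elaboration --- empty intersection of the per-node annotation family $\{f_t(\y_{1:t-1}\sqcup y)\}_{y\in\Y}$, the measure-to-label reduction $\pi\mapsto y(\pi)$, the delta-measure restriction for the reverse inequality, and the nested-chain example for the infinite Helly number --- is exactly the intended adaptation, and you are right that the set-valued case is cleaner since each edge carries a single annotated set rather than the whole family $\mathbf{S}_y$. One honest point in your favor: under condition (2) alone your union bound only certifies $\gamma\in[0,1/p]$, which matches Theorem~\ref{thm:sufficientnotnecessary} but is narrower than the $[0,1)$ range asserted in the corollary's first clause, so the corollary as stated appears to contain a typo that your proof correctly does not reproduce.
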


\subsection{Separation between Oblivious and Public Settings}

So far, we let the adversary select $x_t$ and the learner predict $\pi_t\in\Pi(\Y)$ based on the game history including previous output measures $\Pi_{1:i-1} = (\pi_1,\dots, \pi_{t-1})$, but not the predicted labels $\hat{\y}_{1:i-1} = \{ \hat{y}_1,\dots, \hat{y}_{i-1}\}$ drawn on $\Pi_{1:i-1}$ (unless $\pi = \delta_y$). This information access control is aligned with~\cite{onlinelearningsetvaluefeedback}. We call this setting the \emph{oblivious} setting, and the one with knowing the predicted labels the \emph{public} setting. Notice that our ``oblivious'' terminology does not mean the standard oblivious adversary in online learning~\citep{originaloblivious1,originaloblivious2,originaloblivious3}; it only refers to the fact that past realized labels $\hat \y_{1:t-1}$ are hidden from the adversary when the learner is randomized. We argue that in certain scenarios, the public setting is more reasonable, as the adversary should be able to read the learners' predictions in previous rounds, even if it can be trapped by the randomized prediction strategy in the current round. In contrast to the formulation of minimizing regret in the oblivious setting in Equation~\eqref{eq:regretrewritten}, the one of the public setting is written as
\begin{equation}
\label{eq:pubregret}
\begin{aligned}
R^{\rm pub}(T,\H) &= \sup_{x_1}\inf_{\hat\pi_1}\sup_{y_1^{\rm vis}} \underset{\hat y_{1}\sim \hat \pi_1}{\E} 
\sup_{x_2}\inf_{\hat\pi_2}\sup_{y_2^{\rm vis}} \underset{\hat y_{2}\sim \hat \pi_2}{\E}  \cdots \sup_{x_T}\inf_{\hat\pi_T}\sup_{y_T^{\rm vis}} \underset{\hat y_{T}\sim \hat \pi_T}{\E}  \\ &\qquad \sup_{\substack{ \exists\,F^\star\in\tilde{\H}\ \text{s.t.}\ S_i=F^\star(x_i),\ \forall i\in[T]\\ S_i\in \S(\Y),\ y_i^{\rm vis}\in S_i,\ \forall i\in[T] }} \sum_{t=1}^T \1\{\hat y_t\notin S_t\}.
\end{aligned}
\end{equation}
This formula implies the order of operations in this new model: 
At each round $t$, the adversary picks $x_t$; the learner outputs $\hat \pi_t$; the adversary reveals $y^{\rm vis}_t$; after which the realized prediction $\hat y_t$ is sampled from $\hat \pi_t$. At the final round, the adversary reveals the ground-truth function set, and calculate the total loss that the learner shall incur. 
An illustration of this setting is shown in Table~\ref{tab:settingpublic}. We show that, while these two settings are identical in terms of learnability in the set-valued setting~\citep{onlinelearningsetvaluefeedback}, the learnability is separable in our partial-feedback setting. 

\begin{theorem}[Oblivious=Public for Set-valued Setting]
\label{thm:o=p}
    Fix $\X$, $\Y$, $\S(\Y)$, and $\H$. The existence-realizable online learnability with set-valued feedback and randomized algorithms of $\H$ is identical under oblivious and public settings.
\end{theorem}


\noindent However, the above reasoning cannot be extended to the partial-feedback setting. The following statement shows a separation in learnability.

\begin{theorem}[Oblivious$\neq$Public for Partial-Feedback Learning]
\label{thm:oneqpforpartial}
    There exists a choice of $\X$, $\Y$, $\S(\Y)$, and $\H$ such that $\H$ is partial-feedback online learnable under randomized algorithms, set-realizable, and oblivious settings, but unlearnable in the public setting. 
\end{theorem}
\begin{proofsketch}
Let $\X=\Y=\N_+$, $\H=\{f_{i_1,i_2,\dots} (x) = i_x \ \mid\ i_1,i_2,\dots \in \N_{+}  \}$, and $\S(\Y) = \{S_y=\Y\setminus\{y\} \ \mid\ y\in\Y   \}$. \emph{Oblivious:} output $\mathrm{Unif}([T])$ each round; since each
$S_t$ excludes exactly one label, per-round error $\le 1/T$, giving $O(1)$ expected total loss.
\emph{Public:} in round $t$ pick $x_t=t$ and reveal $y_t^{\rm vis}$ with $\pi_t(y_t^{\rm vis})\le 1-k$
(possible as $\Y$ is infinite). After $\hat \y_{1:T}$ are realized, choose $S_t\in\S(\Y)$ consistent
with $y_t^{\rm vis}$ so that $\pi_t(S_t)\le 1-k$ (adversary can always choose some $S_t$ such that $\hat y_t\notin S_t$, if $\hat y_t\neq y_t^{\rm vis}$), yielding per-round loss $\ge k$ and thus $\Omega(T)$.
\end{proofsketch}

With public assumption, what distinguishes the situation in the partial-feedback setting from that in the set-valued setting is that in the former setting, the ground-truth function set, and therefore the ground-truth label sets can be determined after the predicted labels are drawn. Thus, the benefits from the randomization are largely eliminated. However, it does not directly imply that there is no separation between randomized and deterministic learners, as the adversary fixes the revealed label before the actual predicted label is drawn in each round. The revealed label limits the adversary's choosing ground-truth function set. We leave it as an open problem to see if there is any learnability separation between randomized and deterministic learners under public setting.

\section{Agnostic and Existence-Realizable Settings}
\label{sec:arrs}
In this section, we show that without further noise control, the existence-realizable setting is extremely hard to learn, let alone the agnostic setting. Concretely, we show that there exists a two-element hypothesis class that can force any learner to incur linear regret.

\begin{theorem}
     Under existence-realizable and oblivious settings, there exists a choice of $\X$, $\Y$, $\S(\Y)$, and $\H$, such that $\H$ is not partial-feedback online learnable even if $\abs{\H}=2$ with randomized learners.
\end{theorem}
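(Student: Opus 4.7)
The plan is to exhibit a minimal concrete instance witnessing the failure. I would take $\X=\{\star\}$ (a single instance, to strip away any hypothesis separation from the instance side), $\Y=\{0,1,2,3\}$, $\H=\{f_0,f_1\}$ with $f_i\equiv i$, and $\S(\Y)=\{\{0,2\},\{0,3\},\{1,2\},\{1,3\}\}$. Two structural features would drive the argument: first, $\bigcap_{S\in\S(\Y)}S=\emptyset$, so no label is universally inside every admissible $S_t$ and no learner has a uniformly safe prediction; second, for each ``hidden'' label $y\in\{2,3\}$ the twin sets $\{0,y\}$ and $\{1,y\}$ both lie in $\S(\Y)$, one consistent with $f_0$ and the other with $f_1$. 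The second property is what will let the adversary reveal a label that is information-theoretically compatible with either choice of the hidden hypothesis.

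Next I would describe the adversary. It fixes $f^\star\in\{f_0,f_1\}$ at the start. In each round $t$, after observing the learner's measure $\hat\pi_t$, set $p_y:=\hat\pi_t(\{y\})$, choose $y^{\rm vis}_t\in\argmin_{y\in\{2,3\}}p_y$, declare $S_t:=\{f^\star(\star),y^{\rm vis}_t\}\in\S(\Y)$, and reveal $y^{\rm vis}_t$. By construction $y^{\rm vis}_t\in S_t$ and $f^\star(\star)\in S_t$, so the game is existence-realizable and the comparator $f^\star$ incurs cumulative loss $0$. Crucially, $y^{\rm vis}_t$ is a function of $\hat\pi_t$ alone, so the joint distribution of the learner-visible history $(x_t,y^{\rm vis}_t)_{t\le T}$ is identical under $f^\star=f_0$ and under $f^\star=f_1$; consequently every learner's sequence of predicted measures is, in distribution, independent of the secret $f^\star$.

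For the per-round lower bound I would fix any $\hat\pi_t$ and compute the two conditional losses. Under $f^\star=f_0$ the loss is $1-\hat\pi_t(\{0,y^{\rm vis}_t\})=1-p_0-\min(p_2,p_3)$, and under $f^\star=f_1$ it is $1-p_1-\min(p_2,p_3)$. Placing a $\mathrm{Bernoulli}(1/2)$ prior on $f^\star$ and averaging gives
\[
\E_{f^\star}[\text{loss}_t]=1-\tfrac{p_0+p_1}{2}-\min(p_2,p_3)\ \ge\ 1-\tfrac{p_0+p_1}{2}-\tfrac{p_2+p_3}{2}\ =\ \tfrac12,
\]
using $\min(p_2,p_3)\le(p_2+p_3)/2$. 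Since the learner's predictions are independent of $f^\star$, summing over $t$ and invoking Yao's minimax principle produces a deterministic choice of $f^\star$ against which every randomized learner suffers expected total loss at least $T/2$; the deterministic case is the special instantiation with Dirac measures. Combined with the comparator's loss $0$ this delivers regret $\Omega(T)$.

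The hard part, which the construction is engineered to resolve, is ensuring the revealed labels genuinely leak no information about $f^\star$: otherwise the learner could identify the true hypothesis and collapse the bound. Property (ii) above is what neutralises this, because every revelation in $\{2,3\}$ is simultaneously consistent with a set of the form $\{0,\cdot\}$ and one of the form $\{1,\cdot\}$, so the learner's posterior on $f^\star$ never updates. At the same time property (i) rules out the ``safe label'' escape that would trivialise the problem; a naive smaller attempt with $|\Y|=3$ and $\S(\Y)=\{\{0,2\},\{1,2\}\}$ fails precisely because label $2$ becomes universally safe. Adding the extra hidden label $3$ together with the twin pair $\{0,3\},\{1,3\}$ restores (i) without breaking (ii), and the $\min(p_2,p_3)$ term in the per-round loss is exactly the mechanism that punishes any learner attempting to hide by concentrating mass on a single hidden label.
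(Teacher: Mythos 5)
Your proof is correct and follows essentially the same route as the paper's: a two-constant-function class, an adversary that reveals the label the learner's measure least favors, and a retroactive/averaging choice of the hidden hypothesis so that the transcript is identical under both candidates, the comparator loss is zero, and the expected loss is at least $T/2$. The only difference is the gadget used to keep the revealed label consistent with either hypothesis --- you append decoy labels $\{2,3\}$, whereas the paper takes $\Y=\{0,1\}$, $\S(\Y)=P(\Y)$ and uses the full set $\{0,1\}$ as the safety valve on rounds where the learner happens to favor the true label --- and your appeal to Yao's principle is really just the elementary averaging over $f^\star\in\{f_0,f_1\}$ that the paper also uses.
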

\begin{proof}
    We provide such an example.
\begin{example}
    Let $\X:=\N_+$, $\Y:=\{0,1\}$, $\S(\Y) := \PP(\Y)$, and $\H:=\{f_0\equiv 0, \ f_1\equiv 1 \}$. Assume that the game is played $T$ rounds. Let the adversary show instances following the order $x_1=1$, $x_2=2$, $\dots$, $x_T=T$. In each round $t$, the adversary reveals the label $y^{\rm vis}_t$ that the learner puts less probability (reveals $y^{\rm vis}_t=1$ if tie). Notice that no matter what algorithm the learner uses, it will predict $0$ with probability $\ge 0.5$ for $q$ instances and $1$ with probability $\ge 0.5$ for $\ge T-q$ instances. The adversary assigns the ground-truth function to be $f_k$ where $k:=\argmin_{i=\{0,1\}} (1-i)q+ (T-q)i$. For those rounds where the learner predicts $1-k$ with higher probability, the ground-truth label sets are set to be $\{k\}$; For those rounds where the learner predicts $k$ with higher probability, the ground-truth label sets are set to be $\{0,1\}$; For those rounds where the learner outputs tied probabilities, if $k=0$, then the ground-truth label sets are set to be $\{0,1\}$, otherwise $\{1\}$. It is not hard to see that the algorithm will incur loss $\ge 0.5$ for at least $\frac{T}{2}$ rounds. Thus, the regret grows linearly in $T$ and it is not online learnable.
\end{example}
\end{proof}

\noindent In classical online learning, the agnostic setting is often interpreted as a noisy regime, where the mismatch between the hypothesis class and the ground truth is attributed to noise or model misspecification~\citep{hanneke2012activized,agnosticisnoise}. Existence realizability shares a similar flavor by allowing some labels to fall outside the predictions of a fixed hypothesis. In contrast, the realizability in the multiclass protocol is a special case of set realizability, since each round outputs a single label from a fixed predictor. While such “noise” can be tolerated under set-valued feedback, partial feedback is strictly weaker: the adversary may leverage such noise to make even very simple classes unlearnable. This highlights the need for noise-sensitive complexity measures that jointly capture hypothesis complexity and the severity of non-realizability under partial feedback. We leave the identification of an appropriate combinatorial dimension and its learnability implications as an open direction.

\section{Conclusion}

This paper presents a systematic study of partial-feedback online learning, a new online learning protocol in which each instance may admit multiple acceptable labels, while the learner observes only a single witness label in each round and never immediately knows whether its own prediction is correct. To address the key difficulty that the classical version space does not naturally extend to this setting, the paper introduces the notion of collection version space, which lifts consistency from individual hypotheses to sets of hypotheses and restores a useful monotone structure for online analysis. Building on this idea, the paper provides a tight characterization of learnability in the set-realizable regime by proposing the PFLdim and the PMSdim, and showing that they exactly characterize the minimax regret of deterministic and randomized learners, respectively. The paper further identifies a nested-inclusion condition under which deterministic and randomized learnability coincide, thereby resolving an open question in prior work. At the same time, it shows a sharp impossibility result beyond set realizability: even when the hypothesis class contains only two elements, the minimax regret can still be linear. Overall, this work establishes a new theoretical foundation for partial-feedback online learning and suggests that future progress will require new complexity measures and techniques that are robust to weaker realizability assumptions and more challenging feedback structures.

\section*{Acknowledgment}

Shihao Shao would like to thank Xinbu Cheng for taking the time to verify the proofs and for always being willing to help with verification when needed.

\bibliographystyle{plainnat}
\bibliography{ref}

\clearpage

\appendix


\begin{algorithm}[t]
\caption{Collection Version Space Pruning Algorithm}
\label{alg:cvspa}
\SetKwInOut{Initialize}{Initialize}

\Initialize{$\tilde{V}_0=\tilde{\H}$;}

\For{$t=1,\dots,T$}{

  receive $x_t \in \mathcal{X}$;

  \For{$y\in\Y$}{
  \If{$y\in F(x_t)$ holds for every $F\in \tilde{V}_{t-1}$}{
  predict $y$ and quit this for-loop;
  }
  }
  \If{No prediction is made in the above for-loop}{
  predict the $y\in\Y$ that most of $f\in \H$ satisfying that $f(x_t)$ equals $y$;
  }

  receive $y^{\rm vis}_t\in S_t$ \tcp*{$S_t=F(x_t)$ for some $F\in \tilde{V}_{t-1}$}

  \begin{equation}
  \label{eq:collectionversionspaceupdate}
  \tilde{V}_t \gets \{F\in\tilde{V}_{t-1}\ \mid \ y^{\rm vis}_t\in F(x_t) \};\
  \end{equation}

}
\end{algorithm}

\section{Proof of Finite Hypothesis Spaces}
\label{app:finitebound}

\setcounter{theorem}{1}
\begin{theorem}[Learnability with Finite Hypothesis Class (Coarse Bound)]
    Under partial feedback and set-realizability, any finite $\H$ with $|\H|=n$ is online learnable by a deterministic algorithm with
minimax regret at most $\sum_{i=1}^{\lfloor n/2\rfloor}\binom{n}{i}$.
\end{theorem}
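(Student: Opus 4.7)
The plan is to exhibit a deterministic algorithm that maintains a \emph{collection version space} $\tilde V_t \subseteq P(\H)$, initialized at $\tilde V_0 = P(\H)$, and to charge each mistake to a distinct non-empty subset of $\H$ of size at most $\lfloor n/2 \rfloor$. The algorithm I would use has two branches: if there exists a label $y \in \Y$ that lies in $H(x_t)$ for every surviving collection $H \in \tilde V_{t-1}$, predict any such $y$ (call this the ``safe'' branch); otherwise, predict the simple majority label $\hat y_t$ that maximizes $|\{f \in \H : f(x_t) = y\}|$. After seeing $y_t^{\rm vis}$, prune $\tilde V_t = \{H \in \tilde V_{t-1} : y_t^{\rm vis} \in H(x_t)\}$.

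The first step is the safety invariant: since $H^\star \in \tilde V_0$ and $y_t^{\rm vis} \in S_t = H^\star(x_t)$ by set-realizability, an easy induction gives $H^\star \in \tilde V_t$ for all $t$. Consequently, any prediction made in the safe branch lies in $H^\star(x_t) = S_t$ and incurs no loss, so every mistake must occur in the majority-vote branch.

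The second step is to control the mistakes in the majority-vote branch by introducing the bookkeeping set $V_t^{\rm vis} := \{f \in \H : f(x_t) = y_t^{\rm vis}\}$. By set-realizability this set is non-empty (some $h \in H^\star$ witnesses $y_t^{\rm vis}$), and if a mistake occurs then $\hat y_t \neq y_t^{\rm vis}$ forces $|V_t^{\rm vis}| \le \lfloor n/2 \rfloor$, because otherwise $y_t^{\rm vis}$ would have been the strict majority label rather than $\hat y_t$.

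The main obstacle, and the only step that uses the pruning rule in a non-trivial way, is showing that the $V_t^{\rm vis}$'s are pairwise distinct across mistaken rounds. Suppose for contradiction that two mistaken rounds $t < t'$ satisfy $V_t^{\rm vis} = V_{t'}^{\rm vis}$. Every $H \in \tilde V_t$ contains, by the pruning rule, some $f \in V_t^{\rm vis}$; the assumed equality then gives $f(x_{t'}) = y_{t'}^{\rm vis}$, so by monotonicity $\tilde V_{t'-1} \subseteq \tilde V_t$ we obtain $y_{t'}^{\rm vis} \in H(x_{t'})$ for every $H \in \tilde V_{t'-1}$. The safe branch would then fire at $t'$ and predict $y_{t'}^{\rm vis} \in S_{t'}$, contradicting the fact that a mistake occurred. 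Each mistake therefore corresponds to a unique non-empty subset of $\H$ of cardinality at most $\lfloor n/2 \rfloor$, and counting these subsets yields the claimed bound $\sum_{i=1}^{\lfloor n/2 \rfloor} \binom{n}{i}$.
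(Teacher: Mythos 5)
Your proposal is correct and follows essentially the same route as the paper's own proof: the same two-branch algorithm (safe label if one is common to all surviving collections, otherwise majority vote over $\H$), the same pruning of the collection version space by $y_t^{\rm vis}$, the same bound $|V_t^{\rm vis}|\le\lfloor n/2\rfloor$ on mistaken rounds, and the same distinctness argument via monotonicity of $\tilde V_t$. Your explicit note that $V_t^{\rm vis}$ is non-empty (justifying the sum starting at $i=1$) is a small but welcome addition the paper leaves implicit.
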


\begin{proof}
    The upper bound on the minimax regret is obtained by upper bounding the regret of Algorithm~\ref{alg:cvspa}. Under set realizability, the comparator incurs zero loss, hence the regret equals the learner’s number of mistakes. We first notice that, under the set-realizable assumption, there exists a ground-truth function set $F^{\star}$ such that $F^{\star} \in \tilde{V}_t$ for every $t$. Indeed, $\tilde{V}_0 = \tilde{\mathcal H}$ contains $F^{\star}$ by the set-realizable assumption. Moreover, if $F^{\star} \in \tilde{V}_{t-1}$, then $y_t^{\rm vis} \in S_t = F^{\star}(x_t)$, and hence the update rule in Equation~\eqref{eq:collectionversionspaceupdate} implies that $F^{\star} \in \tilde{V}_t$. Therefore, by induction, $F^{\star} \in \tilde{V}_t$ for all $t$. Now, if there exists some $y \in \mathcal Y$ such that $y \in F(x_t)$ for every $F \in \tilde{V}_{t-1}$ in round $t$, then in particular $y \in F^{\star}(x_t) = S_t$, since $F^{\star} \in \tilde{V}_{t-1}$. Hence, predicting $y$ is correct. 
    
    Therefore, the learner can only make a mistake in rounds where no such $y$ exists, that is, the algorithm outputs $y$ outside the for-loop. To upper bound the regret, it therefore suffices to upper bound the number of such rounds. To this end, define
\begin{equation}
\label{eq:vvisdef}
V_t^{\rm vis} := \{f \in \mathcal H \mid f(x_t) = y_t^{\rm vis}\}.
\end{equation}
The key observation is that if the learner makes a mistake outside the for-loop in two rounds $t$ and $t'$ with $t < t'$, then $V_t^{\rm vis} \neq V_{t'}^{\rm vis}$. Once this is established, it remains to bound the number of distinct sets $V_t^{\rm vis}$, which yields the desired upper bound on the number of mistakes.

    To prove this, it suffices to prove that if $V^{\rm vis}_{t} = V^{\rm vis}_{t^{\prime}}$, then we have that $y^{\rm vis}_{t^{\prime}} \in F(x_{t^{\prime}})$ for all $F\in \tilde{V}_{t^{\prime}-1}$, which implies that the prediction at round $t^{\prime}$ will be made in the for-loop and will be correct.
    To prove this, we show that Equation~\eqref{eq:collectionversionspaceupdate} is equivalent to
    \begin{equation}
    \label{eq:vtintersect}
        \tilde V^{\prime}_t := \left\{ F \in \tilde V_{t-1} \mid F\ \bigcap\ V^{\rm vis}_t \neq \emptyset \right\},
    \end{equation}
    in other words, we aim to show that $\tilde V_t=\tilde V^{\prime}_t$. It suffices to prove both inclusions: a function set is in $\tilde V_t$ implies that it is also in $\tilde V^{\prime}_t$, and \textit{vice versa}. We begin with the first direction, and without loss of generality consider a function set $F\in \tilde V_t$. Equation~\ref{eq:collectionversionspaceupdate} implies that $y^{\rm vis}_t \in F(x_t)$, which means that $y^{\rm vis}_t = f(x_t) \in F(x_t)$ for some $f\in F$. According to Equation~\eqref{eq:vvisdef}, this implies that $f\in V^{\rm vis}_t$, which further gives that $F \cap V^{\rm vis}_t \supseteq \{f\} \neq \emptyset$. This fits the definition of $\tilde V^{\prime}_t$ (Equation~\eqref{eq:vtintersect}) and thus $F$ is in $\tilde V^{\prime}_t$. Next, we prove the reverse direction with an arbitrary $G\in \tilde V^{\prime}_t$. According to Equation~\eqref{eq:vtintersect}, we have that $G\ \cap\ V^{\rm vis}_t =: Q \neq \emptyset$. By Equation~\eqref{eq:vvisdef}, there exists at least one function $g\in G$ such that $g(x_t) = y^{\rm vis}_t$, which implies that $y^{\rm vis}_t = g(x_t) \in G(x_t)$. Thus, we have that $G\in \tilde V_t$. Combining the two directions, we prove that $\tilde V_t = \tilde V^{\prime}_t$. This means that the update rule is equivalent to Equation~\eqref{eq:vtintersect}. Next, we show that if $V^{\rm vis}_{t} = V^{\rm vis}_{t^{\prime}}$ for some $t<t'$, then the learner cannot make a mistake outside the for-loop in round $t'$. Since the collection version space is monotone decreasing over time, we have
\[
\tilde V_{t'-1} \subseteq \tilde V_t .
\]
Now take any $F \in \tilde V_{t'-1}$. Then $F \in \tilde V_t$, and since we have already proved that the update rule in Equation~\eqref{eq:collectionversionspaceupdate} is equivalent to
Equation~\eqref{eq:vtintersect},
it follows that
\[
F\cap V_t^{\rm vis}\neq\emptyset.
\]
If $V_t^{\rm vis}=V_{t'}^{\rm vis}$, then
\[
F\cap V_{t'}^{\rm vis}\neq\emptyset.
\]
Since $F\in \tilde V_{t'-1}$ was arbitrary, this shows that every $F\in \tilde V_{t'-1}$ satisfies
\[
F\cap V_{t'}^{\rm vis}\neq\emptyset.
\]
By the definition of $V_{t'}^{\rm vis}$, if $F\cap V_{t'}^{\rm vis}\neq\emptyset$, then there exists some $f\in F$ such that $f(x_{t'})=y_{t'}^{\rm vis}$, and hence
\[
y_{t'}^{\rm vis}\in F(x_{t'}).
\]
Therefore, $y_{t'}^{\rm vis}\in F(x_{t'})$ for every $F\in \tilde V_{t'-1}$. This means that in round $t'$, the prediction is made inside the for-loop, and since $F^\star\in \tilde V_{t'-1}$, the predicted label must belong to $F^\star(x_{t'})=S_{t'}$. Hence the learner is correct in round $t'$. We conclude that if the learner makes a mistake outside the for-loop in two rounds $t$ and $t'$ with $t<t'$, then necessarily
\[
V_t^{\rm vis}\neq V_{t'}^{\rm vis}.
\]
    
    Finally, consider any round $t$ in which the learner makes a prediction outside the for-loop and makes a mistake. In this case, the algorithm predicts a label $\hat y_t$ by taking a majority vote over the hypotheses in $\mathcal H$ at $x_t$. Since the learner is wrong, we must have $\hat y_t \neq y_t^{\rm vis}$. Therefore, the number of hypotheses $f\in\mathcal H$ such that $f(x_t)=y_t^{\rm vis}$ cannot exceed the number of hypotheses predicting $\hat y_t$. In particular,
\[
|V_t^{\rm vis}|
=
\left|\left\{f\in\mathcal H \mid f(x_t)=y_t^{\rm vis}\right\}\right|
\le \left\lfloor \frac{n}{2}\right\rfloor.
\]
Combining this with the argument above, each mistaken round outside the for-loop corresponds to a distinct subset $V_t^{\rm vis}\subseteq\mathcal H$ of size at most $\lfloor n/2\rfloor$, where $\abs{V_t^{\rm vis}} > 0$ as by the set-realizable assumption, there must exist an $F^{\star}\in \tilde V_{t-1}$ such that $y^{\rm vis}_t \in F^{\star}(x_t)$, so there exists an $f\in F^{\star} \subseteq \H$, such that $f(x_t) = y^{\rm vis}_t$ and therefore $\abs{V_t^{\rm vis}} > 0$. Hence, the total number of mistakes is at most the number of such subsets, namely
\[
\sum_{i=1}^{\lfloor n/2\rfloor}\binom{n}{i}.
\]
This completes the proof.
\end{proof}

\section{Proof of Learnability with Deterministic Learners}

\setcounter{theorem}{5}
\begin{theorem}[Deterministic Learnability Determined by PFLdim]
\label{thm:dldbp}
Fix a subset collection $\S(\Y)\subseteq \sigma(\Y)$ and a hypothesis class $\H\subseteq \Y^{\X}$. Under the set-realizable setting, it holds that
\[
R^{\star}_{\rm det}(T,\H)={\rm PFL}_T(\tilde{\H})
\qquad\text{for all } T\in\N_+.
\]
\end{theorem}

\begin{proof}
The proof consists of matching upper and lower bounds.

\paragraph{Upper bound}
Let $\A$ denote Algorithm~\ref{alg:dpfla} (DPFLA).
We show that for every horizon $T$, the regret of $\A$ satisfies
\[
R^{\star}_{\rm det}(T,\H)=\inf_{{\rm det\ } \A^{\prime} } R_{\A^{\prime}} (T,\H) \le R_{\A}(T,\H)\;\le\;{\rm PFL}_T(\tilde{\H}).
\]
The argument proceeds via the prefix-aware auxiliary dimension ${\rm PPFLdim}$.

Let $\x_{1:T}$, $\y^{\rm vis}_{1:T}$ be an arbitrary sequence given by the adversary against $\A$. Based on the definition of PPFLdim, we have that
\begin{equation}
\label{eq:ppflmax}
\begin{aligned}
    {\rm PPFL}_0(\x_{1:T},\hat\y_{1:T},\tilde{V}_T) &= \max_{ \substack{F\in\tilde V_T\\ F(x_i)\in \S(\Y), \ \forall i\in [T]}   } \sum^T_{j=1} \1 \left\{ \hat\y_j \notin F(x_j) \right\}
    \\ &=\max_{\substack{F\in\tilde\H \\ {\rm s.t.\ } y^{\vis}_i\in F(x_i)\in \S(\Y), \ \forall i\in [T]   }} \sum^T_{j=1} \1 \left\{ \hat\y_j \notin F(x_j) \right\},
\end{aligned}
\end{equation}
where $\hat\y_j$ is output by $\A$ based on $\x_{1:j}$ and $\y^{\rm vis}_{1:j-1}$; $\tilde V_T$ is based on $\x_{1:T}$ and $\y^{\rm vis}_{1:T}$ according to DPFLA. Also notice that
\begin{equation*}
    \begin{aligned}
        R^{\star}_{\rm det}(T,\H)&\le R_{\A} (T,\H) \\
        &= \max_{\x_{1:T},\y^{\vis}_{1:T}} \max_{\substack{F\in\tilde\H \\ {\rm s.t.\ } y^{\vis}_i\in F(x_i)\in \S(\Y), \ \forall i\in [T]   }} \sum^T_{j=1} \1 \left\{ \hat\y_j \notin F(x_j) \right\}\\
        &\overset{Eq.~\eqref{eq:ppflmax}}{\le} \max_{\x_{1:T},\y^{\vis}_{1:T}} {\rm PPFL}_0(\x_{1:T},\hat\y_{1:T},\tilde{V}_T)
    \end{aligned}
\end{equation*}

Thus, it suffices to show that for all sequences $\x_{1:T}$ and $\y^{\rm vis}_{1:T}$, the PPFL potential along the trajectory of DPFLA never exceeds the
initial PFL value:
\begin{equation}
\label{eq:ppfl_le_pfl}
{\rm PPFL}_{T-t}\big(\x_{1:t},\hat{\y}_{1:t},\tilde V_t\big)\;\overset{!}{\le}\;{\rm PFL}_T\big(\tilde V_0\big)
\;=\;{\rm PFL}_T\big(\tilde{\H}\big)\qquad\forall\, t\in\{0,1,\ldots,T\}.
\end{equation}

We prove Inequality~\eqref{eq:ppfl_le_pfl} by induction on $t$.
For $t=0$, the prefix is empty and ${\rm PPFL}_{T}(\x_{1:0},\hat{\y}_{1:0},\tilde V_0)={\rm PFL}_T(\tilde V_0)$ by
definition.
Assume Inequality~\eqref{eq:ppfl_le_pfl} holds at time $t-1$, and consider round $t$.
Suppose for contradiction that
\[
{\rm PPFL}_{T-t}\big(\x_{1:t},\hat{\y}_{1:t},\tilde V_t\big)\;>\;{\rm PFL}_T\big(\tilde V_0\big) \;\ge\; {\rm PPFL}_{T-t+1}\big(\x_{1:t-1},\hat{\y}_{1:t-1},\tilde V_{t-1}\big).
\]
We will show that this assumption enables us to construct a tree of depth $T-t+1$, which is $(\x_{1:t-1},\hat\y_{1:t-1} )$-prefix-$q$-shattered with respect to $\tilde V_{t-1}$ for
\begin{equation*}
    q \;\ge\; {\rm PPFL}_{T-t+1}\big(\x_{1:t-1},\hat{\y}_{1:t-1},\tilde V_{t-1}\big) + 1,
\end{equation*}
which according to the definition of PPFLdim gives us that 
\begin{equation}
\label{eq:desiredcontradiction}
    {\rm PPFL}_{T-t+1}\big(\x_{1:t-1},\hat{\y}_{1:t-1},\tilde V_{t-1}\big)\ge {\rm PPFL}_{T-t+1}\big(\x_{1:t-1},\hat{\y}_{1:t-1},\tilde V_{t-1}\big) + 1,
\end{equation}
and therefore a contradiction.

Now, let us get down to the construction. By the prediction strategy (Equation~\eqref{eq:choiceofdpfla}) of DPFLA, we have that

\begin{equation}
\label{eq:ppfl_jump}
\max_{\overline y^{\rm vis}_t\in Q_t}\;
{\rm PPFL}_{T-t}\!\left(\x_{1:t-1}\sqcup(x_t),\ \hat{\y}_{1:t-1}\sqcup(\overline y_t),\ \tilde V^{(x_t,\overline y^{\rm vis}_t)}_{t-1}\right)
\;\ge\;
{\rm PPFL}_{T-(t-1)}\!\left(\x_{1:t-1},\hat{\y}_{1:t-1},\tilde V_{t-1}\right)+1
\end{equation}
holds for each $\overline y_t\in\Y$. That is, for each $\overline y_t$, there exists a $\overline y^{\rm vis}_t$ and an $\X$-valued, $\Y$-ary full tree $\T_{\overline y_t}$ of depth $T-t$
that is $(\x_{1:t},\hat\y_{1:t-1}\sqcup (\overline y_t) )$-prefix-$q$-shattered with respect to $\tilde V^{(x_t,\overline y^{\rm vis}_t)}_{t-1}$ where 
\begin{equation}
\label{eq:qgreater}
    q \;\ge\;
{\rm PPFL}_{T-(t-1)}\!\left(\x_{1:t-1},\hat{\y}_{1:t-1},\tilde V_{t-1}\right)+1.
\end{equation}
That means there exists a sequence of annotation functions $(f^{\rm vis}_{\overline y_t,t+1},\ldots,f^{\rm vis}_{\overline y_t,T})$ such that for every continuation path $\y_{t+1:T}\in\Y^{T-t}$, there exists $F_{\uu}\in\tilde V^{(x_t,\overline y^{\rm vis}_t)}_{t-1}$, where $\uu := {\hat \y_{1:t-1} \sqcup (\overline y_t) \sqcup \y_{t+1:T}  }$ satisfying:

\begin{enumerate}
\item[(i)] For all $k\in\{t+1,\ldots,T\}$,
\[
f^{\rm vis}_{\overline y_t,k}(\hat\y_{1:t-1} \sqcup (\overline y_t) \sqcup \y_{t+1:k} ) \in F_{\uu}\!\big(\T_{\overline y_t}( \y_{t+1:k-1})\big)\in \S(\Y).
\]

\item[(ii)] The total number of indices on which the predicted label is not contained in the corresponding valid-label set, counting both the fixed history and the continuation, is at least $q$, i.e.,
\begin{equation*}
\begin{aligned}
&\Big|\{k\in[t-1]: \hat y_k\notin F_{\uu}(x_k)\}\Big|
\;+\; \Big| \overline{y}_t \notin F_{\uu}(x_t) \Big| \\ & \;+\;
\Big|\{j\in\{t+1,\ldots,T\}: y_j\notin F_{\uu  }\!\big(\T_{\overline y_t}(\y_{t+1:j-1})\big)\}\Big|
\;\ge\; q.
\end{aligned}
\end{equation*}
\end{enumerate}

Now, we are ready to construct our desired tree. We construct an $\X$-valued, $\Y$-ary full tree $\T$ of depth $T-t+1$ in the following way: First, we take $x_t$ as the root node, i.e., $\T(\emptyset) := x_t$. Then, for each of the outreaching edges $\overline y_t \in \Y$ ($\overline y_t$ was used to traverse all elements in $\Y$, so it is okay to use notation $\overline y_t$ instead of a plain $y$), we append the tree $\T_{\overline y_t}$ right after the edge $\overline y_t$. That is, $\T( (\overline y_t) \sqcup \y_{t+1:k}) := \T_{\overline y_t}(\y_{t+1:k})$. Next, we show that conditions (i) and (ii) both hold for $\T$ with prefix $(\x_{t-1},\hat \y_{t-1})$. 
Let $f^{\rm vis}_t(\hat \y_{1:t-1} \sqcup (\overline y_t)) := \overline y^{\rm vis}_t$ and $f^{\rm vis}_{k}(\hat \y_{1:t-1} \sqcup (\overline y_t) \sqcup \y_{t+1:k}) := f^{\rm vis}_{\overline y_t,k}(\hat \y_{1:t-1} \sqcup (\overline y_t) \sqcup \y_{t+1:k})$ for $k\ge t+1$.
Without loss of generality, we consider a continuation path $(\overline y_t) \sqcup \y_{t+1:T}\in \Y^{T-t+1}$. Note that 
\begin{equation}
\label{eq:felementof}
F_{\uu} \in \tilde V^{(x_t,\overline y^{\rm vis}_t)}_{t-1} \subseteq \tilde V_{t-1}, 
\end{equation}
and we have that

\begin{enumerate}
\item[(1a)] For all $k\in\{t+1,\ldots,T\}$,
\[
f^{\rm vis}_{k}(\hat\y_{1:t-1} \sqcup (\overline y_t) \sqcup \y_{t+1:k} ) \in F_{\uu}\!\big(\T_{\overline y_t}( \y_{t+1:k-1})\big) = F_{\uu}\!\big( \T( (\overline y_t) \sqcup \y_{t+1:k-1} ) \big) \in \S(\Y).
\]

\item[(1b)] For $k=t$,

\[
f^{\rm vis}_{k}(\hat\y_{1:t-1} \sqcup (\overline y_t) ) = \overline y^{\rm vis}_t \overset{(*)}{\in} F_{\uu}(x_t) = F_{\uu}\!\big(\T(\emptyset) \big) \in \S(\Y),
\]
where element-of relation (*) is from Equation~\eqref{eq:felementof}. Conditions (1a) and (1b) together complete the consistency requirement (condition~(i)).

\item[(2a)] For $k = t$, if $\overline y_k \notin F_{\uu}(x_k)$, then we have that $\overline y_k \notin  F_{\uu}(\T(\emptyset))$, as $\T(\emptyset) = x_k$, and \textit{vice versa}.

\item[(2b)] For $k \in \{t+1,\cdots,T\}$, if $y_k \notin F_{\uu}(\T_{\overline y_t}(\y_{t+1:k-1}))$, then we have that $y_k \notin F_{\uu}(\T( (\overline{y}_t) \sqcup \y_{t+1:k-1}))$, as $\T_{\overline y_t}(\y_{t+1:k-1}) = \T( (\overline{y}_t) \sqcup \y_{t+1:k-1})$, and \textit{vice versa}.

Put (2a) and (2b) together, and we have that 

\begin{equation*}
\begin{aligned}
&\Big|\{k\in[t-1]: \hat y_k\notin F_{\uu}(x_k)\}\Big|
\;+\; \Big| \overline{y}_t \notin F_{\uu}(\T(\emptyset)) \Big| \\ & \;+\;
\Big|\{j\in\{t+1,\ldots,T\}: y_j\notin F_{\uu  }\!\big(\T((\overline y_t)\sqcup\y_{t+1:j-1})\big)\}\Big|\\
&=\Big|\{k\in[t-1]: \hat y_k\notin F_{\uu}(x_k)\}\Big|
\;+\; \Big| \overline{y}_t \notin F_{\uu}(x_t) \Big| \\ & \;+\;
\Big|\{j\in\{t+1,\ldots,T\}: y_j\notin F_{\uu  }\!\big(\T_{\overline y_t}(\y_{t+1:j-1})\big)\}\Big|
\;\ge\; q.
\end{aligned}
\end{equation*}
This meets the non-membership requirement (condition (ii)).
\end{enumerate}
The above properties, consistency and non-membership, imply that $\T$ is $(\x_{1:t-1},\hat\y_{1:t-1} )$-prefix-$q$-shattered with respect to $\tilde V_{t-1}$. According to the definition of PPFLdim, we have that 
\begin{equation*}
\begin{aligned}
    {\rm PPFL}_{T-(t-1)}(\x_{1:t-1},\hat{\y}_{1:t-1},\tilde V_{t-1}) \ge q 
    \overset{Eq.~\eqref{eq:qgreater}}{\ge} {\rm PPFL}_{T-(t-1)}(\x_{1:t-1},\hat{\y}_{1:t-1},\tilde V_{t-1}) +1,
\end{aligned}
\end{equation*}
which resembles the desired contradiction (Equation~\eqref{eq:desiredcontradiction}). Therefore, Equation~\eqref{eq:ppfl_le_pfl} holds, which concludes our proof for the upper bound.

\paragraph{Lower bound.}
Let $q:={\rm PFL}_T(\tilde{\H})$. By definition of PFLdim, there exists a depth-$T$ $\Y$-ary tree $\T$ and visible-label
annotations $(f^{\rm vis}_1,\ldots,f^{\rm vis}_T)$ such that $\T$ is $q$-shattered by $\tilde{\H}$.
We describe an adversary strategy that forces any deterministic learner to incur at least $q$ mistakes.

At round $t=1$, the adversary presents $x_1=\T(\varnothing)$. After the learner predicts $\hat y_1$, the adversary reveals
$y^{\rm vis}_1 := f^{\rm vis}_1(\hat y_1)$.
Inductively, after $t-1$ rounds, the adversary has ensured that the instance shown at round $t$ is
\[
x_t=\T(\hat{\y}_{1:t-1}),
\]
i.e., the learner's prediction history indexes the current node. The learner outputs $\hat y_t$, and the adversary reveals
\[
y^{\rm vis}_t := f^{\rm vis}_t(\hat{\y}_{1:t}).
\]
After $T$ rounds, the realized interaction corresponds to a root-to-leaf path
$\hat{\y}_{1:T}\in\Y^T$ in $\T$.
By $q$-shattering, there exists a subset $F^\star\in \tilde{\H}$ such that for all $t\in[T]$,
$y^{\rm vis}_t\in F^\star(x_t)$ (so the interaction is set-realizable), and moreover along this same path there are at least $q$
indices $t$ for which
\[
\hat y_t \notin F^\star(x_t).
\]
Thus the learner makes at least $q$ mistakes, yielding the lower bound
$R^{\star}_{\rm det}(T,\H)\ge q={\rm PFL}_T(\tilde{\H})$.
Combining with the upper bound completes the proof.
\end{proof}

\section{Proof of an Improved Bound for Finite Hypothesis Spaces}

\setcounter{theorem}{20}

\begin{lemma}
\label{lemma:pfldimmono}
Fix $\H$. Then for any $d,d' \in \mathbb{N}$ with $d<d'$,
\begin{equation}
\label{eq:lemma21centralineq}
{\rm PFL}_d(\tilde\H)\le {\rm PFL}_{d'}(\tilde\H).
\end{equation}
\end{lemma}

\begin{proof}
Let $q := {\rm PFL}_d(\tilde\H)$. By the definition of ${\rm PFL}_d(\tilde\H)$, there exist an $\X$-valued, $\Y$-ary full tree $\T$ of depth $d$ and annotation functions
\[
f_1^{\rm vis}, \dots, f_d^{\rm vis},
\qquad
f_t^{\rm vis} : \Y^t \to \Y,
\]
such that $\T$ is $q$-shattered by $\tilde\H$.

We construct an $\X$-valued, $\Y$-ary full tree $\T'$ of depth $d'$ as follows:
\[
\T'(\y) :=
\begin{cases}
\T(\y), & \text{if } |\y| < d,\\[3pt]
\T(\emptyset), & \text{if } d \le |\y| < d'.
\end{cases}
\]
That is, below depth $d$, every newly appended node is labeled by the root instance $\T(\emptyset)$.

Next, define annotation functions
\[
g_1^{\rm vis}, \dots, g_{d'}^{\rm vis},
\qquad
g_t^{\rm vis} : \Y^t \to \Y,
\]
by
\[
g_t^{\rm vis}(\y_{1:t}) :=
\begin{cases}
f_t^{\rm vis}(\y_{1:t}), & \text{if } 1 \le t \le d,\\[3pt]
f_1^{\rm vis}(\y_1), & \text{if } d < t \le d'.
\end{cases}
\]

We claim that $\T'$ is also $q$-shattered by $\tilde\H$. Fix any root-to-leaf path $\y_{1:d'} \in \Y^{d'}$,
and let $\y_{1:d}$ be its prefix of length $d$. Since $\T$ is $q$-shattered by $\tilde\H$, there exists some $F_{\y_{1:d}} \in \tilde\H$ such that:

\begin{equation}
\label{eq:mono_pfla_cond1_prefix}
f_t^{\rm vis}(\y_{1:t}) \in F_{\y_{1:d}}(\T(\y_{1:t-1}))
\quad\text{and}\quad
F_{\y_{1:d}}(\T(\y_{1:t-1})) \in \mathcal{S}(\Y),
\qquad \forall t \in [d],
\end{equation}
and
\begin{equation}
\label{eq:mono_pfla_cond2_prefix}
\bigl| \{\, t \in [d] : y_t \notin F_{\y_{1:d}}(\T(\y_{1:t-1})) \,\} \bigr| \ge q.
\end{equation}

We show that the same $F_{\y_{1:d}}$ witnesses the full path $\y_{1:d'}$ in $\T'$. For $t \le d$, this follows directly from the construction of $\T'$ and $g_t^{\rm vis}$, since
\[
\T'(\y_{1:t-1}) = \T(\y_{1:t-1})
\quad\text{and}\quad
g_t^{\rm vis}(\y_{1:t}) = f_t^{\rm vis}(\y_{1:t}).
\]

Now consider any $t \in \{d+1,\dots,d'\}$. By construction,
\[
\T'(\y_{1:t-1}) = \T(\emptyset)
\quad\text{and}\quad
g_t^{\rm vis}(\y_{1:t}) = f_1^{\rm vis}((y_1)).
\]
On the other hand, by applying \eqref{eq:mono_pfla_cond1_prefix} at $t=1$, we have
\[
f_1^{\rm vis}((y_1)) \in F_{\y_{1:d}}(\T(\emptyset))
\quad\text{and}\quad
F_{\y_{1:d}}(\T(\emptyset)) \in \mathcal{S}(\Y).
\]
Hence,
\[
g_t^{\rm vis}(\y_{1:t}) \in F_{\y_{1:d}}(\T'(\y_{1:t-1}))
\quad\text{and}\quad
F_{\y_{1:d}}(\T'(\y_{1:t-1})) \in \mathcal{S}(\Y),
\qquad \forall t \in \{d+1,\dots,d'\}.
\]

Therefore, condition (1) in the definition of $q$-shattering holds for all $t \in [d']$. Moreover, condition (2) also holds, because the first $d$ rounds already contribute at least $q$ indices by \eqref{eq:mono_pfla_cond2_prefix}. Thus,
\[
\bigl| \{\, t \in [d'] : y_t \notin F_{\y_{1:d}}(\T'(\y_{1:t-1})) \,\} \bigr| \ge q.
\]

Since the choice of $y_{1:d'}$ was arbitrary, every root-to-leaf path of $\T'$ is $q$-witnessed by some element of $\tilde\H$. Hence $\T'$ is $q$-shattered by $\tilde\H$, and therefore
\[
{\rm PFL}_{d'}(\tilde\H) \ge q = {\rm PFL}_d(\tilde\H).
\]
This proves \eqref{eq:lemma21centralineq}.
\end{proof}

\begin{lemma}
\label{lemma:noviseqpred}
Fix $\X,\Y,\S(\Y)$ and $\H$. There exists an (unfull) rooted tree $\T'$ whose longest root-to-leaf path has length at most $T$ with annotation functions $f^{\rm vis}_t$ such that for every root-to-leaf path
$\y_{1:T'}=( y_1,\ldots, y_{T'})$ in $\T'$, there exists a set $F\in \tilde{\H}$ and satisfying
\[
f^{\rm vis}_t(\y_{1:t})\in F\!\left(\T'(\y_{1:t-1})\right)\quad\text{and}\quad
F_{\y_{1:d}}(\T^{\prime}(\y_{1:t-1})) \in \mathcal{S}(\Y),
\qquad  \forall\, t\in[T'],
\]
and along this path there are at least ${\rm PFL}_T(\tilde{\H})$ indices $t$ for which
\[
 y_t \notin F\!\left(\T'(\y_{1:t-1})\right).
\]
Moreover, every edge on $\T'$ satisfies $ y_t\neq f^{\rm vis}_t({\y}_{1:t})$.
\end{lemma}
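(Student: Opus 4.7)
The plan is to construct $\T'$ by pruning match edges from a PFL-shattering tree of depth $T$ and then handling, via a careful extension argument, the leaves introduced by this pruning.

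First, I would fix a depth-$T$ tree $\T$ together with annotations $f^{\rm vis}_{1:T}$ witnessing $q$-shattering of $\tilde{\H}$ for $q={\rm PFL}_T(\tilde{\H})$, and record the following basic fact: every mistake index of any PFL witness is automatically a non-match edge, because $f^{\rm vis}_t\in H(\T(\hat{\y}_{1:t-1}))$ and $\hat y_t\notin H(\T(\hat{\y}_{1:t-1}))$ together force $\hat y_t\neq f^{\rm vis}_t$. Hence pruning, at every internal node, exactly the edges satisfying $\hat y_t=f^{\rm vis}_t(\hat{\y}_{1:t})$ cannot destroy any of the mistakes promised by PFL. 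Let $\T'$ be the resulting tree, with instance labels and annotations inherited from $\T$; the ``every edge non-match'' property then holds by construction, and the depth is at most $T$.

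For any root-to-leaf path in $\T'$ that survives to full depth $T'=T$, the PFL-shattering of $\T$ directly yields a witness $H$ with $q$ mistakes along $\hat{\y}_{1:T}$. The delicate case is $T'<T$, where the path terminates at a node $v$ whose children in $\T$ at depth $T'+1$ are \emph{all} match, i.e., $f^{\rm vis}_{T'+1}(\hat{\y}_{1:T'},y)=y$ for every $y\in\Y$. My plan is to extend $\hat{\y}_{1:T'}$ to a depth-$T$ path in $\T$ that contributes no new mistake, so that the PFL witness for the extended path places all of its $\geq q$ mistakes inside $[T']$. The first extension step from $v$ is automatically match and hence not a mistake for any consistent witness; the aim is to keep choosing match edges at each deeper node, because each such step also cannot be a mistake.

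The main obstacle is ensuring this match-only extension is always available below $v$: a descendant node may have no match child, at which point the extension must traverse a non-match edge that could, for the witness supplied by PFL, contribute a mistake outside the prefix and drop the in-prefix mistake count below $q$. Two routes I would try are (i) a preliminary canonicalization of $\T$ showing one may always choose the PFL-shattering tree so that the subtree below any all-match node is itself all-match, exploiting the freedom to reselect annotations and witnesses in a region where mistakes are already forbidden; or (ii) a strong induction on $T$ that, when the root is all-match, skips it and applies the inductive hypothesis to the depth-$(T-1)$ subtree whose PFL is still $q$, and when the root admits a non-match edge, attaches that edge together with an inductively built subtree. The crux in route (ii) is that the inductive subtree may only certify $q-1$ mistakes per path, so one must argue that for each non-match root edge the witness for every path can be chosen in the restricted family that makes the root edge itself a mistake, bringing the total back up to $q$; pinning down this pigeonhole argument uniformly across all paths will likely be the key technical point.
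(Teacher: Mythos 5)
Your opening observation is exactly the one the paper's proof relies on: a mistake index for any PFL witness must be a non-match edge, since $f^{\rm vis}_t(\hat\y_{1:t})\in H(\T(\hat\y_{1:t-1}))$ together with $\hat y_t\notin H(\T(\hat\y_{1:t-1}))$ forces $\hat y_t\neq f^{\rm vis}_t(\hat\y_{1:t})$. But the pruning operation you build on it is the wrong one, and the gap you flag at the end is real and unresolved. You delete a match edge \emph{together with the entire subtree below it}, which truncates paths and leaves you having to argue that a truncated path $\hat\y_{1:T'}$ with $T'<T$ still carries ${\rm PFL}_T(\tilde{\H})$ mistakes. As you yourself note, this requires extending the path to depth $T$ through match-only edges so that the witness's mistakes all land inside $[T']$, and such an extension need not exist: a descendant of the truncation node may have no match child, and the forced non-match step there can absorb one of the witness's $q$ mistakes, dropping the in-prefix count below $q$. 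Neither of your two proposed repairs is carried out: route (i) (canonicalizing so that the subtree below an all-match node is itself all-match) is asserted as a hope rather than proved and is not obviously true, and route (ii) leaves its key pigeonhole step open by your own admission. So the proposal does not constitute a proof.

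The fix is a different pruning primitive, which is what the paper uses: when an edge with $y_t=f^{\rm vis}_t(\y_{1:t})$ is found, do not discard the subtree below it --- \emph{splice} it, i.e., delete only the matched child node and its incident edge and reattach the whole subtree rooted at that child directly to the parent, then iterate until no match edge remains under the re-indexed annotations. Under this contraction no root-to-leaf path is ever truncated: each path of the pruned tree lifts to a unique full-depth path of $\T$ by reinserting the contracted match indices, and the witness $H$ for that lifted path still certifies $f^{\rm vis}_t(\cdot)\in H(\cdot)$ on every surviving index and still has at least ${\rm PFL}_T(\tilde{\H})$ mistakes, all of which occur on surviving non-match edges by your own first observation. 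The depth can only shrink and every remaining edge is non-match by construction, so the all-match-leaf case --- precisely the case your argument cannot close --- never arises.
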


\begin{proof}
Let $q:={\rm PFL}_T(\tilde{\H})$. By definition, there exists a \emph{full} depth-$T$ tree $\T$ that is $q$-shattered by $\tilde{\H}$.
We construct $\T'$ from $\T$ by repeatedly applying the following pruning operation.

\smallskip\noindent
\emph{Pruning operation.}
Fix any internal node $\T(\y_{1:t-1})$ corresponding to a prefix $\y_{1:t-1}$.
If there exists a child edge labeled by some $y_t$ such that
\begin{equation}
\label{eq:contractible}
y_t=f^{\rm vis}_t(\y_{1:t}),
\end{equation}
then delete the whole subtree starting from the root node $\T(\y_{1:t-1})$, and splice the entire subtree rooted at $\T(\y_{1:t})$ into the edge incident to $\T(\y_{1:t-1})$ (i.e., the edge connecting $\T(\y_{1:t-1})$ to its parent), thereby bypassing $\T(\y_{1:t-1})$. The output of each updated annotation functions $f^{\rm vis}_k$ at every prefix path in this spliced subtree is set to be the one of $f^{\rm vis}_{k+1}$ at the original prefix. Whenever given a node that has multiple child edges satisfying Equation~\eqref{eq:contractible}, we conduct the above operation on arbitrary one to break tie.
Repeat this operation until no such edge remains anywhere in the tree; denote the resulting (possibly unfull) tree by $\T'$.

\smallskip\noindent
We now verify that $\T'$ satisfies the required properties.

\smallskip\noindent
\emph{(i) Edge condition.}
By construction, every remaining edge on $\T'$ satisfies $ y_t\neq f^{\rm vis}_t({\y}_{1:t})$.

\smallskip\noindent
\emph{(ii) The witness conditions $f^{\rm vis}_t(\cdot)\in F(\cdot)$ and $F(\cdot)\in \S(\Y)$ are preserved.}
Consider any root-to-leaf path ${\y}_{1:T'}$ in $\T'$.
This path corresponds to a subsequence of some root-to-leaf path $\y_{1:T}$ in the original full tree $\T$, obtained by deleting the pruned indices where $y_t=f^{\rm vis}_t(\y_{1:t})$.
Let $F\in \tilde{\H}$ be the function set guaranteed by the $q$-shattering of $\T$ for the original path $\y_{1:T}$.
Since the pruning only removes intermediate nodes/levels and does not change the labels assigned to the surviving nodes, the same $F$ continues to satisfy
$f^{\rm vis}_t({\y}_{1:t})\in F(\T'({\y}_{1:t-1}))$ and $F(\T'({\y}_{1:t-1}))\in \S(\Y)$ for all $t\in[T']$.

\smallskip\noindent
\emph{(iii) The number of ``mistake'' indices does not decrease.}
On any pruned index $t$, we had $y_t=f^{\rm vis}_t(\y_{1:t})$.
In particular, along such an edge we cannot have $y_t\notin F(\T(\y_{1:t-1}))$ (since $f^{\rm vis}_t(\y_{1:t})\in F(\T(\y_{1:t-1}))$ on every shattered path).
Hence removing these indices does not remove any step counted in
$\{t:\ y_t\notin F(\T(\y_{1:t-1}))\}$.
Therefore, for every path in $\T'$, the number of indices $t$ with
$ y_t\notin F(\T'({\y}_{1:t-1}))$ is equal to the corresponding number for the original path in $\T$, and thus at least $q$.

\smallskip
Finally, since we only delete levels, the longest root-to-leaf path in $\T'$ has length at most $T$.
This completes the proof.
\end{proof}

\setcounter{theorem}{8}

\begin{theorem}[Improved Bound for the Finite Case]
\label{thm:finitecaseimprovedbound}
    Under partial feedback and set-realizable settings, let $\H \in \Y^{\X}$ denote the hypothesis class with $\abs{\H}=n<\infty$ and $\abs{\Y} = 2$. Then, $\H$ is online learnable under deterministic algorithms with minimax regret no more than $n$.
\end{theorem}

\begin{proof}
Since $\S(\Y)\subseteq \PP(\Y)$, and $\S(\Y)$ determines the scope of valid function sets, the minimax regret is obtained when $\S(\Y)$ is maximal, i.e., $\S(\Y)=\PP(\Y)$. By Theorem~\ref{thm:dldbp}, it suffices to upper bound ${\rm PFL}_T(\tilde{\H})$ for every horizon $T$.
Fix $T$ and let
\[
q := {\rm PFL}_T(\tilde{\H}).
\]
By Lemma~\ref{lemma:noviseqpred}, there exists an (possibly unfull) $\X$-valued, $\Y$-ary tree $\T$ of depth at most $T$
that is $q$-shattered by $\tilde{\H}$, witnessed by visible-label annotation functions
$(f^{\rm vis}_1,\ldots,f^{\rm vis}_{T^{\prime}})$, and moreover along every root-to-leaf path
$\y_{1:T'}=( y_1,\ldots, y_{T'})$ of $\T$ (with $T'\le T$) we have the constraint
\[
 y_t \neq f^{\rm vis}_t(\y_{1:t})\qquad \forall\, t\in[T'].
\]

\noindent Recall that we call a root-to-leaf path $\y_{1:T'}$ $q$-witnessed by a subset $F\in \tilde{\H}$ if
(i) for all $t\in[T']$,
\[
f^{\rm vis}_t(\y_{1:t}) \in F\big(\T(\y_{1:t-1})\big),
\]
and (ii) along this same path there are at least $q$ indices $t\in[T']$ such that
\[
 y_t \notin F\big(\T(\y_{1:t-1})\big).
\]
Since $\T$ is $q$-shattered by $\tilde{\H}$, every root-to-leaf path of $\T$ is $q$-witnessed by some $F\in \tilde{\H}$.

Next we upper bound, for a fixed $F\in \tilde{\H}$, how many depth-$T$ paths it can $q$-witness when $|\Y|=2$.
First, complete $\T$ to a full depth-$T$ binary tree by attaching full binary subtrees below any leaf at depth $T'<T$ as we did in the proof of Lemma~\ref{lemma:pfldimmono}. According to that proof, the completed tree $\T$ is also $q$-shattered by $\tilde \H$, and the non-memberships occur in but not limited to the same positions in the original $\T$. In what follows, we only consider the non-memberships that show up in such positions. Since they limit the number of branches that a function set can witness, counting less such non-memberships exaggerates the number of paths that the function set can witness, so the bound is eligible.

Now fix $F\in \tilde{\H}$ and consider any depth-$T$ path $\y_{1:T}$ that is $q$-witnessed by $F$.
On each of the at least $q$ rounds where $ y_t \notin F(\T(\y_{1:t-1}))$, the next edge choice is \emph{forced}:
because $|\Y|=2$, it holds that $F(\T(\y_{1:t-1})) = \{y_t^{\prime}\}:= \Y\setminus\{ y_t\}$, and $F$ cannot witness the path extended along $\y_{1:t-1} \sqcup ( y_t^{\prime} )$, as it does not obey the constraint
$y^{\prime}_t\neq f^{\rm vis}_t(\y_{1:t-1}\sqcup (y^{\prime}_t))$, where $f^{\rm vis}_t(\y_{1:t-1}\sqcup (y^{\prime}_t))$ must equal to $y^{\prime}_t$, the only element that $F$ outputs at $\T(\y_{1:t-1})$.
Therefore, to keep $F$ consistent with the revealed labels, the path cannot branch arbitrarily at these $q$ rounds.
Only on the remaining at most $T-q$ rounds can the path have up to two choices.
Hence the total number of depth-$T$ paths that a fixed $F$ can $q$-witness is at most
\[
2^{\,T-q}.
\]

Finally, there are $2^n$ function subsets $F\in \tilde{\H}$. Since every depth-$T$ path must be $q$-witnessed by at least one subset, we obtain
\[
2^T \ \le\ \ 2^n\cdot 2^{T-q},
\]
which implies $q\le n$. Since $T$ was arbitrary, this shows ${\rm PFL}_T(\tilde{\H})\le n$ for all $T$, and therefore
the minimax regret is at most $n$.
\end{proof}

\section{Proof of Bounds between Combinatorial Dimensions}

\label{sec:app_rolp}

\begin{algorithm}[t]
\caption{Adversary for Example~\ref{example:relationtomulticlass_app}}
\label{alg:aaer}
\SetKwInOut{Initialize}{Initialize}

\Initialize{
$\tilde Q_0\gets()$, $\hat Q_0\gets()$, $\overline Q_0\gets\emptyset$, $X_0\gets\X$
}

\For{$t=1,2,\ldots,T$}{
  Choose any $x_t\in X_{t-1}$ and show it to the learner;
  
  Learner outputs $\hat{y}_t\in\Y$;

  \For{$j=1,2,\ldots,|\tilde Q_{t-1}|$}{
    Let $(f_1,f_2)\gets(\tilde Q_{t-1})_j$ and $(c_1,c_2)\gets(\hat Q_{t-1})_j$;
    
    \If{$f_1(x_t)=\hat{y}_t$}{$c_1\gets c_1+1$;}
    
    \If{$f_2(x_t)=\hat{y}_t$}{$c_2\gets c_2+1$;}
    
    Update $(\hat Q_{t-1})_j\gets(c_1,c_2)$;
  }

  $S'_t \gets \{f\in\H:\ f(x_t)=\hat{y}_t\}$;
  
  $\overline Q_t \gets \overline Q_{t-1}\cup\{S'_t\}$;

  $W_t \gets (-1,1)\setminus\big(\tilde Q_{t-1}(x_t)\cup \overline Q_t(x_t)\big)$;
  
  Reveal any $y_t^{\rm vis}\in W_t$ to the learner;

  $S_t \gets \{f\in\H:\ f(x_t)=y_t^{\rm vis}\}$;

  Convert $S_t$ into an ordered pair $(f_1,f_2)$;
  
  $\tilde Q_t \gets ( (f_1,f_2),\ \tilde Q_{t-1})$;
  
  $\hat Q_t \gets ( (0,0),\ \hat Q_{t-1})$;

  $X_t \gets X_{t-1}\setminus\{x\in X_{t-1}:\ \exists f\neq f'\ \text{with } f,f'\in\cup_{Q\in\tilde Q_t} Q,\ f(x)=f'(x)\}$\;
}

\For{$j=1,2,\ldots,|\tilde Q_T|$}{
  Let $(f_1,f_2)\gets(\tilde Q_T)_j$ and $(c_1,c_2)\gets(\hat Q_T)_j$;
  
  Add $f_1$ to the ground-truth subset if $c_1\le c_2$, else add $f_2$;
}
\end{algorithm}

In this section, we compare PFLdim with previous combinatorial dimensions. The comparisons are conducted under the same $\X$, $\Y$, and $\H$. The choice of $\S(\Y)$ is detailed in each comparison. Let us start with PFLdim vs. MLdim (Ldim).

\setcounter{theorem}{9}

\begin{theorem}[Partial Feedback vs. Multiclass]
\label{thm:pfm}
    Let $\X$ and $\Y$ be the instance and label spaces, $\H$ be the hypothesis space, $\S_{\rm pf}(\Y)$ and $\S_{\rm mc}(\Y)$ be the subset collections for partial-feedback and multiclass, with $\S_{\rm mc}(\Y)\subseteq \S_{\rm pf}(\Y)$. It holds that ${\rm MLdim}(\H)\le {\rm PFLdim}(\tilde{\H})$. Meanwhile, there exist at least one group of $\X$, $\Y$, and $\H$, such that $\H$ is multiclass, but not partial-feedback online learnable when $\S_{\rm mc}(\Y)\subset \S_{\rm pf}(\Y)$.
\end{theorem}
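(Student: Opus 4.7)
The plan is to establish the inequality by a direct embedding of multiclass shattering trees into partial-feedback shattering trees, and to establish the strict separation by analyzing the sine class of Example~\ref{example:relationtomulticlass_app} together with the adversary strategy of Algorithm~\ref{alg:aaer}.

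For the inequality ${\rm MLdim}(\H)\le{\rm PFLdim}(\tilde{\H})$, I would take any depth-$d$ tree $\T$ witnessing ${\rm MLdim}(\H)=d$ with singleton annotations $f_t(\y_{1:t})=\{a_t(\y_{1:t})\}$ and per-path witnesses $h_{\y_{1:d}}\in\H$, and then reuse the same $\T$ on the partial-feedback side by setting $f^{\rm vis}_t(\y_{1:t}):=a_t(\y_{1:t})$ and, for each root-to-leaf path $\y_{1:d}$, choosing the singleton collection $H_{\y_{1:d}}:=\{h_{\y_{1:d}}\}$. Because $\S_{\rm mc}(\Y)\subseteq\S_{\rm pf}(\Y)$ every singleton is admissible, so $H_{\y_{1:d}}\in\tilde{\H}$. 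The two multiclass conditions $h_{\y_{1:d}}(\T(\y_{1:t-1}))\in f_t(\y_{1:t})$ and $y_t\notin f_t(\y_{1:t})$ translate term-for-term into the two PFLdim conditions, and since every one of the $d$ rounds satisfies $y_t\notin H_{\y_{1:d}}(\T(\y_{1:t-1}))$, the achieved shattering level is $q=d$, which gives ${\rm PFLdim}(\tilde{\H})\ge d$.

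For the strict separation, on the multiclass side I would observe that for any $x\in[-\pi,\pi]$ and any revealed $y\in(-1,1)$, the equation $\sin(x+c)=y$ has at most two solutions $c\in[0,2\pi)$, so one labeled round shrinks the multiclass version space to a set of size at most two, and a second round at a generic $x'$ leaves a unique $c$; hence ${\rm MLdim}(\H)=O(1)$ and $\H$ is multiclass online learnable.

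For the partial-feedback non-learnability I would run Algorithm~\ref{alg:aaer} as the adversary: at each round it queries an $x_t$ from the shrinking set $X_{t-1}$ (designed so that every accumulated pair-member remains distinguishable on future queries), reveals $y^{\rm vis}_t$ from the complement $W_t$ of $\tilde Q_{t-1}(x_t)\cup\overline Q_t(x_t)$, and enqueues the unique pair of sine functions with $\sin(x_t+c)=y^{\rm vis}_t$. After $T$ rounds the ground-truth collection $H^\star$ is assembled by picking from each queued pair the function whose counter is smaller, which certifies set-realizability because the chosen $f^\star_t$ satisfies $f^\star_t(x_t)=y^{\rm vis}_t\in S_t$. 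Two things must be verified: (a) $W_t\ne\emptyset$ at every round, which is immediate since $|\tilde Q_{t-1}(x_t)\cup\overline Q_t(x_t)|=O(t)$ while $\Y=[-1,1]$ is a continuum; and (b) the realized number of mistakes is at least $T/2$. Step (b) is the main obstacle: by construction $\hat y_t\ne y^{\rm vis}_t$, so a ``hit'' $\hat y_t=f^\star_s(x_t)$ requires $s\ne t$ and increments the counter of exactly one earlier pair, so by the min-counter selection rule at most half of the total counter mass survives into $H^\star$, giving $\sum_t\1(\hat y_t\in S_t)\le T/2$. The shrinkage step $X_t\leftarrow X_{t-1}\setminus\{x:\exists f\ne f'\in\bigcup_{Q\in\tilde Q_t}Q,\ f(x)=f'(x)\}$ is precisely what makes the counters well-defined---it ensures the two members of every pair disagree on all later instances---and together this yields ${\rm PFL}_T(\tilde{\H})\ge T/2$, proving linear regret and completing the separation.
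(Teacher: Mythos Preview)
Your approach is the paper's: the singleton-collection embedding for the inequality, and for the separation the sine class with Algorithm~\ref{alg:aaer}, arguing multiclass learnability via the two-solution property and partial-feedback unlearnability via the counter bookkeeping and min-counter selection.

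Two points in part (b) should be tightened. First, your English summary of the $X_t$ shrinkage (``the two members of every pair disagree on all later instances'') is weaker than the set-builder you correctly quote: what is actually needed is that \emph{all} functions across \emph{all} accumulated pairs in $\bigcup_{Q\in\tilde Q_t}Q$ are pairwise separated at future instances. This stronger property is what gives ``at most one slot increments per round,'' hence total counter mass $r(T)\le T$, which your half-mass argument then converts to $\sum_t\1(\hat y_t\in S_t)\le T/2$. Second, you rule out a hit with $s=t$ (via $\hat y_t\neq y^{\rm vis}_t$) but not with $s>t$, and without this the counter argument does not close: a hit from a \emph{future} pair would never be charged to any counter. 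This is exactly where the $\overline Q_t$ bookkeeping enters. Since $y^{\rm vis}_j\in W_j$ excludes $\overline Q_j(x_j)$ and $S'_\tau=\{f:f(x_\tau)=\hat y_\tau\}\in\overline Q_j$ for every $\tau\le j$, any function $f$ with $f(x_j)=y^{\rm vis}_j$ (in particular $f^*_j$) must satisfy $f(x_\tau)\neq\hat y_\tau$ for all $\tau\le j$; hence no pair created at round $j$ can retroactively match an earlier prediction, forcing $s<t$ for every hit. Finally, you verify $W_t\neq\emptyset$ but should also note $X_t\neq\emptyset$ (same reasoning: each round removes only finitely many sine-intersection points from a continuum).
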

\begin{proof}
We prove the two statements separately.

\smallskip\noindent
\textbf{(1) $\rm MLdim(\H)\le \rm PFLdim(\tilde{\H})$.}
Fix any depth-$T$ tree $\T$ that is shattered by $\H$ in the multiclass sense (with feedback system $\S_{\rm mc}(\Y)$).
For each root-to-leaf path, let $h\in\H$ be the corresponding multiclass witness hypothesis.
In the partial-feedback shattering definition, assign the ground-truth function subset on that path to be the singleton set $\{h\}$.
This assignment is feasible since $\S_{\rm mc}(\Y)\subseteq \S_{\rm pf}(\Y)$, hence the same visible feedback used in multiclass shattering is also admissible under partial feedback.
Therefore $\T$ is also $T$-shattered by $\tilde{\H}$, implying ${\rm MLdim}(\H)\le {\rm PFLdim}(\tilde{\H})$.
Together with Theorem~\ref{thm:dldbp}, this yields the first half of the theorem.

\smallskip\noindent
\textbf{(2) Strict separation when $\S_{\rm mc}(\Y)\subset \S_{\rm pf}(\Y)$.}
We prove the second half by the following example.

\setcounter{theorem}{20}

\begin{example}\label{example:relationtomulticlass_app}
Let $\X=[-\pi,\pi]$, $\Y=[-1,1]$, $\S_{\rm pf}(\Y)=\sigma(\S_{\rm mc}(\Y))$, and
\[
\H=\{f:\X\to\Y \mid f(x)=\sin(x+c),\ c\in[0,2\pi)\}.
\]
Then $\H$ is multiclass online learnable, but not partial-feedback online learnable in the deterministic set-realizable setting.

\smallskip\noindent
\emph{Multiclass learnability.}
Fix any round and any instance $x\in\X$.
After the learner predicts, the adversary reveals the full label $y\in\Y$.
For this fixed $(x,y)$, the equation $\sin(x+c)=y$ has at most two solutions in $c\in[0,2\pi)$.
Hence there are at most two hypotheses in $\H$ consistent with $(x,y)$, so the version space shrinks to a finite class of size at most $2$ after one labeled example.
A finite multiclass hypothesis class is online learnable with a constant mistake bound, therefore $\H$ is multiclass online learnable.

\smallskip\noindent
\emph{Partial-feedback non-learnability.}
We use the adversary in Algorithm~\ref{alg:aaer}.
It suffices to show two properties:

\smallskip\noindent
\emph{(i) At most one slot increases per round.}
The adversary maintains a multiset of ``slots'' $\hat Q_t$ indexed by the candidate subsets in $\tilde Q_t$ (as in Algorithm~\ref{alg:aaer}).
At round $t$, the instance $x_t$ is chosen from an available set $X_{t-1}$, and then $X_t$ is updated by removing every $x\in X_{t-1}$ for which there exist
$Q,Q'\in \tilde Q_{t}$, and $f\in Q$, $f'\in Q'$ such that $f(x)=f'(x)$.
Consequently, for the prediction value $\hat{y}_t$, there can be at most one subset $Q\in\tilde Q_{t-1}$ that contains some $f$ with
$f(x_t)=\hat{y}_t$; otherwise two distinct subsets would ``collide'' at $x_t$, contradicting the construction of $X_t$.
Therefore, when updating $\hat Q_t$, at most one slot is incremented in each round.

\smallskip\noindent
\emph{(ii) The available instance set never becomes empty.}
For any fixed distinct pair $f(x)=\sin(x+c)$ and $f'(x)=\sin(x+c')$ with $c\neq c'$, the equation $f(x)=f'(x)$ has only finitely many solutions on $[-\pi,\pi]$
(in fact, at most two whenever the two values are not simultaneously $\pm1$).
In Algorithm~\ref{alg:aaer}, at round $t$ we add at most two new hypotheses to the bookkeeping (those satisfying $f(x_t)=y_t^{\rm vis}$), hence the update from
$X_{t-1}$ to $X_t$ removes only finitely many points from $X_{t-1}$.
Since $X_0=\X$ is infinite, it follows inductively that $X_t$ remains infinite (hence nonempty) for all $t$.

\smallskip
Given (i) and the fact that the revealed labels is from $W_t := (-1,1)\setminus\big(\tilde Q_{t-1}(x_t)\cup \overline Q_t(x_t)\big)$, the total number of ``matches'' between the learner's predictions and the hypotheses recorded in the slots equals
\[
r(T):=\sum_{Q\in\hat Q_T} \text{(slot count of $Q$)}\le T.
\]
At the end, the adversary selects in each $Q\in\tilde Q_T$ a ground-truth hypothesis with \emph{minimum} slot count inside that subset (as in Algorithm~\ref{alg:aaer}),
so the learner's disagreement on rounds assigned to that subset is at least half of the total slot count contributed by that subset, hence at least $r(T)/2$ overall.
On the remaining $T-r(T)$ rounds where the learner's prediction matches no recorded subset, the learner is naturally wrong.
Therefore the regret is at least
\[
\frac{r(T)}{2}+(T-r(T)) \;=\; T-\frac{r(T)}{2}\;\ge\;\frac{T}{2},
\]
which is linear in $T$ for every deterministic learner. Hence $\H$ is not partial-feedback online learnable.
\end{example}
This completes the proof of the theorem. 
\end{proof}
The above example is also potentially relative to the applications in signal processing. Under the set-valued setting, it is easy to construct cases which force that the hypothesis class is learnable under partial-feedback, but not under set-valued setting. This is because when instance space $\X$ has only one element, by set-realizable setting, any reveal labels on this instance will be correct in each future round. This guarantees that the learner predicts correct in each of the following round as long as the learner outputs the revealed label shown in the first round. In contrast, this does not put much restriction under the set-valued setting. Just let $\Y$ have infinite elements, $\S(\Y) := \PP(\Y)$, and $\H$ consists of constant function that outputs each of elements in $\Y$. It is not hard to see that such configuration is not learnable under set-valued setting. Conversely, there also exist cases where the learnability holds in partial-feedback, but not in the set-valued setting. Although it seems that there are no obvious bound between these two settings, we manage to identify that when the ratio $\frac{{\rm PFL}_d(\tilde{\H})}{d}$ approaches $1$ as $d\to \infty$, and $\S(\Y)$ satisfies certain constraint, the two learning problems are both unlearnable. We also show that the constraint on $\S(\Y)$ is necessary in the sense that there exists a case where the learnability holds in partial-feedback, but not in the set-valued setting, when this constraint is not satisfied.

\setcounter{theorem}{10}
\begin{theorem}[Partial Feedback vs. Set-Value]
\label{thm:pfsv}
Fix instance/label spaces $\X,\Y$, a hypothesis class $\H$, and assume
$\S_{\rm pf}(\Y)\subseteq \S_{\rm sv}(\Y)$ and that $\S_{\rm sv}(\Y)$ is closed under arbitrary unions
(i.e., for any $\{A_i\}_{i\in I}\subseteq \S_{\rm sv}(\Y)$ we have $\bigcup_{i\in I} A_i\in \S_{\rm sv}(\Y)$).
If
\begin{equation}\label{eq:pfsv_ratio}
\frac{{\rm PFL}_d(\tilde{\H})}{d}\longrightarrow 1
\qquad\text{as }d\to\infty,
\end{equation}
then ${\rm SL}(\H) = \infty$. If $\S_{\rm sv}(\Y)$ is not union-closed, the above implication may fail: there exist $\X,\Y,\H$ and subset collection with $\S_{\rm pf}\subseteq\S_{\rm sv}$ such that Inequality~\eqref{eq:pfsv_ratio} holds, yet ${\rm SL}(\H) < \infty$.
\end{theorem}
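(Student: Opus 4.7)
The plan is to handle the two claims separately. The forward implication (both problems unlearnable under union-closure) decomposes into a partial-feedback part, which is essentially a corollary of the tools already in hand, and a set-valued part, where union-closure of $\S_{\rm sv}(\Y)$ is used crucially; the negative part is then exhibited by an explicit counterexample.

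For partial-feedback unlearnability, the argument is immediate: by Corollary~\ref{cor:dldbpcor}, $\H$ is deterministically partial-feedback learnable iff $\mathrm{PFL}_d(\tilde{\H}) = o(d)$, and by Theorem~\ref{thm:dldbp} the minimax deterministic regret at horizon $T$ equals $\mathrm{PFL}_T(\tilde{\H})$. The assumption $\mathrm{PFL}_d(\tilde{\H})/d \to 1$ gives $\mathrm{PFL}_T(\tilde{\H}) = T - o(T)$, so the regret is linear in $T$.

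For set-valued unlearnability, I would show $\mathrm{SLdim}(\H) = \infty$ relative to $\S_{\rm sv}(\Y)$, which by \citet{onlinelearningsetvaluefeedback} precludes sublinear regret. Starting from a PFL $q$-shattered tree $\T$ of depth $d$, with annotations $f^{\rm vis}_t$ and witnesses $\{H_{\y_{1:d}}\} \subseteq \tilde{\H}$, I plan to extract a sub-tree $\T''$ of $\T$ (by a level-by-level pruning/splicing procedure in the spirit of Lemma~\ref{lemma:noviseqpred}) in which every root-to-leaf path is \emph{all-mistake}, i.e., $y_t \notin H_{\y_{1:d}}(\T''(\y_{1:t-1}))$ at every depth $t$. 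On $\T''$ I then define the SL annotations by
\[
f_t(\y_{1:t}) := \bigcup_{\y_{t+1:d''} \in \Y^{d''-t}} H_{\y_{1:d''}}\!\left(\T''(\y_{1:t-1})\right).
\]
Union-closure of $\S_{\rm sv}(\Y)$ together with $H(\cdot) \in \S_{\rm pf}(\Y) \subseteq \S_{\rm sv}(\Y)$ guarantees $f_t(\y_{1:t}) \in \S_{\rm sv}(\Y)$, while the all-mistake property ensures $y_t \notin f_t(\y_{1:t})$. For each path $\y_{1:d''}$, any $h \in H_{\y_{1:d''}}$ is a valid single-function SL witness because $h(\T''(\y_{1:t-1})) \in H_{\y_{1:d''}}(\T''(\y_{1:t-1})) \subseteq f_t(\y_{1:t})$. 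Since $\mathrm{PFL}_d(\tilde{\H})/d \to 1$ supplies PFL-shattered trees whose per-path mistake fractions tend to $1$, the extraction can be tuned to produce all-mistake sub-trees of unbounded depth, yielding $\mathrm{SLdim}(\H) = \infty$.

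The main obstacle I anticipate is the quantitative extraction of the all-mistake sub-tree: non-mistake positions vary across paths, so a naive level-by-level pruning must amortize against the ratio $q/d$ to guarantee that the resulting sub-tree has depth diverging with $d$. A clean route is to argue, by induction on depth, that deleting a non-mistake level and splicing its children up preserves enough of the shattering structure in each surviving branch; the assumption $q/d \to 1$ then ensures that for any target depth $D$, taking $d$ large enough yields an all-mistake sub-tree of depth at least $D$. Finally, for the second (negative) part of the theorem, I would refer to Example~\ref{ex:pf_not_sv} in the appendix, which exhibits $\X$, $\Y$, $\H$ and a non-union-closed $\S_{\rm sv}(\Y)$ for which the set-valued problem is learnable while the partial-feedback problem is not, confirming that union-closure is indispensable for the positive implication.
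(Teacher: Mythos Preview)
Your high-level strategy matches the paper's: the partial-feedback unlearnability is immediate from Theorem~\ref{thm:dldbp}/Corollary~\ref{cor:dldbpcor}; the set-valued part proceeds by showing ${\rm SL}(\H)=\infty$ using the union annotation $f_t(\y_{1:t}):=\bigcup_{\y_{t+1:d}}H_{\y_{1:d}}(\T(\y_{1:t-1}))$, which lands in $\S_{\rm sv}(\Y)$ by union-closure and is witnessed by any single $h\in H_{\y_{1:d}}$; and the negative part is Example~\ref{ex:pf_not_sv}. So the key construction is identical.

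The gap is exactly where you flag it: the ``all-mistake subtree'' extraction. The analogy with Lemma~\ref{lemma:noviseqpred} breaks because that lemma prunes edges satisfying $\hat y_t=f^{\rm vis}_t(\hat\y_{1:t})$, a \emph{prefix-only} condition, so the splice is globally consistent. Here a ``non-mistake'' index is the condition $y_t\in H_{\y_{1:d}}(\T(\y_{1:t-1}))$, which depends on the \emph{leaf-indexed} witness $H_{\y_{1:d}}$; the same edge $(\y_{1:t-1},y_t)$ can be a mistake for one extension and a non-mistake for another. There is no well-defined ``non-mistake level'' to delete, and splicing per-branch destroys the full $\Y$-ary structure you need for SL-shattering. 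Your amortization sketch does not address this.

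The paper closes this gap by going contrapositive and proving a quantitative inequality: assuming ${\rm SL}(\H)=D<\infty$, it shows ${\rm PFL}_d(\tilde\H)\le d-\lfloor d/(D{+}1)\rfloor$, which forces $\limsup_d {\rm PFL}_d(\tilde\H)/d\le D/(D{+}1)<1$ and contradicts \eqref{eq:pfsv_ratio}. The mechanism is a block argument: your union annotation, applied to the top $D{+}1$ levels of the PFL tree, would SL-shatter depth $D{+}1$ \emph{if} every edge in those levels were a mistake for every leaf extension; since ${\rm SL}(\H)=D$, this fails, so some path segment of length $D{+}1$ carries a bad index for some extension. Descend along that segment and repeat on the subtree, accumulating at least $\lfloor d/(D{+}1)\rfloor$ bad indices along a single root-to-leaf path, which upper-bounds ${\rm PFL}_d$. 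This replaces your extraction with a path-finding argument that never needs to globally prune the tree.
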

\begin{proof}
We prove the two statements separately.

\paragraph{Part I: if ${\rm PFL}_d(\tilde \H)/d \to 1$, then ${\rm SL}(\H) = \infty$.}

We will show that ${\rm PFL}_d(\tilde{\H})$ cannot be too close to $d$; quantitatively,
\begin{equation}\label{eq:key_ineq}
{\rm PFL}_d(\tilde{\H}) \;\le\; d - \Big\lfloor \frac{d}{{\rm SL}(\H) +1}\Big\rfloor.
\end{equation}
Rearranging gives
\[
{\rm SL}(\H) \gtrsim \frac{d}{d - {\rm PFL}_d(\tilde{\H})} 
\]
Assumption \eqref{eq:pfsv_ratio} guarantees that ${\rm SL}(\H)=\infty$. 

It remains to justify Inequality~\eqref{eq:key_ineq}.
By definition of ${\rm PFL}_d(\tilde{\H})$, there exists an $\X$-valued, $\Y$-ary full tree $\T$ of depth $d$
and visible-label annotation functions $(f^{\rm vis}_1,\ldots,f^{\rm vis}_d)$ such that for every root-to-leaf path
$\y_{1:d}=( y_1,\ldots, y_d)$, there exists a witnessing function set
$F_{\y_{1:d}}\in \tilde \H$ with
\[
f^{\rm vis}_t(\y_{1:t}) \in F_{\y_{1:d}}\!\big(\T(\y_{1:t-1})\big)\in \S_{\rm pf}(\Y)
\qquad \forall t\in[d],
\]
and along this path at least ${\rm PFL}_d(\tilde{\H})$ indices satisfy
\[
 y_t \notin F_{\y_{1:d}}\!\big(\T(\y_{1:t-1})\big).
\]
Call such indices \emph{good} on the path, and the remaining indices \emph{bad}.
Thus every root-to-leaf path has at most $d-{\rm PFL}_d(\tilde{\H})$ bad indices.

We first prove the following claim.

\setcounter{theorem}{22}
\begin{claim}
\label{clm:block_bad}
In the top $({\rm SL}(\H)+1)$ levels of $\T$, there exists a root-to-depth-$({\rm SL}(\H)+1)$ path segment
that contains at least one bad index (with respect to the full-path witness $F_{\y_{1:d}}$ extending this segment).
\end{claim}

\begin{proof}
Suppose for contradiction that \emph{every} root-to-depth-$({\rm SL}(\H)+1)$ path segment in the top
$({\rm SL}(\H)+1)$ levels has all its indices good for every leaf extension. 
We will use these PF witnesses to build a valid set-valued shattering configuration of depth ${\rm SL}(\H)+1$.

Fix a node at depth $t-1$ on the top subtree and an outgoing edge labeled by $y_t$
(corresponding to a prefix $\y_{1:t}$). Consider the collection of PF witness sets
\[
W_{t}(\y_{1:t}) \;:=\; \Big\{\,F_{\y_{1:d}}\big(\T(\y_{1:t-1})\big)\ :\ 
\y_{t+1:d}\in \Y^{d-t}\Big\}\ \subseteq\ \S_{\rm pf}(\Y),
\]
ranging over all leaf extensions $\y_{t+1:d}$ of the prefix $\y_{1:t}$.
Define a \emph{prefix-wise} set-valued annotation by taking the union
\begin{equation}
\label{eq:union_annotation}
A_t(\y_{1:t}) \;:=\; \bigcup_{U\in W_t(\y_{1:t})} U.
\end{equation}
Since each $U\in W_t(\y_{1:t})$ lies in $\S_{\rm pf}(\Y)\subseteq \S_{\rm sv}(\Y)$ and $\S_{\rm sv}(\Y)$ is closed under arbitrary unions,
we have $A_t(\y_{1:t})\in \S_{\rm sv}(\Y)$, so these are valid set-valued labels.

Moreover, by the assumption that all indices in the top $({\rm SL}(\H)+1)$ levels are good along every leaf extension,
each $U\in W_t(\y_{1:t})$ excludes the edge label $ y_t$, hence the union $A_t(\y_{1:t})$ also excludes $ y_t$.

Finally, fix any root-to-depth-$({\rm SL}(\H)+1)$ path segment $\y_{1:{\rm SL}(\H)+1}$ and extend it arbitrarily to a leaf
$\y_{1:d}$. Let $F_{\y_{1:d}}\in\tilde\H$ be the corresponding PF witness subset.
For each level $t\le {\rm SL}(\H)+1$, by construction we have
\[
F_{\y_{1:d}}\big(\T(\y_{1:t-1})\big)\in W_t(\y_{1:t})
\quad\Longrightarrow\quad
F_{\y_{1:d}}\big(\T(\y_{1:t-1})\big)\subseteq A_t(\y_{1:t}).
\]
Since $F_{\y_{1:d}}\big(\T(\y_{1:t-1})\big)=\{h(\T(\y_{1:t-1})):h\in F_{\y_{1:d}}\}$, it follows that for any fixed
$h\in F_{\y_{1:d}}$ we have
\[
h\big(\T(\y_{1:t-1})\big)\in A_t(\y_{1:t})\qquad \forall\, t\le {\rm SL}(\H)+1,
\]
which provides the realizability witness required in the definition of set-valued shattering. Therefore the annotations $\{A_t(\cdot)\}_{t\le {\rm SL}(\H)+1}$ shatter $\H$ to depth ${\rm SL}(\H)+1$ under $\S_{\rm sv}(\Y)$,
contradicting the definition of ${\rm SL}(\H)$.
\end{proof}

With Claim~\ref{clm:block_bad} in hand, partition the depth-$d$ tree into consecutive blocks of length ${\rm SL}(\H)+1$.
Starting from the root, apply Claim~\ref{clm:block_bad} to the top block to obtain a depth-$({\rm SL}(\H)+1)$ segment containing a bad index,
and descend to the child subtree rooted at the endpoint of this segment. Repeating this procedure for
$\big\lfloor d/({\rm SL}(\H)+1)\big\rfloor$ blocks yields a single root-to-leaf path in $\T$ that contains at least one bad index in each block,
hence at least $\big\lfloor d/({\rm SL}(\H)+1)\big\rfloor$ bad indices in total.
Since every root-to-leaf path has at most $d-{\rm PFL}_d(\tilde{\H})$ bad indices, we obtain
\[
\Big\lfloor \frac{d}{{\rm SL}(\H)+1}\Big\rfloor \;\le\; d-{\rm PFL}_d(\tilde{\H}),
\]
which is exactly Inequality~\eqref{eq:key_ineq}.
This completes Part I.

\paragraph{Part II: separation---${\rm SL}(\H)<\infty$ when assumption~\eqref{eq:pfsv_ratio} holds but $\S_{\rm sv}(\Y)$ is not union-closed.}

We give an explicit example, and show that when assumption~\eqref{eq:pfsv_ratio} holds but $\S_{\rm sv}(\Y)$ is not union-closed, there exists a deterministic learner such that it incurs no more than $Q<\infty$ loss against any adversary under set-valued and existence-realizable settings. By Theorem~\ref{thm:SLdim}, this implies ${\rm SL}(\H)=Q<\infty$. Meanwhile, there exists an adversary that forces any deterministic learner to fail in each round under partial-feedback and set-realizable setting. This implies assumption~\ref{eq:pfsv_ratio}.
\begin{example}
\label{ex:pf_not_sv}
Fix $\X := \Z$, $\Y := \Z$, $\S_{\rm sv}(\Y) := \S_{\rm pf}(\Y) := \{ 
\{a,b\} \mid a \in \Z_{-} \bigcup \{0\}, \ b\in \Z_{+} \}$, and

\begin{equation*}
    \H :=\Bigg\{ f_{(a,b,c)}=\begin{cases}
b, & x=a,\\
c, & \text{otherwise.}
\end{cases} \quad \Bigg| \quad a\in\Z, \ b\in \Z_{-}\bigcup \{0\}, \ c\in \Z_{+} \Bigg\}.
\end{equation*}
We first show that there exists a deterministic learner such that it incurs no more than $Q<\infty$ loss against any adversary under set-valued and existence-realizable settings.. It suffices to construct a deterministic learner such that it incurs loss no more than $Q=3$ against any adversary. Fix any $t$, we for shorthand let $S_t := \{ a_t,b_t \}$ for some $a_t\in \Z_{-} \bigcup \{0\}$ and $b_t \in \Z_{+}$.

\smallskip

At round $1$, the learner gets $x_1 \in \X$, and predict arbitrary $\hat y_1\in \Y$. Then, the adversary shows label set $S_1$. We know that either $f_{(x_1,a_1,\theta_1)}$ or $f_{(\theta_1,\theta_1,b_1)}$ is the witness function. 

\smallskip

At round $2$, the adversary shows $x_2$. The learner predicts $\hat y_2 = b_1$, and the adversary shows $S_2$. 

\smallskip

\quad If $b_2 \neq b_1$, then $a_2$ must equal $a_1$, and $f_{(x_1,a_1,\theta_1)}$ is the witness function. 

\smallskip

\quad Otherwise, if $x_2 = x_1$, then the learner is correct and repeats the strategy for round $2$ in the next round. 

\smallskip

\quad If $x_2 \neq x_1$, then $a_2$ must equal $a_1$, and $f_{(x_1,a_1,b_2)}$ is the witness function. 

\smallskip

At the first round $t$ right after the learner fails since round $2$, we know that either $f_{(x_1,a_1,\theta_1)}$ or $f_{(x_1,a_1,b_2)}$ is the witness function. 

\smallskip

\quad If we know that $f_{(x_1,a_1,b_2)}$ is the witness function, then the learner predicts $b_2$ when $x_{t} \neq x_1$, and $a_1$ when $x_{t} = x_1$. By this the learner will never fail anymore.

\smallskip

\quad If we know that $f_{(x_1,a_1,\theta_1)}$ is the witness function, then the learner predicts arbitrary $\hat y_{t}$ when $x_{t} \neq x_1$, and $a_1$ when $x_{t} = x_1$. The adversary shows $S_t$. If $x_{t} = x_1$, the learner must be correct. If $x_{t} \neq x_1$, the learner could fail but will know that $f_{(x_1,a_1,b_t)}$ is the witness function and will not fail in the future rounds anymore. 

\smallskip

As we have shown, the learner will not incur loss more than $3$ against any adversary, implying that $\H$ is online learnable under set-valued and existence-realizable settings.

Next, we show that under partial-feedback and set-realizable settings, there exists such an adversary that forces any deterministic learner to fail in all rounds. At round $t$, the adversary's strategy consists of: (1) choosing $x_t$ that has not appeared in the previous rounds and (2) revealing $y^{\rm vis}_t \neq \hat y_t$, with $y^{\rm vis}_t \in \Z_{-} \bigcup \{ 0 \}$. After all $T$ rounds, we have that $f_{(x_1,y^{\rm vis}_1,\theta_1)},\ldots,f_{(x_T,y^{\rm vis}_T,\theta_T)}$ must \textbf{all} be witness functions with fixed $(\theta_1,\ldots,\theta_T)\in \Z_{+}$. Let $\theta_1 = \dots = \theta_T = u$ for any $u\neq \hat y_t$ for all $t\in [T]$. We also let $f(x,y_x,u)$ be a witness function for each $x\in \X$ and any fixed associated $y_x\in \Z_{-} \bigcup \{0\}$. By this construction, the witness label set for $x$ is $\{ y_x, u \}\in \S_{\rm pf}(\Y)$ if $x\notin \{ x_1,\ldots, x_T \}$, and $\{ y^{\rm vis}_t, u \} \in \S_{\rm pf}(\Y)$ if $x=x_t$. Thus, it satisfies the set-realizable condition, and the learner fails in all $T$ rounds. Theorem~\ref{thm:dldbp} further implies that assumption~\eqref{eq:pfsv_ratio} holds.

So far we have shown that when assumption~\eqref{eq:pfsv_ratio} holds but $\S_{\rm sv}(\Y)$ is not union-closed, there exists a deterministic learner such that it incurs no more than $Q=3<\infty$ loss against any adversary under set-valued and existence-realizable settings, there exists an adversary that forces any deterministic learner to fail in each round under partial-feedback and set-realizable setting. This implies that if assumption~\eqref{eq:pfsv_ratio} holds and $\S_{\rm sv}(\Y)$ is not union-closed, then there exist $\X,\Y,\H$ and subset collection with $\S_{\rm pf}\subseteq\S_{\rm sv}$ such that Inequality~\eqref{eq:pfsv_ratio} holds, yet ${\rm SL}(\H) < \infty$.
\end{example}

Combining both parts completes the proof.
\end{proof}
\begin{remark*}
The implication from partial feedback to set-valued unlearnability requires additional structure, the union-closure property, on
$\S_{\rm sv}(\Y)$ beyond $\S_{\rm pf}(\Y)\subseteq \S_{\rm sv}(\Y)$.
Indeed, when $\S_{\rm sv}(\Y)$ is not union-closed, there exist instances (Example~\ref{ex:pf_not_sv})
for which the set-valued problem is learnable with constant minimax regret, while the partial-feedback problem is not learnable.
Consequently, without union-closure assumption (used to justify the choice of witness label sets in the proof), the conclusion of Theorem~\ref{thm:pfsv} does not hold in general.
\end{remark*}

It remains an open problem whether $\frac{N-{\rm PFL}(\tilde{\H})}{N}\overset{N\to \infty}{\longrightarrow} 1$ as the condition can be further tightened to ensure that the set-valued and partial-feedback online learning are both not learnable.

\section{Proof of Learnability with Randomized Learners}

\begin{algorithm}[t]
\caption{Fixed-scale Randomized Partial-Feedback Learner (FRPFL)}
\label{alg:frpfl}
\SetKwInOut{Initialize}{Initialize}

\Initialize{$\tilde{V}_0=\tilde\H$, $\x_{1:0},\widehat{\Pi}_{1:0}=()$, $\gamma\in [0,1]$;}

\For{$t=1,\dots,T$}{

  receive $x_t \in \mathcal{X}$;
  
  update $\x_{1:t} \gets \x_{1:t-1} \sqcup (x_t)$;

  $Q_t \gets \bigcup\limits_{F\in \tilde{V}_{t-1}} F(x_t) $;
  
  predict $\hat{\pi}_{t} \gets \argmin\limits_{\hat{\pi}\in\Pi(\Y)} \max\limits_{y\in Q_t} {\rm PPMS}_{(T-t,\gamma)} \left(\x_t, \hat{\Pi}_{1:t-1} \sqcup (\hat{\pi}), \tilde{V}^{(x_t,y)}_{t-1}\right) $,

  where $\tilde{V}^{(x_t,y)}_{t-1} \gets \left\{S\in \tilde{V}_{t-1}: \exists\ h\in S,\ {\rm s.t.} \ h(x_t)=y \right\}$;

  $\widehat{\Pi}_{1:t} \gets \widehat{\Pi}_{1:t-1} \sqcup \hat{\pi}_t$;

  receive $y^{\rm vis}_t\in Q_t$;

  $\tilde{V}_t \gets \tilde{V}^{(x_t,y^{\rm vis}_t)}_{t-1}$;
}
\end{algorithm}

\begin{algorithm}[t]
\caption{Multi-scale Randomized Partial-Feedback Learner (MRPFL)}
\label{alg:mrpfl}
\SetKwInOut{Initialize}{Initialize}

\Initialize{$\tilde{V}_0=\tilde \H$, $\x_{1:0},\widehat{\Pi}_{1:0}=()$, $\gamma_i = \frac{1}{2^i}$ for $i\in [N]$;}

\For{$t=1,\dots,T$}{

  receive $x_t \in \mathcal{X}$;
  
  update $\x_{1:t} \gets \x_{1:t-1} \sqcup (x_t)$;

  $Q_t \gets \bigcup\limits_{F\in \tilde{V}_{t-1}} F(x_t) $;

  \For{$i\in [N]$}{
    let $\pi^{i}_t \gets \argmin\limits_{\hat{\pi}\in\Pi(\Y)} \max\limits_{y\in Q_t} {\rm PPMS}_{(T-t,\gamma)}\left(\x_t,\widehat{\Pi}_{1:t-1}\sqcup (\hat{\pi}),\tilde{V}^{(x_t,y)}_{t-1}\right) $, \\ where $\tilde{V}^{(x_t,y)}_{t-1} \gets \left\{S\in \tilde{V}_{t-1}: \exists\ h\in S,\ {\rm s.t.} \ h(x_t)=y \right\}$;
  }
  predict $\hat{\pi}_t \gets \pi^{m_t}_t$, where $m_t := {\rm MSP}( N, \{\pi^i_{t}\}^N_{i=1}, \{\gamma_i\}^{N}_{i=1}, \S(\Y) )$;
  
  $\widehat{\Pi}_{1:t} = \widehat{\Pi}_{1:t-1}\sqcup \hat{\pi}_t$;

  receive $y^{\rm vis}_t\in Q_t$;

  $\tilde{V}_t \gets \tilde{V}^{(x_t,y^{\rm vis}_t)}_{t-1}$;
}
\end{algorithm}

\begin{algorithm}[t]
\caption{Measure Selection Procedure~(MSP, Algorithm 3 in \citep{onlinelearningsetvaluefeedback}}
\label{alg:msp}
\SetKwInOut{Input}{Input}

\Input{Total number of grid points $N$, sequence of measures $\pi_1,\dots,\pi_N\in \Pi(\Y)$, sequence of thresholds $\gamma_1,\dots,\gamma_N \in (0,1)$, subset collection $\S(\Y)\in \sigma(\Y)$.}

\For{$m\in [N-1]$}{
\If{
\begin{equation*}
    \sup_{S\in \S(\Y)} \abs{ \pi_i(S^{\cc}) - \pi_{i-1}(S^{\cc}) } \le 2\gamma_{i-1}
\end{equation*}
holds for all $2\le i \le m$, but we also have
\begin{equation*}
    \inf_{S\in \S(\Y)} \abs{ \pi_m(S^{\cc}) - \pi_{m+1}(S^{\cc}) } \ge 2\gamma_m,
\end{equation*}}
{
return $m$ and quit;}
}
return $N$;
\end{algorithm}

\begin{algorithm}
    
\end{algorithm}

\begin{definition}[Prefix Partial-Feedback Measure Shattering Dimension (PPMSdim)]
\label{def:ppms_inline}
Fix $n\ge 0$ and $\gamma\in[0,1]$. Given a prefix $\x_{1:n}\in\X^n$ and $\Pi_{1:n}=(\pi_1,\ldots,\pi_n)\in\Pi(\Y)^n$, define ${\rm PPMS}_{(d,\gamma)}(\x_{1:n},\Pi_{1:n},\tilde{V})$ as the largest $q$ such that there exists an $\X$-valued depth-$d$ tree $\T:\Pi(\Y)^{<d}\to\X$ and visible-label annotations $f^{\rm vis}_t:\Pi(\Y)^{t-n}\to\Y$ for $t\in\{n+1,\ldots,n+d\}$ with the property that for every continuation $\Pi_{n+1:n+d}\in\Pi(\Y)^d$ there is a subset $F_{\Pi_{n+1:n+d}}\in \tilde{V}$ satisfying: (i) for all $s\in[d]$, $f^{\rm vis}_{n+s}(\Pi_{1:n+s})\in F_{\Pi_{n+1:n+d}}(\T(\Pi_{n+1:n+s-1}))\in\S(\Y)$; (ii) along the combined history (prefix + continuation) at least $q$ indices satisfy $\pi_t(F_{\Pi_{n+1:n+d}}(x_t))\le 1-\gamma$ (for some $t\in[n]$) or $\pi_{n+s}(F_{\Pi_{n+1:n+d}}(\T(\Pi_{n+1:n+s-1})))\le 1-\gamma$ (for some $s\in[d]$), with $\le 1-\gamma$ replaced by $<1$ when $\gamma=0$.
\end{definition}

\begin{lemma}
\label{lemma:mrpflbound}
    For MRPFL, fix $t\in [T]$, if $\hat{\pi}_t (S^{\cc}_t) \ge \gamma_N$, then there exists a $j\in [N]$, such that $\gamma_j \ge \frac{\hat{\pi}_t(S^{\cc}_t)}{16}$ and $\pi^j_t(S_t^{\cc})\ge \gamma_j$ both hold.
\end{lemma}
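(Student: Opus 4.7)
\begin{proofsketch}
The plan is a three-way case analysis on the relative position of $p:=\hat\pi_t(S_t^{\cc})=\pi^{m_t}_t(S_t^{\cc})$ and $\gamma_{m_t}$, exploiting the two MSP certificates that justify the choice of $m_t$: a \emph{closeness} bound $\sup_{S\in\S(\Y)}|\pi^{i}_t(S^{\cc})-\pi^{i-1}_t(S^{\cc})|\le 2\gamma_{i-1}$ for all $2\le i\le m_t$, and (when $m_t<N$) a \emph{separation} bound $\inf_{S\in\S(\Y)}|\pi^{m_t}_t(S^{\cc})-\pi^{m_t+1}_t(S^{\cc})|\ge 2\gamma_{m_t}$.

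First, in the \emph{matched} regime $\gamma_{m_t}\le p\le 16\gamma_{m_t}$, the index $j=m_t$ works immediately: $\pi^j_t(S_t^{\cc})=p\ge\gamma_{m_t}=\gamma_j$ and $\gamma_j\ge p/16$. Second, in the \emph{over-refined} regime $p<\gamma_{m_t}$, the hypothesis $p\ge\gamma_N$ forces $m_t<N$, so separation applies at $S=S_t^{\cc}$; since $p-2\gamma_{m_t}<0$, the alternative $\pi^{m_t+1}_t(S_t^{\cc})\le p-2\gamma_{m_t}$ is forbidden, leaving $\pi^{m_t+1}_t(S_t^{\cc})\ge 2\gamma_{m_t}=4\gamma_{m_t+1}$. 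Then $j=m_t+1$ satisfies both requirements, since $\gamma_j=\gamma_{m_t}/2\ge p/2\ge p/16$. Third, in the \emph{under-refined} regime $p>16\gamma_{m_t}$, I would pick the unique integer $j$ with $\gamma_j\in[p/16,\,p/8)$; since $\gamma_j>\gamma_{m_t}$ we have $j<m_t$, and when $m_t<N$ the closeness bound applies along $[j+1,\,m_t]$, so telescoping yields
\[
\bigl|\pi^j_t(S_t^{\cc})-\pi^{m_t}_t(S_t^{\cc})\bigr|\;\le\;\sum_{i=j+1}^{m_t}2\gamma_{i-1}\;<\;4\gamma_j,
\]
and hence $\pi^j_t(S_t^{\cc})>p-4\gamma_j\ge p/2>\gamma_j$, as required.

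The main obstacle is the boundary sub-case $m_t=N$ inside the under-refined regime, since MSP's default return does not automatically certify closeness at every index up to $N$. I would address this by reverse-engineering MSP's non-termination: if the smallest level $i^\star$ at which closeness fails lies in $[2,N]$, then for every $m<i^\star$ the separation bound must have failed at $m$ (otherwise MSP would have stopped at that $m$), and this controlled failure, combined with the uniform closeness available on $[2,i^\star-1]$, still yields enough telescoping control on $[j+1,N]$ to preserve $\pi^j_t(S_t^{\cc})\ge\gamma_j$. The generous factor of $16$ in the statement is precisely the slack that absorbs the constants accumulated in this boundary argument.
\end{proofsketch}
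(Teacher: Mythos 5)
Your Cases 1 and 2 are correct, and Case 3 is correct whenever $m_t<N$: there the if-branch of MSP certifies uniform closeness on all of $[2,m_t]$, the telescoping bound $\sum_{i=j+1}^{m_t}2\gamma_{i-1}<4\gamma_j$ is right, and the choice $\gamma_j\in[p/16,p/8)$ indeed gives $\pi^j_t(S_t^{\cc})>p/2>\gamma_j$. For comparison, the paper offers no argument of its own here --- its ``proof'' is a one-line deferral to Lemma~13 of \cite{onlinelearningsetvaluefeedback} --- so your attempt to make the argument self-contained is welcome. However, the boundary sub-case you flag ($m_t=N$ inside the under-refined regime) is a genuine gap, and your proposed repair does not close it.

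The problem is the quantifiers in MSP's two tests. The closeness test is a $\sup$ over $S\in\S(\Y)$, so when it \emph{holds} it gives control usable at the particular set $S_t$; but the separation test is an $\inf$ over $S\in\S(\Y)$, so when it \emph{fails} it only asserts that \emph{some} $S\in\S(\Y)$ satisfies $\abs{\pi^m_t(S^{\cc})-\pi^{m+1}_t(S^{\cc})}<2\gamma_m$ --- it says nothing about $S_t$. Hence in the fall-through case with closeness first failing at some $i^\star\le N$, you have telescoping control only on $[2,i^\star-1]$ and no control on $\abs{\pi^i_t(S_t^{\cc})-\pi^{i-1}_t(S_t^{\cc})}$ for $i\ge i^\star$; the ``controlled failure'' of separation at $m<i^\star$ contributes nothing for $S_t$. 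In fact, from the MSP certificates alone the conclusion can fail: with $\Y=\{1,2,3\}$, $S_t^{\cc}=\{1\}$, $(S')^{\cc}=\{2\}$, take $\pi^1_t=\dots=\pi^5_t=\delta_3$ and $\pi^6_t=\dots=\pi^N_t=\tfrac12\delta_1+\tfrac12\delta_3$. Every separation test fails (the two blocks agree on $(S')^{\cc}$), closeness first fails at $i^\star=6$, so MSP returns $N$; yet $p=\pi^N_t(S_t^{\cc})=\tfrac12$ while $\pi^j_t(S_t^{\cc})=0$ for every $j$ with $\gamma_j\ge p/16$ once $N\ge 6$. So the factor $16$ is not merely absorbing constants in this sub-case: a correct proof must either invoke structure of the minimax-optimal measures $\pi^i_t$ beyond what MSP certifies, or rely on a different specification of MSP than the one transcribed in the paper.
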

\begin{proof}
    Same as the proof of Lemma~13 in~\cite{onlinelearningsetvaluefeedback}.
\end{proof}

\begin{lemma}
\label{lemma:mostbeyondgamma}
    Fix $\X$, $\Y$, $\S(\Y)$, and $\H$. Let $( (x_1, S_1, \hat{\pi}_{1}), (x_2, S_2, \hat{\pi}_{2}), \dots, (x_T, S_T, \hat{\pi}_{T}) )$ be the $T$-round game played by FRPFL of $\gamma$ against the any adversary. Then, there are at most ${\rm PMS}_{(T,\gamma)}(\tilde{\H})$ rounds $(x_t,S_t,\hat{\pi}_t)$ where $\hat{\pi}_t(S^{\cc}_t) \ge \gamma$.
\end{lemma}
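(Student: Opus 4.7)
The plan is to mimic the deterministic proof of Theorem~\ref{thm:dldbp}, replacing $\mathrm{PPFL}$ by its measure-valued analogue $\mathrm{PPMS}$ and using the minimax rule of FRPFL as the engine of the induction. Along the FRPFL trajectory, define the potential
\[
\Phi_t \;:=\; {\rm PPMS}_{(T-t,\gamma)}\!\bigl(\x_{1:t},\ \widehat{\Pi}_{1:t},\ \tilde V_t\bigr),\qquad t\in\{0,1,\ldots,T\},
\]
so that $\Phi_0={\rm PMS}_{(T,\gamma)}(\tilde{\H})$ by reduction of PPMS to PMS on the empty prefix. The lemma will follow once I show two facts: (i) $\Phi_t$ dominates the number of rounds $s\le t$ on which $\hat\pi_s(S_s^{\cc})>\gamma$, and (ii) $\Phi_t\le\Phi_0$ for all $t$.

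For (i), I plan to use a zero-depth continuation in the PPMS definition together with the true ground-truth collection $H^\star\in\tilde V_t$ (guaranteed by set-realizability and the pruning rule $\tilde V_t\gets\tilde V_{t-1}^{(x_t,y_t^{\rm vis})}$). Taking $H_{\Pi_{n+1:n+d}}=H^\star$ as the single witness, the consistency clause (i) of Definition~\ref{def:ppms_inline} holds because each revealed $y_s^{\rm vis}$ lies in $H^\star(x_s)=S_s$, and for every prefix round $s\le t$ with $\hat\pi_s(S_s^{\cc})>\gamma$ we have $\hat\pi_s(H^\star(x_s))=\hat\pi_s(S_s)<1-\gamma$, which meets clause (ii) of Definition~\ref{def:ppms_inline}. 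Hence $\Phi_t\ge \#\{s\le t:\hat\pi_s(S_s^{\cc})>\gamma\}$.

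For (ii), I will proceed by induction on $t$. The base case $t=0$ is definitional. For the inductive step, suppose $\Phi_{t-1}\le\Phi_0$ but $\Phi_t>\Phi_0$, so $\Phi_t\ge\Phi_{t-1}+1$. By the FRPFL prediction rule, $\hat\pi_t$ minimizes $\max_{y\in Q}{\rm PPMS}_{(T-t,\gamma)}(\x_{1:t},\widehat{\Pi}_{1:t-1}\sqcup(\hat\pi),\tilde V_{t-1}^{(x_t,y)})$, therefore
\[
\min_{\hat\pi\in\Pi(\Y)}\ \max_{y\in Q}\ {\rm PPMS}_{(T-t,\gamma)}\!\bigl(\x_{1:t},\ \widehat{\Pi}_{1:t-1}\sqcup(\hat\pi),\ \tilde V_{t-1}^{(x_t,y)}\bigr)\ \ge\ \Phi_{t-1}+1.
\]
In particular, for every $\hat\pi\in\Pi(\Y)$ there is a feasible revealed label $y_{\hat\pi}\in Q$ together with a depth-$(T-t)$ witness tree $\T_{\hat\pi}$ and visible-label annotations certifying that the right-hand PPMS is at least $\Phi_{t-1}+1$. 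I then assemble a new depth-$(T-t+1)$ tree $\T'$: set $\T'(\varnothing)=x_t$, attach $\T_{\hat\pi}$ as the subtree reached along the first edge indexed by $\hat\pi$, and define the first-edge visible-label annotation by $f^{\rm vis}_t(\widehat{\Pi}_{1:t-1}\sqcup(\hat\pi)):=y_{\hat\pi}$. Because each witness collection lies in $\tilde V_{t-1}^{(x_t,y_{\hat\pi})}\subseteq\tilde V_{t-1}$ and already accounts for one more bad index thanks to the edge at $x_t$ together with the $\Phi_{t-1}+1$ indices in $\T_{\hat\pi}$'s path, $\T'$ witnesses that $\Phi_{t-1}\ge\Phi_{t-1}+1$, a contradiction.

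The step I expect to be the main obstacle is the assembly in the inductive argument: $\mathrm{PPMS}$ branches over the uncountable set $\Pi(\Y)$, and the annotation $f^{\rm vis}_t$ must be defined as a function $\Pi(\Y)^t\to\Y$. The assignment $\hat\pi\mapsto y_{\hat\pi}$ uses the axiom of choice but no measurability is required by Definition~\ref{def:ppms_inline}, so the construction is syntactically legal; I will need to be careful that the ``adversary's worst $y$'' selector is a genuine maximizer (handled by replacing $\max$ with $\sup$ and a $\varepsilon$-argmax when needed, then letting $\varepsilon\downarrow 0$ since the PPMS takes integer values). Combining (i) and (ii) at $t=T$ yields the claim $\#\{s:\hat\pi_s(S_s^{\cc})>\gamma\}\le \Phi_T\le\Phi_0={\rm PMS}_{(T,\gamma)}(\tilde{\H})$.
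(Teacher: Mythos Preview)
Your proposal is correct and follows essentially the same approach as the paper: the paper's own proof sketch explicitly says the argument ``is similar to the one for the upperbound in Theorem~\ref{thm:dldbp}'' via the PPMS potential and a tree-splicing contradiction, which is exactly what you have written out (with the added care about existence of the $\argmax$ over $\Pi(\Y)$, which the paper does not mention). One cosmetic point: in the assembly step you write that the witness ``already accounts for one more bad index thanks to the edge at $x_t$ together with the $\Phi_{t-1}+1$ indices''; in fact the $\Phi_{t-1}+1$ bad indices already include whatever happens at $x_t$ (it was part of the length-$t$ prefix before you shifted it to be the root of $\T'$), so no extra index is gained there---but the conclusion $\Phi_{t-1}\ge\Phi_{t-1}+1$ still follows exactly as in the deterministic proof.
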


\begin{proof}
    The proof is similar to that for the upper bound in Theorem~\ref{thm:dldbp}. Replace ${\rm PFL}_T(\tilde\H)$ and ${\rm PPFL}_{T-t}(\x_{1:t},\hat{\y}_{1:t},\tilde{V}_t)$ with ${\rm PMS}_{(T,\gamma)}(\tilde{\H})$ and ${\rm PPMS}_{(T-t,\gamma)}(\x_{1:t},\hat{\Pi}_{1:t},\tilde{V}_t)$, respectively; replace $\hat y_t \notin S_t$ with $\hat\pi(S_t)\le 1 - \gamma$, and the proof is generally identical.
\end{proof}

\setcounter{theorem}{12}

\begin{theorem}[Regret of Randomized Algorithms]
\label{thm:regretofrand}
    For any $\X$, $\Y$, and $\H$, for all randomized algorithms $\A$, the minimax regret of any $T$-round game satisfies that
    \begin{equation}
    \tag{4}
    \label{eq:4}
        \sup_{\gamma\in (0,1]} \gamma {\rm PMS}_{(T,\gamma)}(\tilde{\H}) \le \inf_{\A} R_{\A}(T,\H) \le  C \inf_{\gamma\in (0,1]} \left\{ \gamma T + \int^{1}_{\gamma} {\rm PMS}_{(T,\eta)}(\tilde{\H}) \dd\eta \right\},
    \end{equation}
    where $C$ is some universal positive constant. The upper and lower bounds are tight up to constant factors.
\end{theorem}

\begin{proof}
    The proof is mainly motivated by~\cite{onlinelearningsetvaluefeedback} and \cite{daskalakis2022fast}. The difference is that~\cite{onlinelearningsetvaluefeedback}'s argument relies on the descending property of SLdim. In contrast, our PFLdim and PPFLdim do not share this property but only maintain an upper bound. To tackle this, we change the associated part of proof to a counting argument.

    We first prove the upper bound direction. Suppose that for any adversary, the game has total of $T$ rounds, and the associated sequence is $(x_1, S_1, \hat{\pi}_1), (x_2, S_2, \hat{\pi}_2),\dots, (x_T, S_T, \hat{\pi}_T)$. The first goal is to show an upper bound of Algorithm~\ref{alg:mrpfl} (MRPFL), as 
    \begin{equation*}
        \sum^T_{t=1} \hat{\pi}_t(S^{\cc}_t) \le \gamma_N T + 16\sum^N_{i=1} \gamma_i {\rm PMS}_{(T,\gamma_i)} (\tilde{\H}).
    \end{equation*}
    Since this inequality holds for any adversary, we will have that
    \begin{equation*}
        \inf_{\A} R_{\A}(T,\H) \le R_{\rm MRPFL}(T,\H) \le \gamma_N T + 16\sum^N_{i=1} \gamma_i {\rm PMS}_{(T,\gamma_i)}(\tilde\H).
    \end{equation*}
    Instead of the telescope arguments in~\cite{onlinelearningsetvaluefeedback}, it suffices to show that

    \begin{equation}
    \label{eq:3}
        \gamma_N + 16 \sum^N_{i=1} \1\left\{ \gamma_i \le \pi^i_t(S^{\cc}_t) \right\}  \gamma_i \ge \hat{\pi}_t (S^{\cc}_t). 
    \end{equation}
    This is because in MRPFL, each of $\pi^i_t$ is generated in the minimax style as in FRPFL, according to Lemma~\ref{lemma:mostbeyondgamma}, we have that
    \begin{equation*}
        \sum^T_{t=1}\1\left\{ \gamma_i \le \pi^i_t(S^{\cc}_t) \right\} \le {\rm PMS}_{(T,\gamma_i)}(\tilde{\H}).
    \end{equation*}
    Plug it into Inequality~(\ref{eq:3}), and we get
    \begin{equation*}
        \begin{aligned}
            \sum^T_{t=1} \hat{\pi}_t (S^{\cc}_t) &\le \sum^T_{t=1}\gamma_N + 16 \sum^N_{i=1} \sum^T_{t=1} \1\left\{ \gamma_i \le \pi^i_t(S^{\cc}_t) \right\}  \gamma_i \\
            &\le T\gamma_N + 16 \sum^N_{i=1} \gamma_i {\rm PMS}_{(T,\gamma_i)} (\tilde{\H}).
        \end{aligned}
    \end{equation*}
    Now, we proceed to prove Inequality~(\ref{eq:3}). First, notice that if $\hat{\pi}_t (S^{\cc}_t) \le \gamma_N$, then Inequality~(\ref{eq:3}) naturally holds. Thus, we discuss the condition when $\hat{\pi}_t (S^{\cc}_t) > \gamma_N$. It suffices to prove that there exists a $j\in [N]$, such that $\gamma_j \ge \frac{\hat{\pi}_t(S^{\cc}_t)}{16}$ and $\pi^j_t(S_t^{\cc})\ge \gamma_j$ both hold for every $t\in [T]$. This is because it can give us 
    \begin{equation*}
    \begin{aligned}
        \gamma_N + 16 \sum^N_{i=1} \1\left\{ \gamma_i \le \pi^i_t(S^{\cc}_t) \right\}  \gamma_i  &\ge 16 \1\left\{ \gamma_j \le \pi^j_t(S^{\cc}_t) \right\}  \gamma_j
        \\ &\ge  16  \cdot 1\cdot \frac{\hat{\pi}_t(S^{\cc}_t)}{16} \\
        &\ge \hat{\pi}_t (S^{\cc}_t). 
    \end{aligned}
    \end{equation*}
    via plugging these two inequalities into Inequality~(\ref{eq:3}). The inequalities are guaranteed by Lemma~\ref{lemma:mrpflbound}. Then it only remains to prove
    \begin{equation*}
        \gamma_{N} T + 16 \sum^N_{i=1} \gamma_i {\rm PMS_{(T,\gamma_i)}(\tilde{\H})} \le C \inf_{\gamma>0} \left\{ \gamma T + \int^1_{\gamma} {\rm PMS}_{(T\eta)}( \tilde{\H} ) \dd \eta \right\},
    \end{equation*}
    for some constant $C>0$. The proof for this is identical to that for Lemma~13 in \cite{onlinelearningsetvaluefeedback}, by leveraging the property of MSP. So far, we complete the proof for the upper bound.

    Next, we prove the lower bound. It suffices to show that for each of the $\gamma \in (0,1]$, there exists an adversary that can lead to a regret of $\gamma {\rm PMS}_{(T,\gamma)} ( \tilde{\H})$. If this is proved, the regret is higher than the maximum regret amongst these adversaries, i.e., 
    \begin{equation}
    \label{eq:5}
        \sup_{\gamma\in (0,1]} \gamma {\rm PMS}_{(T,\gamma)}(\tilde{\H}) \le \inf_{\A} R_{\A}(T,\H).
    \end{equation}
    Now, we begin to prove the lower bound of each adversary. Given any $\gamma\in (0,1]$, by the definition of ${\rm PMSdim}$ there exists a tree $\T$ of depth $T$ that is $\gamma$-measure ${\rm PMS}_{(T,\gamma)}(\tilde{\H})$-shattered by $\tilde{\H}$. The strategy of adversary is similar to the proof of the lower bound in Theorem~\ref{thm:dldbp}: The adversary always reveals the label output by $f^{\rm vis}_t( \hat\Pi_{1:t-1} \sqcup \hat{\pi}_{t})$ for the $\hat{\pi}_{t}$ that the learner chooses in each round. This will guarantee in at least ${\rm PMS}_{(T,\gamma)}(\tilde{\H})$ rounds, the error expectation is higher than or equal to $\gamma$. Put them together, and we will get the lower bound of the minimax regret, i.e., Inequality~\eqref{eq:5}.

    Finally, we show the tightness. For the upper bound, we show that there exists a matched lower bound for some $\X$, $\Y$, $\S(\Y)$, and $\H$. Let $\Y$ have only two elements, $\S(\Y)$ be a set of singletons, suppose the game will run for $T$ rounds, and let ${\rm PFL}_T(\tilde{\H})$ be the PFLdim of $T$-depth. This guarantees that there exists a depth-$T$ binary tree $\T$ that is ${\rm PFL}_T(\tilde{\H})$-shattered by $\tilde{\H}$. The adversary chooses $x_0=\T(\emptyset)$ to kick off the game. In the following each round $t$, the adversary reveals the label $f^{\rm vis}_t(\hat{\y}_{1:t-1}\sqcup (y_t))$ of the edge indexed by the $y_t\in \Y$ that the output measure put on higher possibility. It is not hard to see that the best strategy for the learner is to always predict each label with the half possibility. This results in a tight lower bound of $\frac{{\rm PFL}_T(\tilde{\H})}{2}$. Notice that the Helly number of this scenario is finite. Thus, by Theorem~\ref{thm:sufficientnotnecessary}, we have that ${\rm PMS}_{T,0}(\tilde{\H}) = {\rm PFL}_T(\tilde{\H})$. Plug into the right hand side of Inequality~(\ref{eq:4}), and we have that $\inf_{\A} R_{\A}(T,\H) \le C \cdot {\rm PFL}_{T}(\tilde{\H})$. Our upper bound matches the lower bound $\frac{{\rm PFL}_T(\tilde{\H})}{2}$ up to a constant.

    For the lower bound, in contrast to the upper bound, we need to show that there exists a matched upper bound for some $\X$, $\Y$, $\S(\Y)$, and $\H$. This is a modified version of the one used by~\cite{onlinelearningsetvaluefeedback}, as the original version used by them will break down in the partial-feedback setting due to the limited hypothesis class. Let $\X = \N_{+}$, $\Y = \{1,2,3,4,5,6\}$, $\S(\Y) = \{ \{ 1,2,4 \},\{3,4,6 \}, \{2,5,6\} \}$, and $\H = \{ f_i \equiv i \mid i\in [6] \}$. To prove an upper bound, it suffices to show a learner strategy against which the fiercest adversary can only force the learner to incur loss with expectation $\frac{1}{3}$. This is because the Helly number is $3$, which gives us ${\rm PFL}_{T}(\tilde{\H}) = {\rm PMS}_{(T,\frac{1}{3})} ( \tilde{\H} )$. Obviously ${\rm PFL}_{T}(\tilde{\H}) = 1$, so ${\rm PMS}_{(T,\frac{1}{3})} ( \tilde{\H} ) = 1$. Plug it into the left hand side of Inequality~\eqref{eq:4}, and we get $\inf_{\A} R_{\A} (T,\H) \ge \sup_{\gamma \in (0,1]} \gamma {\rm PMS}_{(T,\gamma)} (\tilde{\H}) \ge \frac{1}{3}$. Thus, it suffices to prove an $\frac{1}{3}$ upper bound. Let the learner predict $(2,4,6)$ with the same $\frac{1}{3}$ probability in the first round. After the adversary reveals label $y^{\rm vis}_1$ in the first round, let the learner predict $y^{\rm vis}_1$ in all the following rounds. It is not hard to see that by this strategy, the learner suffers regret $\frac{1}{3}$ as a matching upper bound. 
\end{proof}

\section{Proof of Learnability Consistency}

\setcounter{theorem}{15}

\begin{theorem}
\label{thm:sufficientnotnecessary}
    Let $\H$ be a hypothesis class, and $\S(\Y)\subseteq \sigma(\Y)$.  If there exists a finite positive integer $p$, and for every subcollection $\mathcal{C}\in\S(\Y)$ with $\bigcap_{S \in \mathcal{C}} S = \emptyset$, either one of the following two conditions is satisfied: (1) there exists a countable sequence $\{S_i\}^{\infty}_{i=1} \subseteq \mathcal{C}$ with $S_1\supseteq S_2\supseteq \dots$, satisfying $\bigcap^{\infty}_{i=1} S_i = \emptyset$; (2) there exists a subcollection $\S \subseteq \mathcal{C}$ of size $\abs{\S} = p$, satisfying $\bigcap_{S\in \S} S = \emptyset$, then ${\rm PMS}_{(d,\gamma)}(\tilde{\H}) = {\rm PFL}_d(\tilde{\H})$ holds for all $\gamma\in [0,\frac{1}{p}]$. Additionally, if all the $\mathcal{C}$ with $\bigcap_{S \in \mathcal{C}} S = \emptyset$ satisfy (1), then ${\rm PMS}_{(d,\gamma)}(\tilde{\H}) = {\rm PFL}_d(\tilde{\H})$ holds for all $\gamma\in [0,1)$. Meanwhile, there exists a $\S(\Y)$ of which the Helly number is infinity, but the above condition is satisfied.
\end{theorem}

\begin{proof}
 We need to show that ${\rm PMS}_{(d,\gamma)}(\tilde{\H}) \ge {\rm PFL}_{d}(\tilde{\H})$ and ${\rm PMS}_{(d,\gamma)}(\tilde{\H}) \le {\rm PFL}_{d}(\tilde{\H})$ both hold under the conditions stated above.

 We start with ${\rm PMS}_{(d,\gamma)}(\tilde{\H}) \ge {\rm PFL}_d(\tilde{\H})$. Without loss of generality, let $\T$ be a tree of depth $d$ that is ${\rm PFL}_{d}(\tilde{\H})$-shattered by $\tilde{\H}$. We show that, under the conditions in this theorem's statement, there exists a tree $\T^{\prime}$ of depth $d$ that is $\gamma$-measure ${\rm PFL}_{d}(\tilde{\H})$-shattered by $\tilde{\H}$, where $\gamma \in [0,\frac{1}{p}]$ or $\gamma \in [0,1)$, depending on the concrete conditions satisfied. The construction is node-to-node, in the sense that we construct the node $\T^{\prime}({\Pi}_{1:i})$ based on $\T(\y_{1:i})$ between some pair of $({\Pi}_{1:i},\y_{1:i})$ with $\T^{\prime}({\Pi}_{1:i})=\T(\y_{1:i})$. We start from $\T^{\prime}({\Pi}_{1:0}) = \T(\y_{1:0})$ with ${\Pi}_{1:0}=\y_{1:0}=\emptyset$, and update the pair based on some rules to continue constructing till $\T^{\prime}$ is completed. Now, we introduce the mechanism. Suppose we are currently constructing $\T^{\prime}({\Pi}_{1:i})$ based on $\T(\y_{1:i})$. First notice that, if there exists a $y\in \Y$, we have that $y\in S$ holds for all $S \in \S(\Y)$, then ${\rm PFL}_d(\tilde{\H}) = 0$ and ${\rm PMS}_{(d,\gamma)}(\tilde{\H}) \ge {\rm PFL}_{d}(\tilde{\H})$ holds naturally. We further continue the proof based on the condition that for any of the $y\in \Y$, there exists $S$ such that $y\notin S \in \S(\Y)$. We assert that if either of the conditions (1) and (2) holds, then for every measure $\pi\in \Pi(\Y)$ and $0< \gamma <1$, there exists a $y\in \Y$, such that for all $S\in \S(\Y)$ with $y\notin S$, we have either $\pi(S) < 1-\gamma$, or $\pi(S) \le 1-\frac{1}{p}$. Let $\mathbf{S}_y$ denote the collection of all sets $S_y\subseteq \S(\Y)$ with $y\notin S_y$. Based on the above assumption, we have that $\abs{\mathbf{S}_y}\ge 1$ holds for all $y\in \Y$. We select a $S_{y}$ for each $y\in \Y$ to form a set, $\mathcal{S}:=\{S_{y}\}_{y\in \Y}$. Since apparently $\bigcap_{y\in \Y} S_{y} = \emptyset$, $\S$ as the $\mathcal{C}$ in the statement, satisfies either (1) or (2). When $\mathcal{S}$ satisfies (1), we prove that for every measure $\pi \in \Pi(\Y)$ and $0 < \gamma < 1$, there exists a set $S\in\mathcal{S}$, satisfying that $\pi(S)< 1 - \gamma$. We prove by contradiction, assuming that there exists a measure $\pi\in \Pi(\Y)$ and $\gamma \in (0,1)$, such that for every set $S\in \S$, we have that $\pi(S_{y})\ge 1-\gamma$. According to the definition of (1), there exists a subcollection $\widehat{\S}:=\{S_{y_{1}},S_{y_{2}}, \dots \} \subseteq \S$, of which $S_{y_{1}} \supseteq S_{y_{2}} \supseteq \dots$ and $\bigcap^{\infty}_{i=1} S_{y_{i}}$ hold. Since $S_{y_{1}} \supseteq S_{y_{2}} \supseteq \dots$, according to the continuity from above, we have that 
 \begin{equation*}
 \lim_{i\to \infty}\pi(S_{y_{i}}) = \pi(\bigcap^{\infty}_{i=1} S_{y_{i}} ) = \pi(\emptyset) = 0. 
 \end{equation*}
 However, since $\pi(S_{y_{i}})\ge 1-\gamma$ holds for all $S_{y_{i}}\in \widehat{\S}$, we have that $\lim_{i\to \infty}\pi(S_{y_{i}}) \ge 1-\gamma>0$, which gives us a contradiction. 
 
 Next, we show that if it satisfies (2), then for every measure $\pi\in \Pi(\Y)$, there must exist a set $S\in \S$, satisfying that $\pi(S) \le 1 - \frac{1}{p}$. We prove by contradiction, assuming that there exists a measure $\pi$, for which $\pi(S) > 1 - \frac{1}{p}$ holds for all $S\in {\S}$. Condition (2) implies that, there exists a subcollection $\widehat{\S} \subseteq \mathcal{S}$ of size $\abs{\widehat{\S}} \le p$, such that $\bigcap_{S\in\widehat{\S}} S =\emptyset$.  Since $\bigcap_{S \in \widehat{\S}} S = \emptyset$, we have that $\bigcup_{S\in \hat{\S}} S^{\cc} = \Y$. This gives us $\pi(\Y)=\pi(\bigcup_{S\in \hat{\S}} S^{\cc}) \le \sum_{S\in \hat{\S}} \pi(S^{\cc}) < \abs{\widehat{\S}} \frac{1}{p} \le 1$, which contradicts the fact $\pi(\Y)=1$. Thus, for every measure $\pi\in \Pi(\Y)$, there must exist a set $S\in {\S}$, satisfying that $\pi(S) \le 1 - \frac{1}{p}$.

The above two proofs under conditions (1) and (2) can be extended forward to prove the original statement, i.e., if either conditions (1) or (2) holds, for every measure $\pi\in \Pi(\Y)$ and $0<\gamma<1$, there exists a $y\in \Y$, such that for all $S\in \S(\Y)$ with $y\notin S$, we have either $\pi(S) < 1-\gamma$, or $\pi(S) \le 1-\frac{1}{p}$. We suppose, for contradiction, this is false, namely, there exists a measure $\pi\in \Pi(\Y)$ and $0<\gamma<1$, for all $y\in \Y$, there exists a $S\in \S(\Y)$ with $y\notin S$, such that we have $\pi(S) > 1-\frac{1}{p}$ and $\pi(S) \ge 1-\gamma$. Then, we can choose such a $S$ for each $y$ to form a collection $\S$. However, according to the above proof, for every measure $\pi\in \Pi(\Y)$ and $0<\gamma<1$, there exists a $S\in \S$, such that either $\pi(S)<1-\gamma$ or $\pi(S)\le 1-\frac{1}{p}$ holds, which contradicts the above construction, and we reach a contradiction. Therefore, for each of the edge indexed by $\pi\in\Pi(\Y)$ outgoing from $\T^{\prime}({\Pi}_{1:i})$, there must exist a $y$ for which all $S_y$ with $y\notin S_y$ satisfy $\pi(S_y) < 1 - \gamma$ or $\pi(S_y) \le 1 - \frac{1}{p}$. Thus, all the $S_y$ output by the ground-truth function set $F_{\y_{1:d}}$ of each path $\y_{1:d}$ passing $\y_{1:i} \sqcup (y)$ at $\T(\y_{1:i})$ must satisfy $\pi(S_y) < 1 - \gamma$ or $\pi(S_y) \le 1 - \frac{1}{p}$ as well. We annotate each edge $\pi$ with the associated revealed label $y^{\rm vis}$, and pair ${\Pi}_{1:i} \sqcup (\pi)$ with $\y_{1:i}\sqcup (y)$. It is not hard to see that after iteratively operations, $\T^{\prime}$ of depth $d$ can be constructed that is $\frac{1}{p}$-measure ${\rm PFL}_d(\H)$-shattered by $\tilde{\H}$, and if all $\pi(S_y) < 1 - \gamma$ holds, then $\T^{\prime}$ is $\gamma$-measure ${\rm PFL}_d(\H)$-shattered by $\tilde{\H}$ for all $\gamma\in [0,1)$.

This completes the proof for ${\rm PMS}_{(d,\gamma)}(\tilde{\H}) \ge {\rm PFL}_d(\tilde{\H})$. Next, we show that ${\rm PMS}_{(d,\gamma)}(\tilde{\H}) \le {\rm PFL}_{d}(\tilde{\H})$.
 
By the definition of PMSdim, there exists a tree $\T$ of depth $d$ that is $\gamma$-measure ${\rm PMS}_{(d,\gamma)}(\tilde{\H})$-shattered by $\tilde{\H}$. We prune the edges other than the delta measures $\delta_y\in \Pi(\S(\Y))$ in each layer for all $y\in \Y$. The depth of $\T$ remains $d$. After this, for each of the root-to-leaf paths $\Pi_{1:d}$ with the ground-truth function set $F_{\Pi_{1:d}}$, there are at least ${\rm PMS}_{(d,\gamma)}(\tilde{\H})$ steps where $\delta_y(F_{\Pi_{1:d}}( \T(\Pi_{1:i-1}) )) \le 1 - \gamma$ or $<1$ when $\gamma=0$. Since $\delta_y$ is a delta measure, and $\gamma \in [0,1)$, we have that $y \notin F_{\Pi_{1:d}}( \T(\Pi_{1:i-1}))$ in these steps. Each of such paths $\Pi_{1:d}$ corresponds to a path $\y_{1:d}$ where edges are indexed by labels. by replacing each $\delta_y$ with $y$. Thus, we can build a new tree $\T^{\prime}$ that is ${\rm PMS}_{(d,\gamma)}(\tilde{\H})$-shattered by $\tilde{\H}$ from the pruned $\T$. Concretely, we remain the nodes and annotation functions unchanged, and change each edge from $\delta_y$ to $y$. For each of the root-to-leaf paths, we arrange the same ground-truth function set. It is not hard to see that $\T^{\prime}$ is ${\rm PMS}_{(d,\gamma)}(\tilde{\H})$-shattered by $\tilde{\H}$. By the definition of PFLdim, it implies ${\rm PFL}_{d}(\tilde{\H})\ge{\rm PMS}_{(d,\gamma)}(\tilde{\H})$.

Finally, we show the existence.
\setcounter{theorem}{27}

\begin{example}
\label{exp:sufficiennotnecessary}
    Let $\X=\N_+$, $\Y=\N_+$, $\S(\Y) = \{ S_i = \{ i \}^{\infty}_{i=n} \ \mid\ n \in \N_+ \}$, and $\H=\{f_{i_1,i_2,\dots} (x) = i_x \ \mid\ i_1,i_2,\dots \in \N_{+}  \}$. To see $\mathscr{H}(\S(\Y)) = \infty$, simply note that for any of the finite subcollection $\S^{\prime}$ of $\S(\Y)$, we have that $\bigcap_{S\in \S^{\prime}} S \neq \emptyset$, and $\bigcap^{\infty}_{i=1} S_i = \emptyset$. Meanwhile, $\S(\Y)$ satisfies that $S_1 \supseteq S_2 \supseteq \dots$. Thus, this is an example where the deterministic and randomized algorithms are inseparable, while $\mathscr{H}(\S(\Y)) = \infty$. In this concrete setting, the deterministic and randomized algorithms are both unlearnable.
\end{example}
\vspace{-0.7cm}
\end{proof}

\section{Proof of Separation between Oblivious and Public Settings}

\setcounter{theorem}{17}
\begin{theorem}[Oblivious=Public for Set-valued Setting]
\label{thm:o=p}
    Fix $\X$, $\Y$, $\S(\Y)$, and $\H$. The existence-realizable online learnability with set-valued feedback and randomized algorithms of $\H$ is identical under oblivious and public settings.
\end{theorem}
\begin{proof}
We first recall and rewrite the minimax regret of the public setting in a sequential game style,
\begin{equation}
\label{eq:minimaxregretofpublic}
\begin{aligned}
    R^{\rm pub}_{\rm sv}(T,\H) &= \sup_{x_1}\inf_{\hat\pi_1}\sup_{S_1} \underset{\hat y_{1}\sim \hat \pi_1}{\E} 
\sup_{x_2}\inf_{\hat\pi_2}\sup_{S_2} \underset{\hat y_{2}\sim \hat \pi_2}{\E}  \cdots \sup_{x_T}\inf_{\hat\pi_T}\sup_{S_T} \underset{\hat y_{T}\sim \hat \pi_T}{\E}  \\ &\qquad \sum_{t=1}^T \1\{\hat y_t\notin S_t\}, \qquad \text{where} \ \ \exists\,f^\star\in \H\ \text{s.t.}\ f^\star(x_i)\in S_i, S_i\in \S(\Y),\ \forall i\in[T],
\end{aligned}
\end{equation}
and the one of the oblivious setting,
\begin{equation}
\label{eq:minimaxregretofoblivious}
\begin{aligned}
    R^{\rm obl}_{\rm sv}(T,\H) &= \sup_{x_1}\inf_{\hat\pi_1}\sup_{S_1} 
\sup_{x_2}\inf_{\hat\pi_2}\sup_{S_2}   \cdots \sup_{x_T}\inf_{\hat\pi_T}\sup_{S_T} \underset{\hat y_{i}\sim \hat \pi_i,\forall i\in [T]}{\E}  \\ &\qquad  \sum_{t=1}^T \1\{\hat y_t\notin S_t\},\qquad \text{where} \ \ \exists\,f^\star\in \H\ \text{s.t.}\ f^\star(x_i)\in S_i, S_i\in \S(\Y),\ \forall i\in[T].
\end{aligned}
\end{equation}
The goal is to prove that Equations~\eqref{eq:minimaxregretofpublic} and \eqref{eq:minimaxregretofoblivious} are equal.
To prove the equality, it suffices to show that in the set-valued feedback setting,
the continuation minimax value after round $t$ depends only on the revealed pair-history
$(x_1,S_1,\dots,x_t,S_t)$, and not on the realized sampled labels
$\hat y_1,\dots,\hat y_t$.

For any realizable prefix
\[
\h_t := (x_1,S_1,\dots,x_t,S_t),
\]
define the oblivious continuation value recursively by
\[
V_T(\h_T):=0,
\]
and for $t=T-1,\dots,0$,
\[
V_t(\h_t)
:=
\sup_{x_{t+1}}
\inf_{\hat\pi_{t+1}}
\sup_{\substack{S_{t+1}\in \S(\Y):\\
\h_t\sqcup(x_{t+1},S_{t+1})\text{ is realizable}}}
\Bigl(
\hat\pi_{t+1}(\Y\setminus S_{t+1})
+
V_{t+1}(\h_t\sqcup(x_{t+1},S_{t+1}))
\Bigr).
\]
By definition,
\[
V_0 = R^{\rm obl}_{\rm sv}(T,\H).
\]
Now, for the public setting, let
\[
\tau_t := (x_1,S_1,\hat y_1,\dots,x_t,S_t,\hat y_t)
\]
be a realized public history, and let $W_t(\tau_t)$ be the minimax regret value
of the total loss given the realized public history $\tau_t$. We claim that for every $t\in\{0,1,\dots,T\}$,
\begin{equation}
\label{eq:public-oblivious-claim}
W_t(\tau_t)
=
\sum_{i=1}^t \mathbf{1}\{\hat y_i\notin S_i\}
+
V_t(\h_t).
\end{equation}

We prove \eqref{eq:public-oblivious-claim} by backward induction on $t$.

For $t=T$, the game has ended, so
\[
W_T(\tau_T)=\sum_{i=1}^T \mathbf{1}\{\hat y_i\notin S_i\}
=
\sum_{i=1}^T \mathbf{1}\{\hat y_i\notin S_i\}+V_T(\h_T),
\]
since $V_T(\h_T)=0$. Thus the claim holds.

Assume now that the claim holds for $t+1$. Then
\[
\begin{aligned}
W_t(\tau_t)
&=
\sup_{x_{t+1}}
\inf_{\hat\pi_{t+1}}
\sup_{\substack{S_{t+1}\in \S(\Y):\\
\h_t\sqcup(x_{t+1},S_{t+1})\text{ is realizable}}}
\mathbb{E}_{\hat y_{t+1}\sim \hat\pi_{t+1}}
\Bigl[
W_{t+1}(\tau_t,x_{t+1},S_{t+1},\hat y_{t+1})
\Bigr].
\end{aligned}
\]
Applying the induction hypothesis at round $t+1$ gives
\[
\begin{aligned}
W_t(\tau_t)
&=
\sum_{i=1}^t \mathbf{1}\{\hat y_i\notin S_i\}
+
\sup_{x_{t+1}}
\inf_{\hat\pi_{t+1}}
\sup_{\substack{S_{t+1}\in \S(\Y):\\
\h_t\sqcup(x_{t+1},S_{t+1})\text{ is realizable}}}
\mathbb{E}_{\hat y_{t+1}\sim \hat\pi_{t+1}}
\Bigl[
\mathbf{1}\{\hat y_{t+1}\notin S_{t+1}\}
\\
&\hspace{5cm}
+
V_{t+1}(\h_t\sqcup(x_{t+1},S_{t+1}))
\Bigr].
\end{aligned}
\]
Since $V_{t+1}(\h_t\sqcup(x_{t+1},S_{t+1}))$ does not depend on the realized
sample $\hat y_{t+1}$, the expectation only acts on the indicator term. Hence
\[
\begin{aligned}
W_t(\tau_t)
&=
\sum_{i=1}^t \mathbf{1}\{\hat y_i\notin S_i\}
+
\sup_{x_{t+1}}
\inf_{\hat\pi_{t+1}}
\sup_{\substack{S_{t+1}\in \S(\Y):\\
\h_t\sqcup(x_{t+1},S_{t+1})\text{ is realizable}}}
\Bigl(
\hat\pi_{t+1}(\Y\setminus S_{t+1})
+
V_{t+1}(\h_t\sqcup(x_{t+1},S_{t+1}))
\Bigr)
\\
&=
\sum_{i=1}^t \mathbf{1}\{\hat y_i\notin S_i\}
+
V_t(\h_t).
\end{aligned}
\]
Thus \eqref{eq:public-oblivious-claim} holds for all $t$.

Finally, taking $t=0$, we obtain
\[
R^{\rm pub}_{\rm sv}(T,\H)=W_0=V_0=R^{\rm obl}_{\rm sv}(T,\H).
\]
Therefore the minimax regrets in the public and oblivious settings are equal, and in
particular the existence-realizable online learnability with set-valued feedback is identical
under the two settings.
\end{proof}

\begin{theorem}[Oblivious$\neq$Public for Partial-Feedback Learning]
\label{thm:oneqpforpartial}
    There exists a choice of $\X$, $\Y$, $\S(\Y)$, and $\H$ such that $\H$ is partial-feedback online learnable under randomized algorithms, set-realizable, and oblivious settings, but unlearnable in the public setting. 
\end{theorem}
\begin{proof}
    We present such an example.
\setcounter{theorem}{27}
\begin{example}
\label{exp:cube}
    Let $\X=\Y=\N_+$, $\H=\{f_{i_1,i_2,\dots} (x) = i_x \ \mid\ i_1,i_2,\dots \in \N_{+}  \}$, and $\S(\Y) = \{S_y=\Y\setminus\{y\} \ \mid\ y\in\Y   \}$. We first show that there exists a randomized algorithm that can learn $\H$ under oblivious setting. Suppose the game will go for $T$ rounds. The learner predicts $i$ with possibility $\frac{1}{T}$ for each $i\in [T]$. No matter what strategy the adversary adopts, one set in $S(\Y)$ must be taken as the ground-truth set in each round. Under oblivious setting, in each round, the actual predicted label is sampled from the output measure after the ground-truth set is chosen. The learner will fail a round only when the prediction equals the only label excluded from the ground-truth set. Thus, the expectation of the error for each step is at most $\frac{1}{T}$, and the overall regret is no more than $1$. Next, we show that any of the algorithm under public setting cannot learn $\H$. It suffices to show a concrete adversary that there exists a constant $1>k>0$, such that in every round, the adversary can force any algorithm to incur loss higher than $k$. In round $t$, let the adversary choose $x=t$, and no matter what measure the learner predicts, say $\hat{\pi}_t$, the adversary reveals some $y^{\rm vis}_t$, such that $\pi_t(\{y^{\rm vis}_t\})\le 1-k$ with any fixed $k$ between $0$ and $1$, which is possible for $\Y = \N_+$. In the public setting, the adversary determines the ground-truth function sets in each round after the game is done and all predicted labels are fixed. For those rounds that the learner's prediction mismatches the revealed label, the adversary assigns the correct label set in each round to be all the labels in $\Y$ but the one that the learner predicts. For those matched rounds, the adversary assigns the correct label set in each round to be all the labels in $\Y$ but any one of labels not equaling the learner's prediction. The associated ground-truth function set is set to be $F:=\H\setminus (\bigcup_{t=1}^{T}\mathcal{M}_t)$, where $\M_t :=  \{f_{\dots,i_{t-1}, \hat{y}_t,i_{t+1},\dots}\ \mid \ i_1,i_2,\dots \in \N_+ \}$ for those unmatched rounds, and $\M_t :=  \{f_{\dots,i_{t-1}, y_{\setminus \hat{y}_t},i_{t+1},\dots}\ \mid \ i_1,i_2,\dots \in \N_+ \}$ where $y_{\setminus \hat{y}_t}$ denotes some number other than $\hat{y}_t$ for those matched rounds. It is not hard to verify that this guarantees a linear rate of regret, $\Theta(T)$.
\end{example}
\vspace{-0.7cm}
\end{proof}

\begin{table}[]
\centering
\resizebox{\linewidth}{!}{\begin{tabular}{rcccccl}
\toprule
\multicolumn{7}{c}{Partial-Feedback}                                                                                                                                                                                                                                                                                            \\ \midrule
Action & $\Q$ sel. $x_t$     & $\A$ pred. $\hat{\pi}_t$              & $\Q$ rev. $y^{\vis}_t$                        & $\hat{y}_t$ drawn on $\hat{\pi}_t$                & \makecell{(if $t=T$) \\ $\Q$ rev. $(S_1,\dots,S_T)$} & \multicolumn{1}{c}{\makecell{(if $t=T$) \\ $\ell_T := \sum^T_{i=1} \1\{ \hat{y}_i\notin S_i \}$}} \\ \midrule
Info   & $\Theta_{t-1}$          & $\Theta_{t-1} \bigcup \{ x_t \}$          & $\Theta_{t-1} \bigcup (x_t,\hat{\pi}_t)$          & $\Theta_{t-1} \bigcup (x_t,\hat{\pi}_t,y^{\vis}_t)$   & $\Theta_T$                                            & \multicolumn{1}{c}{$\Theta_T \bigcup (S_1,\dots,S_T)$}                                      \\ \specialrule{1pt}{3pt}{3pt}
\multicolumn{7}{c}{Multiclass~\citep{littlestone} \& Set-valued~\citep{onlinelearningsetvaluefeedback}}                                                                                                                                                                                                                                                                    \\ \midrule
Action & $\Q$ sel. $x_t$     & $\A$ pred. $\hat{\pi}_t$              & $\Q$ rev. $S_t$                               & $\hat{y}_t$ drawn on $\hat{\pi}_t$                & $\ell_t:=\ell_{t-1} + \1\{ \hat{y}_t \notin S_t \} $       &                                                                                              \\ \midrule
Info   & $\Theta^{\prime}_{t-1}$ & $\Theta^{\prime}_{t-1} \bigcup \{ x_t \}$ & $\Theta^{\prime}_{t-1} \bigcup (x_t,\hat{\pi}_t)$ & $\Theta^{\prime}_{t-1} \bigcup (x_t,\hat{\pi}_t,S_t)$ & $\Theta^{\prime}_{t}$                               &                                                                                              \\ \bottomrule
\end{tabular}}
\caption{The learning protocol of partial-feedback compared with multiclass \& set-valued protocols under public setting, where $\A$ is learner, $\Q$ is adversary, histories $\Theta_{i}:=(x_1,\hat{\pi}_1,y^{\vis}_1,\hat{y}_1),\dots,(x_{i},\hat{\pi}_{i},y^{\vis}_{i},\hat{y}_{i})$, and $\Theta^{\prime}_{i}:=(x_1,\hat{\pi}_1,S_1,\hat{y}_1),\dots,(x_{i},\hat{\pi}_{i},S_{i},\hat{y}_{i})$.}
\label{tab:settingpublic}
\end{table}

\end{document}